\documentclass[twoside]{article}

\usepackage[accepted]{aistats2023}
\let\capsection\section
\renewcommand{\section}[1]{\capsection{\uppercase{#1}}}

\usepackage{graphicx}

\usepackage[utf8]{inputenc} 
\usepackage[T1]{fontenc}    
\usepackage{hyperref}       
\usepackage{url}            
\usepackage{booktabs}       
\usepackage{amsfonts}       
\usepackage{nicefrac}       
\usepackage{microtype}      
\usepackage{bm}
\usepackage{amsfonts}
\usepackage{placeins} 
\usepackage[usenames,dvipsnames,svgnames,x11names]{xcolor}
\usepackage{multirow}
\usepackage[disable]{todonotes}

\usepackage{ marvosym }  
\usepackage{subcaption}

\usepackage{siunitx}
\usepackage{xspace}

\usepackage[
backend=bibtex,
style=authoryear,
mincitenames=1,maxcitenames=2,
minbibnames=2,maxbibnames=5,
sorting=nyvt, 
doi=false,isbn=false,url=false,eprint=false,
uniquelist=false,dashed=false,terseinits=true,
giveninits=true]{biblatex}
%
\addbibresource{../bibfile.bib}
\newcommand{\citep}{\parencite}
\newcommand{\citet}{\textcite}

\newcommand{\anonymized}[1]{[\textsc{anonymized for peer review}]}

\renewcommand{\vec}{\boldsymbol}%
\newcommand{\mat}{\boldsymbol}%
\newcommand{\Proba}{\mathbb{P}}%
\newcommand{\proba}[1]{\Proba\left (#1\right )}
\newcommand{\Exp}{\mathbb{E}}%
\renewcommand{\Re}{\mathbb{R}}%
%
\newcommand{\Diag}{\operatorname{Diag}}
\newcommand{\iTrans}{^{-\top}}%
\newcommand{\Trans}{^{\top}}%
\newcommand{\T}{^{\top}}%
\newcommand{\inv}[1]{#1^{-\!1}}%
\renewcommand{\O}{\mathcal{O}}%
\newcommand{\N}{\mathcal{N}}%
\newcommand{\GP}{\mathcal{GP}}%
\newcommand{\norm}[1]{\|#1\|}%
\newcommand{\abs}[1]{|#1|}%
\newcommand{\smallabs}{\abs}%
\renewcommand{\det}[1]{\operatorname{det}\left[#1\right]}
\newcommand{\given}{\operatorname{|}}%

\newcommand{\acgp}{ACGP\xspace}%
\newcommand{\cglb}{CGLB\xspace}%
\newcommand{\openblas}{\textsc{OpenBLAS}\xspace}%
\newcommand{\chol}{\operatorname{chol}}%
\newcommand{\A}{\kschg{p(\vec y_{\mathcal{A}})}}
\newcommand{\B}{\kschg{p(\vec y_{\mathcal{B}} \given \vec y_{\mathcal{A}})}}

\newcommand{\maxN}{N_{\max}}%
\newcommand{\blocksize}{m}%

\newcommand{\tzero}{s}%
\newcommand{\res}{\vec r} 

\newcommand{\lBoundDet}{\log\sigma^2}
\renewcommand{\L}{\mathcal{L}}%
\providecommand{\U}{}
\renewcommand{\U}{\mathcal{U}}%
\newcommand{\noisef}[1]{\sigma^2}%
\newcommand{\noisefm}[1]{\sigma^2\mat I}
\newcommand{\q}[1]{\vec y_{:#1}\Trans\inv{\mat K_{:#1,:#1}}\vec y_{:#1}}


\newcommand{\matQ}[1][j]{\mat Q_{\tzero+1:#1}}%
\newcommand{\estX}[1][t]{\mat X_{\tzero+1:#1}}
\newcommand{\remX}[1][t+1]{\mat X_{#1:N}} 
\newcommand{\avgcov}[1][t]{\rho_{#1}}%
\newcommand{\postk}[3]{k_{#3}(#1, #2)}%
\newcommand{\postkt}[2]{\postk{#1}{#2}{\tzero}}%
\newcommand{\lowermean}[1][t]{\mu_{#1}}%
\newcommand{\loglowermean}[1][t]{\log \lowermean[#1]}%
\newcommand{\GPmeanX}[2][\tzero]{m_{#1}(#2)}%
\newcommand{\GPmean}[2][\tzero]{\GPmeanX[#1]{\vec x_{#2}}}%
\newcommand{\GPvar}[2][\tzero]{\postk{\vec x_{#2}}{\vec x_{#2}}{#1}+\noisef{\vec x_{#2}}}

\newcommand{\Abs}[1]{\left|#1\right|}
\newcommand{\sign}{\operatorname{sign}}%
\newcommand{\cov}{\operatorname{cov}}%
\newcommand{\dx}[1]{\ \mathrm{d}#1}%
\newcommand{\stack}[2]{\begin{bmatrix}
		#1 \\ #2
\end{bmatrix}}%
\newcommand{\concat}[2]{\begin{bmatrix}
		#1 & #2
\end{bmatrix}}%


\newcommand{\vf}{\bm{f}}

\newcommand{\vj}{\bm{j}}

\newcommand{\vm}{\bm{m}}

\newcommand{\vx}{\bm{x}}
\newcommand{\vy}{\bm{y}}

\newcommand{\mA}{\bm{A}}

\newcommand{\mK}{\bm{K}}

\newcommand{\mX}{\bm{X}}

\newcommand{\Kff}{\mK_\text{ff}}%
\newcommand{\I}{\mat{I}}%
\newcommand{\sI}{\sigma^2\I}%
\providecommand{\C}{}
\renewcommand{\C}{\bm{\Sigma}}%

\usepackage{colonequals}
\newcommand{\ce}{\colonequals}%
%

%
\newcommand{\METRO}{\texttt{metro}}%
\newcommand{\PM}{\texttt{pm25}}%
\newcommand{\PROTEIN}{\texttt{protein}}%
\newcommand{\KINFORTYK}{\texttt{kin40k}}%
\newcommand{\PUMADYN}{\texttt{pumadyn}}%
\newcommand{\BIKE}{\texttt{bike}}
\newcommand{\ELEVATORS}{\texttt{elevators}}
\newcommand{\POLETELECOMM}{\texttt{poletelecomm}}

%
%
%
%
%
%
%
%

\newcommand{\latin}[1]{\emph{#1}}
\newcommand{\ie}{\latin{i.e.}\xspace}%
%
%

\usepackage{algorithm}
\usepackage{algpseudocode}
\definecolor{newcode}{RGB}{220,220,220}
\newcommand{\augmentedStatement}[1]{\colorbox{newcode}{{#1}}}%
\newcommand{\AState}[1]{\State \augmentedStatement{#1}}
\algrenewcommand{\algorithmiccomment}[1]{\hfill {$\sslash$ \scriptsize #1}}
\algrenewcommand\alglinenumber[1]{\tiny #1}

\usepackage{stmaryrd} 
\newcommand{\eqcomment}[1]{\quad\sslash\text{{\small\emph{#1}}}}%
\newcommand{\leqcomment}[1]{\\&\eqcomment{#1}\notag}

\newcommand{\reqcomment}[1]{}%

\newcommand{\otherX}{\overline{\mat X}}%

\usepackage{pgfplots}
\pgfplotsset{compat=newest}

\pgfplotsset{
	every axis legend/.append style =
	{
		cells = { anchor = east },
		draw  = none
	},
}
\makeatletter
\pgfplotsset{
	range frame/.style={
		tick align = outside,
		axis line style={opacity=0},
		after end axis/.code={
			\draw ({rel axis cs:0,0}-|{axis cs:\pgfplots@data@xmin,0}) -- ({rel axis cs:0,0}-|{axis cs:\pgfplots@data@xmax,0});
			\draw ({rel axis cs:0,0}|-{axis cs:0,\pgfplots@data@ymin}) -- ({rel axis cs:0,0}|-{axis cs:0,\pgfplots@data@ymax});
		}
	}
}
\makeatother

\pgfkeys{/pgfplots/mytuftestyle/.style={
		semithick,
		tick style={major tick length=4pt,semithick,black},
		separate axis lines,
		axis x line*=bottom,
		axis x line shift=5pt,
		xlabel shift=0pt,
		axis y line*=left,
		tick align = outside,
		axis y line shift=5pt,
		ylabel shift=0pt}}

\usepgfplotslibrary{external}
\tikzsetexternalprefix{tikz_out/}
\tikzexternalize[mode=list and make]
\usetikzlibrary{plotmarks}
\usepgfplotslibrary{groupplots}
\pgfplotsset{plot coordinates/math parser=false}
\pgfkeys{/pgfplots/mystyle/.style={
		semithick,
		tick style={major tick length=4pt,semithick,gray},
		xtick align = inside,
		ytick align = inside,
		xlabel near ticks,
		ylabel near ticks,
}}
\pgfkeys{/pgfplots/resultfigstyle/.style={
		semithick,
		tick style={major tick length=4pt,semithick,gray},
		xtick align = inside,
		xticklabel style = {align=center},
		ytick align = inside,
		xlabel near ticks,
		ylabel near ticks,
		every axis x label/.style={
			at={(ticklabel cs:-0.05, -20)},anchor=near ticklabel
		},
		every axis y label/.style={
			at={(ticklabel cs:1, -25)},anchor=near ticklabel
		},
		every tick label/.append style={font=\tiny}
}}

\newlength\figureheight
\newlength\figurewidth
\newlength\figheight
\newlength\figwidth

%
%
%

\usepackage{mathtools}
\usepackage{amssymb}
\usepackage{amsthm}
\usepackage[capitalise,noabbrev]{cleveref} 
\newtheorem{theorem}{Theorem}

\newtheorem{lemma}[theorem]{Lemma}

\newtheorem{remark}[theorem]{Remark}

\newtheorem{assumption}[theorem]{Assumption}

\usepackage{fixme}                      
\fxsetup{final, english, marginclue, footnote}
\FXRegisterAuthor{ref}{anref}{\textsf{Ref.}}

\FXRegisterAuthor{ks}{ankr}{\color{Orange}Kristoffer}

%
%
%

%
\newcommand{\kschg}[1]{{\color{Orange}#1}}%
\renewcommand{\kschg}[1]{#1}%
\newcommand{\sbchg}[1]{{\color{Blue}#1}}%
\renewcommand{\sbchg}[1]{#1}%

\newcommand{\revm}[1]{{#1}} 

\newcommand{\kscmt}[2][]{\todo[color=YellowGreen, inline, #1]{Kristoffer: #2}}%

\newcommand{\move}[1]{}

\newcommand{\goldstandard}{\itshape}

\newcommand{\web}[2][this UCI page]{Available at \href{#2}{#1}.}

\definecolor{dark-red}{rgb}{0.4,0.15,0.15}
\definecolor{dark-blue}{rgb}{0.15,0.15,0.4}
\definecolor{medium-blue}{rgb}{0,0,0.5}
\definecolor{light-grey}{rgb}{0.98,0.98,0.98}
\colorlet{light-blue}{blue!50!cyan}
\colorlet{dark-cyan}{blue!30!cyan}
\colorlet{light-red}{red!50!purple}
\hypersetup{
    colorlinks,
    linkcolor={},
    citecolor={light-blue},
    urlcolor={dark-cyan},
    unicode=true,
}

\usepackage{afterpage}
%
%




\usepackage{soul}


\renewcommand{\U}{\mathcal{U}}
\renewcommand{\L}{\mathcal{L}}
\renewcommand{\C}{\bm{\Sigma}}
\renewcommand{\O}{\mathcal{O}}

\renewcommand{\det}[1]{\operatorname{det}\left[#1\right]}  
\begin{document}
	\runningauthor{Bartels, Stensbo-Smidt, Moreno-Mu\~noz, Boomsma, Frellsen, Hauberg}
	\twocolumn[
	
	\aistatstitle{Adaptive Cholesky Gaussian Processes}
	
	\aistatsauthor{Simon Bartels \And Kristoffer Stensbo-Smidt \And Pablo Moreno-Mu\~noz}
	\aistatsaddress{University of Copenhagen \And Technical University of Denmark \And Technical University of Denmark}
	\aistatsauthor{Wouter Boomsma \And Jes Frellsen \And S\o{}ren Hauberg}
	\aistatsaddress{University of Copenhagen \And Technical University of Denmark \And Technical University of Denmark}
	]
	\tikzset{external/force remake=false}
	\tikzexternaldisable
	%
	\setlength{\figwidth}{.5\textwidth}%
	\setlength{\figheight}{.25\textheight}%
	\definecolor{color1}{rgb}{1,0.6,0.2}%
	\definecolor{color0}{rgb}{0,0.4717,0.4604}%
		\renewcommand{\vec}{\boldsymbol}%

\begin{abstract}
	We present a method to approximate Gaussian process regression models for large datasets by considering only a subset of the data. 
	Our approach is novel in that the size of the subset is selected on the fly during exact inference with little computational overhead.
	%
	%
%
		From an empirical observation that the log-marginal likelihood often exhibits a linear trend once a sufficient subset of a dataset has been observed, we conclude that many large datasets contain redundant information that only slightly affects the posterior.
		Based on this, we
		provide probabilistic bounds on the full model evidence 
		that can identify such subsets.
		%
		%
		%
		Remarkably, these bounds are largely composed of terms that appear in intermediate steps
	  of the standard Cholesky decomposition, allowing us to
		modify the algorithm
		to adaptively stop
		the decomposition once enough data have been observed.
\end{abstract}

		\section{Introduction}
\begin{figure}
	\centering
	\includegraphics[width=1.015\columnwidth]{./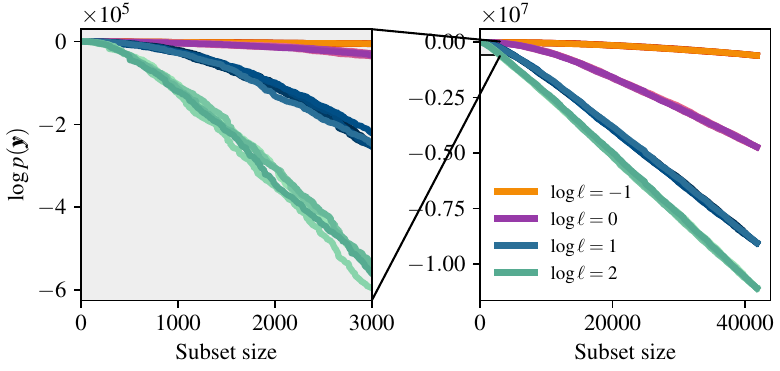}
	\caption{
		The figure shows the log-marginal likelihood as a function of the size of the training set for five random permutations of the \PM{} dataset. The different colors correspond to different Gaussian process models, using the squared exponential kernel with length scale $\ell$.
		Depending on the model (but little on the permutation), the log-likelihood starts to exhibit a linear trend after processing a certain amount of inputs.
		More examples can be found in \cref{app:evidence}.
}
	\label{fig:evidence}
\end{figure}

The key computational challenge in Gaussian process regression is to evaluate
the log-marginal likelihood of the $N$ observed data points, which is known to have
cubic complexity \citep{rasmussenwilliams}.
It has been observed \citep{Chalupka2013comparison} that the random-subset-of-data approximation can be a hard-to-beat baseline for approximate Gaussian process inference.
However, the question of how to choose the size of the subset is non-trivial to answer.
Here we make an attempt.

We first make an empirical observation when studying the behavior of the log-marginal likelihood with increasing number of observations.
\cref{fig:evidence}
show this progression for a variety of models. We elaborate on this figure in \cref{subsec:intuition},
but for now note that after a certain number of observations,
determined by model and dataset,
the log-marginal likelihood starts to progress with a linear trend.
This suggest that we may leverage this near-linearity to estimate the log-marginal likelihood of the full dataset after having seen only a subset of the data.
However, as the point of linearity differs between models and datasets, this point cannot be set in advance but must be estimated on-the-fly. 

In this paper,
we investigate three main questions, namely
1) how to detect the near linear trend when processing datapoints sequentially, 2) when it is safe to assume that this trend will continue
, and 3) how to implement an efficient stopping strategy, that is, without too much overhead to the exact computation.
We approach these questions from a (frequentist)
probabilistic numerics perspective \citep{hennigRSPA2015}.
By treating the dataset as a collection of independent and identically distributed random variables,
we provide expected upper and lower bounds on the log-marginal likelihood, which become tight when the above-mentioned linear trend arises.
These bounds can be evaluated with little computational overhead by leveraging intermediate computations performed
by the Cholesky decomposition that is commonly used for evaluating the log-marginal
likelihood.
We refer to our method as \emph{Adaptive Cholesky Gaussian Process} (\acgp{}).
Our approach has a complexity of $\O(M^3)$, where $M$ is the processed subset-size, inducing an overhead of $\O(M)$ to the Cholesky decomposition.
The main difference to previous work is that our algorithm does \emph{not necessarily} look at the whole dataset, which makes it particularly useful in settings where the dataset is so large that even linear-time approximations are not tractable.
When a dataset contains a large amount of redundant data, \acgp allows the inference procedure to stop early, saving precious compute---especially when the kernel function is expensive to evaluate.
%
		%
		\section{Background}
\label{sec:background}
We use a \textsc{python}-inspired index notation, abbreviating for example $[y_1, \ldots, y_{n-1}]\T$ as $\vy_{:n}$; observe that the indexing starts at 1.
With $\Diag$ we define the operator 
that sets all off-diagonal entries of a matrix to $0$.

\subsection{Gaussian Process Regression}
We start by briefly reviewing Gaussian process (GP) regression models and how they are trained (see \citet[Chapter 2 and 5.4]{rasmussenwilliams}).
We consider the training dataset $\mathcal{D} = \{\vx_n, y_n\}^N_{n=1}$ with inputs $\vx_n \in \mathbb{R}^{D}$ and outputs $y_n \in \mathbb{R}$.
The inputs are collected in the matrix $\mX = [\vx_1, \vx_2, \ldots, \vx_N]\T \in \mathbb{R}^{N\times D}$. 
%
%
A GP $f \sim \GP(m(\vx), k(\vx, \vx'))$ is a collection of random variables defined in terms of a mean function, $m(\vx)$, and a covariance function or \emph{kernel}, $k(\vx, \vx') = \cov(f(\vx), f(\vx'))$, such that any finite amount of random variables has a Gaussian distribution. 
%
Hence, the prior over $\vf\ce f(\mX)$ is $\N(\vf; m(\mat X), \Kff)$,
where we have used the shorthand notation $\Kff = k(\mX, \mX)$. 
Without loss of generality, we assume a zero-mean prior, $m(\cdot)\ce 0$.
We will consider the observations $\vy$ as being noise-corrupted versions of the function values $\vf$, and we shall parameterize this corruption through the likelihood function $p(\vy \given \vf)$, which for regression tasks is typically assumed to be Gaussian, $p(\vy\given\vf) = \N(\vf, \sI)$.
For such a model, the posterior over test inputs $\mat X_*$ can be computed in closed-form: $p(\vf_*\given\vy)= \N(\vm_* , \mat S_*)$, where
\begin{align*}
              \vm_* &= k(\mat X_*, \mat X)\mK^{-1}\vy \quad \text{ and } \\
           \mat S_* &= k(\mat X_*, \mat X_*) - k(\mat X_*, \mat X)\mK^{-1}k(\mat X, \mat X_*)
\end{align*}
with $\mK\ce \Kff + \sI$.
%
By marginalizing over the function values of the likelihood distribution, we obtain the marginal likelihood, $p(\vy)= \int p(\vy\given\vf)p(\vf) d\vf$,
the de facto metric
for comparing the performance of models in the Bayesian framework.
While this integral is not tractable in general, it does have a closed-form solution for Gaussian process regression. Given the GP prior, $p(\vf) = \N(\mathbf{0}, \Kff)$, and the Gaussian likelihood, the log-marginal likelihood distribution can be found to be
\begin{align}
	\label{eq:log_marginal}
  \log p(\vy) = -\frac{1}{2}\left(\log \det{2\pi\mK} + \vy\T\mK^{-1}\vy \right)\, .
\end{align}
Evaluating this expressions costs $\O(N^3)$ operations.

\subsection{Background on the Cholesky decomposition}
\label{sec:cholesky}
Inverting covariance matrices such as $\mK$ is a slow and numerically unstable procedure.
Therefore, in practice, one typically leverages the Cholesky decomposition of the covariance matrices to compute the inverses.
The Cholesky decomposition of a symmetric and positive definite matrix $\mat K$ is the unique, lower\footnote{Equivalently, one can define $\mat L$ to be upper triangular such that $\mat K=\mat L\Trans \mat L$.} triangular matrix $\mat L$ such that $\mat K=\mat L\mat L\Trans$ \citep[Theorem 4.2.7]{golub2013matrix4}.
The advantage of having such a decomposition is that inversion with triangular matrices amounts to Gaussian elimination.
There are different ways to compute $\mat L$.
The Cholesky of a $1\times 1$ matrix is the square root of the scalar.
For larger matrices,
\begin{align}
	\label{eq:cholesky}
	\chol[\mat K]=
	\begin{bmatrix}\chol[\mat K_{:\tzero,:\tzero}] & \mat 0\\
		\mat T & \chol\left[\mat K_{\tzero:,\tzero:}-\mat T\mat T\Trans\right]
	\end{bmatrix},
\end{align}
where $\mat T\ce \mat K_{\tzero:,:\tzero}{\chol[\mat K_{:\tzero,:\tzero}]}\iTrans$ and $\tzero$ is any integer between $1$ and the size of $\mat K$.
Hence, extending a given Cholesky to a larger matrix requires three steps:
\begin{enumerate}
	\setlength\itemsep{0em}
	\item solve the linear equation system $\mat T$,
	\item apply the downdate $\mat K_{\tzero:,\tzero:}-\mat T\mat T\Trans$ and
	\item compute the Cholesky of the down-dated matrix.
\end{enumerate}
An important observation is that $\mat K_{\tzero:,\tzero:}-\mat T\mat T\Trans$ is the posterior covariance matrix $\mat S_*+\sigma^2\mat I$ when considering $\mat X_{\tzero:}$ as test points.
We will make use of this observation in \cref{sec:bound_realization}.
The log-determinant of $\mat K$ can be obtained from the Cholesky using $\log\det{\mat K}=2\sum_{n=1}^{N}\log \mat L_{nn}$.
A similar recursive relationship exists between the quadratic form $\vec y\Trans \inv{\mat K}\vec y$ and $\inv{\mat L}\vec y$ (see appendix, \cref{eq:recursive_LES}).

\subsection{Related work}
Much work has gone into tractable approximations to the log-marginal likelihood. 
Arguably, the most popular approximation methods for GPs are inducing point methods \citep{quionero2005unifying,snelson2006sparse,titsias2009variational,hensman2013gaussian,hensman2017variational,shi2020sparse,artemev2021cglb}, where the dataset is approximated through a set of pseudo-data points (inducing points), summarizing information from nearby data.
Other approaches involve building approximations to $\mat K$ \citep{Fine2001Lowrank,Rahimi2008RandomFeatures,LazaroGredilla2010SparseSpectrum,Harbrecht2012pivotedCholesky,wilson2015kernel,Rudi2017Falkon,wang2019exact} or aggregating of distributed local approximations \citep{gal2014distributed,deisenroth2015distributed}.
One may also consider separately the approximation of the quadratic form via linear solvers such as conjugate gradients \citep{hestenes1952methods,Cutajar2016Preconditioning} and the approximation of the log-determinant \citep{Fitzsimons2017BayesianDeterminant,Fitzsimons2017EntropicDeterminant,Dong2017scalable}.
Another line of research is scaling the hardware \citep{Nguyen2019distributedGP}.

All above referenced approaches have computational complexity at least $\O(N)$
(with the exception of \citet{hensman2013gaussian} since it uses mini-batching).
However, the size of a dataset is seldom a particularly chosen value but rather the ad-hoc end of the sampling procedure.
The dependence on the dataset size implies that more data requires more computational budget even though more data might not be helpful.
This is the main motivation for our work: to derive an approximation algorithm where computational complexity does not depend on redundant data.

The work closest in spirit to the present paper is by \citet{artemev2021cglb}, who also propose lower and upper bounds on quadratic form and log-determinant. 
There are a number of differences, however. Their bound relies on the method of conjugate gradients where we work directly with the Cholesky decomposition.
Furthermore, while their bounds are deterministic, ours are probabilistic, which can make them tighter in certain cases, as they do not need to hold for all worst-case scenarios.
This is also the main difference to the work of \citet{hensman2013gaussian}.
Their bounds allow for mini-batching, but these are inherently deterministic when applied with full batch size.
\section{Methodology}
\label{sec:methods}
In the following, we will sketch our method.
Our main goal is to convey the idea and intuition.
To this end, we use suggestive notation.
We refer the reader to the appendix for a more thorough and formal treatment.


\subsection{Intuition on the linear extrapolation}
\label{subsec:intuition}
The marginal likelihood is typically presented as a joint distribution, but,
 using Bayes rule,
one can also view it from a cumulative perspective as the sum of log-conditionals:
\begin{align}
	\label{eq:log_marginal_alternative}
	\log p(\vy) &= \sum_{n=1}^N \log p(y_n \given \vy_{:n})\ .
\end{align}
With this equation in hand, the phenomena in \cref{fig:evidence} becomes much clearer.
The figure shows the value of \cref{eq:log_marginal_alternative} for an increasing
number of observations $n$. When the plot exhibits a linear trend, it is
because the summands $\log p(y_{n}\given \vy_{:n})$ become approximately constant,
implying that the model is not gaining additional knowledge.
In other words, new outputs are conditionally independent given the output observations seen so far.

The key problem addressed in this paper is how to estimate the full marginal likelihood, $p(\vy)$, from only a subset of $M$ observations. 
The cumulative view of the log-marginal likelihood in \cref{eq:log_marginal_alternative} is our starting point.
In particular, we will provide probabilistic bounds, which are functions of seen observations, on the estimate of the full marginal likelihood. These bounds will allow us to decide, on the fly, when we have seen enough observations to accurately estimate the full marginal likelihood.


\subsection{Stopping strategy}
Suppose that we have processed $M$ data points with $N-M$ data points
yet to be seen.
We can then decompose \cref{eq:log_marginal_alternative} into a sum of terms, which have already been computed, and a remaining sum
\begin{align*}
  \log p(\vec y) &= \underbrace{\sum_{n=1}^M \log p(y_{n}\mid \vec y_{:n})}_{\A: \text{ processed}}
   + \underbrace{\sum_{n=M+1}^{N} \log p(y_{n}\mid \vec y_{:n})}_{\B: \text{ remaining}}.
\end{align*}
Recall that we consider the $\vec x_i, y_i$ as independent and identically distributed random variables.
Hence, we could estimate $\B$ as $(N-M)\A/M$. 
Yet this is estimator is biased, since $(\vec x_{M+1}, y_{M+1}), \dots, (\vec x_N, y_N)$ interact non-linearly through the kernel function.
Instead, we will derive unbiased lower and upper bounds, $\L$ and $\U$.
To obtain unbiased estimates, we use the last-$m$ processed points, such that conditioned on the points up to $\tzero\ce M-m$, the expected value of $\log p(\vec y)$ can be bounded from above and below:
\begin{align*}
  \Exp[\L\given \mX_{:\tzero}, \vy_{:\tzero}]
  \leq
  \Exp[p(\vec y)\given \mX_{:\tzero}, \vy_{:\tzero}]
  \leq
  \Exp[\U \given \mX_{:\tzero}, \vy_{:\tzero}],
\end{align*}
and the observations from $\tzero$ to $M$ can be used to estimate $\L$ and $\U$.
\cref{fig:method_sketch} shows a sketch of our approach.
\begin{figure}%
	\includegraphics[width=0.48\textwidth]{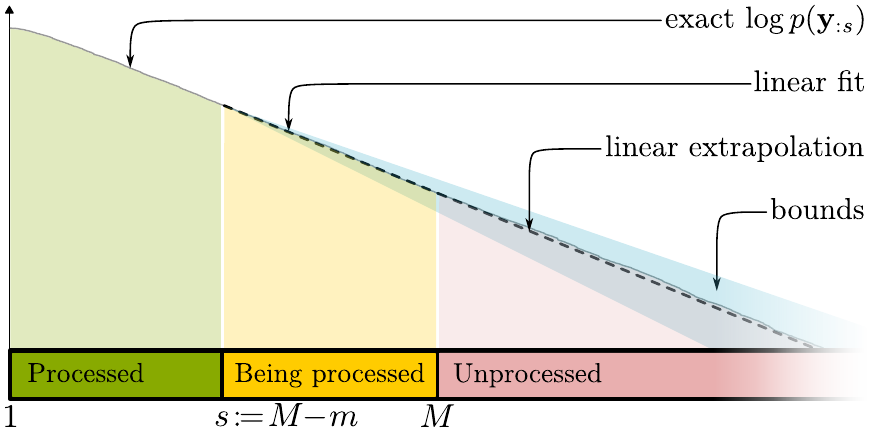}%
	\caption{
			Illustration of how \acgp proceeds during estimation of the full $\log p(\vy)$.
			The Cholesky decomposition works by processing data in blocks of size $m$ (see \cref{eq:cholesky}), so \acgp computes the log-marginal likelihood in blocks of size $m$ as well. In the illustration, $\tzero$ datapoints have been fully processed, meaning we have the exact $\log p(\vy_{:\tzero})$ for those. As the next $m$ data are being processed, we can compute the bounds on the \emph{full} $\log p(\vy)$ (\ie, including the unprocessed data) after step 2 of the Cholesky decomposition.
		If the stopping conditions in \cref{eq:stop_cond_r} are met, we return the linear extrapolation as estimate of $\log p(\vy)$.
	\cref{thm:main,thm:guaranteed_precision} describe the conditions under which this estimate achieves the desired error with high probability.
	}%
	\label{fig:method_sketch}%
\end{figure}%
%
We can then detect when the upper and lower bounds are sufficiently near each other, and stop computations early when the approximation is sufficiently good.
More precisely, given a desired relative error $r$, we stop when
\begin{align}
\label{eq:stop_cond_r}
\frac{\U-\L}{2\min(|\U|, |\L|)}<r \quad\text{and}\quad \sign(\U)=\sign(\L)\, .
\end{align}
If the bounds hold, then the estimator $(\L+\U)/2$ achieves the desired relative error (\cref{lemma:rel_err_bound} in appendix).
This is in contrast to other
approximations, where one specifies a computational budget, rather than a desired
accuracy.
\subsection{Bounds on the log-marginal likelihood}
\label{sec:bounds}
From \cref{eq:log_marginal}, we see that the log-marginal likelihood requires computing a log-determinant of the kernel matrix and a quadratic term. 
In the following we present upper and lower bounds for both the log-determinant ($\U_\text{D}$ and $\L_\text{D}$, respectively) and the quadratic term ($\U_\text{Q}$ and $\L_\text{Q}$).
We will need the posterior equations for the observations, \ie, $p(y_n \given \vec y_{:n})$, and we will need them as functions of test inputs $\vec x_*$ and $\vec x_{*}'$. 
To this end, define
\begin{align*}
	\vm_*^{(n)}(\vec x_*)&\ce k(\vec x_*, \mat X_{:n})\inv{\mat K_{:n, :n}}\vec y_{:n} \\
	\intertext{and}
	\begin{split}
	\C_*^{(n)}(\vec x_*, \vec x_*')&\ce k(\vec x_*, \vec x_*')+\sigma^2\delta_{\vec x_*,\vec x_*'}\\
																 &\phantom{\ce\ } -k(\vec x_*, \mat X_{:n})\inv{\mat K_{:n,:n}}k(\mat X_{:n}, \vec x_*'),
	\end{split}
\end{align*}
such that $p(y_n \given \vec y_{:n})=\N(y_n; \vm_*^{(n)}(\vec x_n), \C_*^{(n)}(\vec x_n, \vec x_n))$, which allows us to rewrite \cref{eq:log_marginal_alternative} as
\begin{align}
	\label{eq:log_marginal_elementwise}
	\begin{split}
		\log p(\vec y)&\propto
		\sum_{n=1}^N \log \C_*^{(n-1)}(\vec x_{n}, \vec x_{n})\\
									&\quad +\sum_{n=1}^N \frac{(y_n-\vm_*^{(n-1)}(\vec x_{n}))^2}{\C_*^{(n-1)}(\vec x_n, \vec x_n)}\, .
	\end{split}
\end{align}
This reveals that the log-determinant can be written as a sum of posterior variances and
the quadratic form has an expression as normalized square errors.
Other key ingredients for our bounds are estimates for average posterior variance and average covariance.
Therefore define the shorthands
\begin{align*}
\mat V&\ce \Diag \left[ \mat \Sigma_*^{(\tzero)}(\mat X_{\tzero:M}, \mat X_{\tzero:M}) \right] \\
\intertext{and}
\mat C&\ce \sum_{i=1}^{\frac{M}{2}} \mat \Sigma_*^{(\tzero)}(\vec x_{\tzero+2i}, \vec x_{\tzero+2i-1})\vec e_{2i}\vec e_{2i}\Trans\ ,
\end{align*}
where $\vec e_j \in \Re^{m}$ is the $j$-th standard basis vector.
The matrix $\mat V$ is simply the diagonal of the posterior covariance matrix $\mat\Sigma_*$.
The matrix $\mat C$ consists of every \emph{second} entry of the first off-diagonal of $\mat \Sigma_*$.
These elements are placed on the diagonal with every second element being $0$.
The reason for taking every second element is of theoretical nature, see \cref{remark:estimated_correlation} in the appendix.

\subsubsection{Bounds on the log-determinant}
Both bounds, lower and upper, use that
$\log\det{\mat K}=\log\det{\mat K_{:\tzero,:\tzero}}+\log\det{\mat \Sigma_*^{(\tzero)}(\mat X_{\tzero:}, \mat X_{\tzero:})}$
which follows from the matrix-determinant lemma.
The first term is available from the already processed datapoints.
It is the second addend that needs to be estimated, which we approach from the perspective of \cref{eq:log_marginal_elementwise}.
It is
well-established
that, for a fixed input, more observations decrease the posterior variance, and this decrease cannot cross the threshold $\sigma^2$ \citep[Question 2.9.4]{rasmussenwilliams}.
This remains true when taking the expectation over the input.
Hence, the average of the posterior variances for inputs $\mat X_{\tzero:M}$ is with high probability an overestimate of the average posterior variance for inputs with higher index.
This motivates our upper bound on the log-determinant:
\begin{align}
	\U_\text{D} &= \log\det{\mat K_{:\tzero,:\tzero}} + (N-\tzero)\mu_D, \label{eq:bound_ud}
	\\\mu_D&\ce \frac{1}{\blocksize} \sum_{i=1}^\blocksize \log\left(\mat V_{ii}\right). \notag
	\eqcomment{average log posterior variance}
\end{align}
To arrive at the lower bound on the log-determinant, we need an
expression for
how fast the average posterior variance could decrease which is governed by the covariance between inputs.
The variable $\rho_D$ measures the average covariance, and we show in \cref{thm:log_det_lower_bound} in the appendix that this overestimates the decrease per step with high probability.
Since the decrease cannot exceed $\sigma^2$, we introduce $\psi_D$ to denote the step which would cross this threshold.
\begin{align}
	\begin{split}
	\L_\text{D} &= \log\det{\mat K_{:\tzero,:\tzero}}
	+(N-\psi_D)\log \sigma^2\\
	&\quad
	+(\psi_D-\tzero)\left(\mu_D-\frac{\psi_D-\tzero-1}{2}\rho_D\right)\label{eq:bound_ld}
	\end{split}
	\\\rho_D&\ce \frac{2}{\blocksize\sigma^4}\sum_{i=1}^{\blocksize}\mat C^2_{2i,2i} \notag
	\eqcomment{average square covariance}
	\\\psi_D&\ce \max\left(N, \tzero+\left\lfloor\frac{\tilde{\mu}_D-\log\sigma^2}{\tilde{\rho}_D}+\frac{1}{2}\right\rfloor\right)\label{eq:variance_cutoff_step}
	\leqcomment{steps $\mu_D$ can decrease by $\rho$}
\end{align}
where variables with a tilde refer to a preceding estimate, that is, exchanging the indices $M$ for $M-\blocksize$ and $\tzero$ for $\tzero-\blocksize$.
\kscmt{Instead of $\psi_D$, could we use something like $N_{\sigma^2}$, $n_{\sigma^2}$ or $n_\rho$?}
Both bounds collapse to the exact solution when $\tzero=N$.
The bounds are close when the average covariance between inputs, $\rho_D$, is small.
This occurs for example when the average variance is close to $\sigma^2$ since the variance is an upper bound to the covariance. 
Another case where $\rho_D$ is small is when points are not correlated to begin with.


\subsubsection{Bounds on the quadratic term}
Denote with $\res_{*}\ce \vec y_{\tzero:}-\vec m_*^{(\tzero)}(\mat X_{\tzero:})$ the prediction errors (the residuals), when considering the first $\tzero$ points as training set and the remaining inputs as test set.
Analogous to the bounds on the log-determinant, one can show with the matrix inversion lemma that
$\vec y\Trans\inv{\mat K}\vec y=\vec y_{:\tzero}\Trans\inv{\mat K_{:\tzero, :\tzero}}\vec y_{:\tzero}+\res_{*}\Trans\inv{(\mat \Sigma_*^{(\tzero)}(\mat X_{\tzero:}))}\res_{*}$. 
Again, the first term will turn out to be already computed. 
With a slight abuse of notation let $\res_{*}\ce \vec y_{\tzero:M}-\vec m_*^{(\tzero)}(\mat X_{\tzero:M})$, that is, we consider only the first $\blocksize$ entries.
Our lower bound arises from another well-known lower bound: $\vec a\Trans\inv{\mat A}\vec a\geq 2\vec a\Trans\vec b-\vec b\Trans\mat A\vec b$ for all $\vec b$ (see for example \citet{kim2018scalableStructureDiscovery,artemev2021cglb}).
We write $\vec a\Trans \inv{\mat A}\vec a$ as $\vec a\Trans\Diag[\vec a]\inv{\left(\Diag[\vec a]\mat A\Diag[\vec a]\right)}\Diag[\vec a]\vec a$ and choose $\vec b\ce\inv{\Diag[\mat A]}\vec 1$.
The result, after some cancellations, is the following probabilistic lower bound on the quadratic term: 
\begin{align}
	\L_\text{Q} &= \vec y_{:\tzero}\Trans\inv{\mat K_{:\tzero,:\tzero}}\vec y_{:\tzero} 
    + (N-\tzero)\left(\mu_Q-\max(0, \rho_Q)\right) \label{eq:bound_lq}
	\\\mu_Q&\ce \frac{1}{m}\res_{*}\Trans\inv{\mat V} \res_{*} \notag
	\eqcomment{average calibrated square error}\\
	\begin{split}
		\rho_Q&\ce \frac{N-\tzero-1}{2m} \\
					&\phantom{\ce\ }\cdot\sum_{j=\frac{\tzero+2}{2}}^{\frac{M}{2}}\frac{\res_{*,2j}\res_{*,2j-1}\C_*^{(\tzero)}(\vec x_{2j}, \vec x_{2j-1})}{\C_*^{(\tzero)}(\vec x_{2j}, \vec x_{2j})\C_*^{(\tzero)}(\vec x_{2j-1}, \vec x_{2j-1})} \notag
	\end{split}
	\leqcomment{calibrated error correlation}
\end{align}

Our upper bound arises from the element-wise perspective of \cref{eq:log_marginal_elementwise}.
We assume that the expected mean square error $(y_n-\vm_*^{(n-1)}(\vec x_n))^2$ decreases with more observations.
However, though mean square error and variance decrease, their expected ratio may increase or decrease depending on the choice of kernel, dataset and number of processed points.
Using the average error calibration with a correction for the decreasing variance, we arrive at our upper bound on the quadratic term:
\begin{align}
	\U_\text{Q} &= \vec y_{:\tzero}\Trans\inv{\mat K_{:\tzero,:\tzero}}\vec y_{:\tzero} 
	+ (N-\tzero)\left(\mu_Q + \rho_Q'\right) \label{eq:bound_uq}
	\\\rho_Q'&\ce \frac{N-\tzero-1}{m}\frac{1}{\sigma^4}\res_{*}\Trans\mat C\inv{\mat V}\mat C\res_{*} \notag
	\leqcomment{square error correlation}
\end{align}
In the appendix (\cref{thm:quad_upper_bound}), we present a tighter bound which uses a similar construction as for the lower bound on the log-determinant, switching the form at a step $\psi$.
Again, the bounds collapse to the true quantity when $\tzero=N$.
The bounds will give good estimates when the average covariance between inputs is low or when the model can predict new data well, that is, when $\res_{*}$ is close to $0$.

\subsection{Validity of bounds and stopping condition}
\label{sec:main_theoretical_result}
For the upper bound on the quadratic form, we need to make a (technical) assumption.
It expresses the intuition that the (expected) mean square error should not increase with more data---a model should not become worse as its training set increases. %
It is possible to construct counter-examples where this assumption is violated: for example when $\vec y\sim\N(\vec 0, \mat I)$ and $p(\vec f)=\N(\vec 0, \mat K)$, the posterior mean is with high probability no longer zero-mean.
However, our experiments in \cref{sec:experiments} indicate that this assumption is not problematic in practice.
\begin{assumption}
	\label{assumption:targets}
	Assume that
	\begin{align*}
		\Exp\left[f(\vec x, \vec x')(y_j-\vec m_{*}^{(j-1)}(\vec x))^2\mid \mat X_{:\tzero}, \vec y_{:\tzero}\right]
		\\\quad\leq
		\Exp\left[f(\vec x, \vec x')(y_j-\vec m_{*}^{(\tzero)}(\vec x))^2\mid \mat X_{:\tzero}, \vec y_{:\tzero}\right]
	\end{align*}
	for all $\tzero\in\{1, \dots, N\}$ and for all $\tzero <j\leq N$, where $f(\vec x, \vec x')$ is either $\frac{1}{\mat\Sigma_*^{(\tzero)}(\vec x, \vec x)}$ or $\frac{\mat\Sigma_*^{(\tzero)}(\vec x, \vec x')^2}{\sigma^4 \mat\Sigma_*^{(\tzero)}(\vec x, \vec x)}$.
\end{assumption}

\begin{theorem}
\label{thm:main}
Assume that $(\vec x_1, y_1), \dots, (\vec x_N, y_N)$ are independent and identically distributed and that \cref{assumption:targets} holds.
For any $\tzero \in \{1, \dots, N\}$, the bounds defined in \cref{eq:bound_ld,eq:bound_ud,eq:bound_uq,eq:bound_lq} hold in expectation:
\begin{align*}
	\begin{split}
		\Exp[\L_D\mid \mat X_{:\tzero}, \vec y_{:\tzero}]&\leq \Exp[\log\det{\mat K} \mid \mat X_{:\tzero}, \vec y_{:\tzero}]\\
																										 &\leq \Exp[\U_D\mid \mat X_{:\tzero}, \vec y_{:\tzero}] 
	\end{split}\\
	\intertext{and}
	\begin{split}
		\Exp[\L_Q\mid \mat X_{:\tzero}, \vec y_{:\tzero}]&\leq \Exp[\vec y\Trans\inv{\mat K}\vec y \mid \mat X_{:\tzero}, \vec y_{:\tzero}]\\
																										 &\leq \Exp[\U_Q\mid \mat X_{:\tzero}, \vec y_{:\tzero}]\ .
	\end{split}
\end{align*}
\end{theorem}
The proof can be found in \cref{sec:main_theorem_proof}, and a sketch in \cref{sec:proof_sketch}.

\begin{theorem}
\label{thm:guaranteed_precision}
Let $r>0$ be a desired relative error and set $\U\ce -\frac{1}{2}\left(\L_D+\L_Q+N\log2\pi\right)$ and $\L\ce -\frac{1}{2}\left(\U_D+\U_Q+N\log2\pi\right)$.
If the stopping conditions hold, that is, $\sign(\U)=\sign(\L)$ and \cref{eq:stop_cond_r} is true, then $\log p(\vec y)$ can be estimated from $(\U+\L)/2$ such that, under the condition $\L_D\leq \log(\det{\mat K}) \leq \U_D \text{ and }
\L_Q \leq \vec y\Trans\inv{\mat K}\vec y \leq \U_Q$, the relative error is smaller than $r$, formally:
\begin{align}
	\label{eq:main_result}
	\left|{\log p(\vec y)-(\U+\L)/2}\right| \leq r|{\log p(\vec y)}|.
\end{align}

\end{theorem}
The proof follows from \cref{lemma:rel_err_bound} in the appendix.

\cref{thm:main} is a first step to obtain a probabilistic statement for \cref{eq:main_result}, that is, a statement of the form $\proba{\left|\frac{\log p(\vec y)-\frac{1}{2}(\U+\epsilon_{\U,\delta}+\L-\epsilon_{\L,\delta})}{\log p(\vec y)}\right| > r} \leq \delta$.
In earlier work \citep{bartels2021stoppedcholesky}, we have shown that such a statement can be obtained for the log-determinant.
Theoretically, we can obtain such a statement using standard concentration inequalities and a union bound over $\tzero$.
In practice, the error guarding constants $\epsilon$ would render the result trivial.
A union bound can be avoided using Hoeffding's inequality for martingales \citep{fan2012hoeffding}.
However, this requires to replace $\tzero\ce M-m$ by a stopping time independent of $M$, which we regard as future work.

\subsection{Practical implementation}
\label{sec:bound_realization}
The proposed bounds turn out to be surprisingly cheap to compute.
If we set the block-size of the Cholesky decomposition to be $m$, the matrix $\mat\Sigma_*^{(\tzero)}$ is exactly the downdated matrix in Step~2 of the
algorithm outlined in \cref{sec:cholesky}.
Similarly, the expressions for the bounds on the quadratic form appear while solving the linear equation system $\inv{\mat L}\vec y$.
A slight modification to the Cholesky algorithm is enough to compute these bounds on the fly during the decomposition with little overhead.

The stopping conditions can be checked
before or after Step~3 of the Cholesky decomposition (\cref{sec:cholesky}).
Here, we explore the
former
option since Step 3 is the bottleneck due to being less parallelizable than the other steps.

Note that the definition of the bounds does not involve variables $\vec x, y$ which have not been processed.
This allows an on-the-fly construction of the kernel matrix, avoiding potentially expensive kernel function evaluations.
Furthermore, it is \emph{not} necessary to allocate $\O(N^2)$ memory in advance; a user can specify a maximal amount of processed datapoints, hoping that stopping occurs before hitting that limit.
We provide the pseudo-code for this modified algorithm, our key algorithmic contribution, in \cref{sec:proof_sketch} (\cref{algo:chol_blocked,alg:bounds}).
\sbchg{
For technical reasons, the bounds we use in practice, deviate in some places from the ones presented.
We describe the details fully in \cref{sec:stopping}.
}%
Additionally, we provide a \textsc{Python} implementation of our modified Cholesky decomposition and scripts to
replicate the experiments of this paper.\footnote{The code is available at the following repository: \url{https://github.com/SimonBartels/acgp}}
%
		%
		\section{Experiments}
\label{sec:experiments}
We now examine the bounds and stopping strategy for \acgp{}{}. 
When running experiments without GPU support, all linear algebra operations are substituted for direct calls to the \textsc{OpenBLAS} library \citep{wang2013OpenBLAS}, for efficient realization of \textit{in-place} operations. 
To still benefit from automatic differentiation, we used \textsc{PyTorch} \citep{paszke2019pytorch} with a custom backward function for $\log p(\vec y)$ which wraps \textsc{OpenBLAS}.
The details of our experimental setup can be found in \cref{sec:app_experimental_details}.

\subsection{Performance on synthetic data}
\label{sec:synthetic_experiment}

\acgp{} will stop the computation when the posterior covariance matrix of the remaining points conditioned on the processed points is essentially diagonal.
This scenario occurs for example when using a squared exponential kernel with long lengthscale and small observational noise on densely sampled dataset.

To test \acgp in this scenario, we sample a function from a GP prior with zero mean and a squared exponential kernel with length scale $\log\ell=-2$. From this function, we uniformly sample $10^{12}$ observations $(\vx, y)$ in the interval $[0,1]$ using an observation noise of $\sigma^2\ce 0.1$, that is, $\vy = f(\vx) + \N(0, 0.1)$, see \cref{fig:visualization}.
This is a scenario where \acgp excels, since it does not need to load the dataset into memory in advance, whereas methods with at least linear complexity cannot even start computation.


\begin{figure}
	\centering
	\includegraphics[width=\columnwidth]{./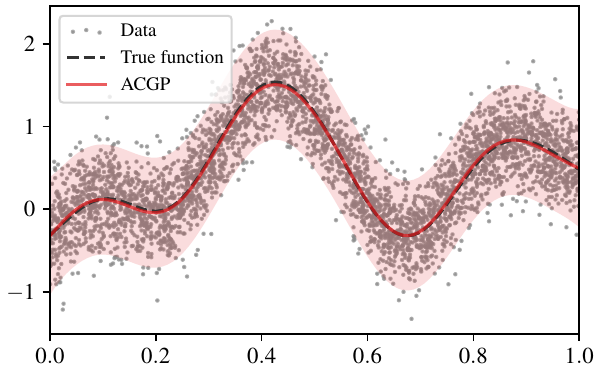}
	\caption{
			The figure shows one of the sampled functions from the synthetic experiment in \cref{sec:synthetic_experiment} as well as the posterior predictive distribution recovered by \acgp. Since the entire dataset of $10^{12}$ observations is too large to visualize, we show only the data that were selected by \acgp before stopping; in this case just $4000$. The relative error on $\log p(\vy)$ for $10^4$ observations was $0.054$. Notice how, despite the larger relative error, the posterior process mean closely follows the actual underlying function.
%
	}
	\label{fig:visualization}
\end{figure}

The task is to estimate the true $\log p(\vy)$ of the full dataset, and we run \acgp{} with a relative error of $r=0.01$ and a blocksize of $1000$ to obtain this estimate. 
Since we cannot evaluate the actual $\log p(\vy)$ for all $10^{12}$ observations, we use the predicted and actual $\log p(\vy)$ for $10^4$ observations as proxy for assessing the performance of \acgp{}. 
We repeat the experiment for 10 different random seeds.
Recall that \acgp{} estimates $\Exp[\log p(\vec y)\given \mat X_{:\tzero}, \vec y_{:\tzero}]$ as opposed to $\log p(\vec y)$, directly. 
Hence, there are two sources of error for \acgp{}: the deviation of $\log p(\vec y)$ from its expected value and the deviation of the empirical estimates from their expectations.\footnote{This shows the benefit of developing our theory further, to obtain probably-approximately-correct bounds. Such bounds introduce error-guarding constants to protect against fluctuations.}
The average $\log p(\vec y_{:10^4})$ is $-2699.67\pm70.81$, and thus, due to the relative variance, a relative error of $r=0.01$ will be hard to achieve.
When run on all $10^{12}$ observations, \acgp{} stops after processing just $4600 \pm 1562$ on average, obtaining an actual relative error on the estimate of $\log p(\vy)$ of $0.047 \pm 0.034$. To decrease this error, one can either decrease the specified relative error of \acgp or increase the blocksize, which will lead to more stable predictions. For the experiments in the remainder of this paper, we choose the latter strategy and set the blocksize to $10^4$, which is also better suited for parallel computations.
\setlength{\figwidth}{.5\textwidth}%
\setlength{\figheight}{.2\textheight}%
\subsection{Bound quality}
\label{sec:bound_experiments}
The purpose of this section is to demonstrate that with a large enough blocksize $\blocksize{}$, our estimates are often correct on large datasets.
We examine our bounds presented in \cref{sec:methods} and compare them to those proposed by \citet[Lemma 2 and Lemma 3]{artemev2021cglb} (CGLB).
Specifically, for the determinant we compare to their $\O(N)$ upper bound \citep[Eq.~11]{artemev2021cglb} and their $\log(\det{\mat Q})$ as lower bound.
We set the number of inducing inputs $M$ for \cglb{} to 512, 1024, 2048, and 4096.
For \acgp{}, we define $\blocksize{}\ce 40\cdot 256=10\,240$ which is the number of cores times the default \textsc{OpenBLAS} blocksize for our machines.
We compare both methods using squared exponential kernel (SE) and the Ornstein-Uhlenbeck kernel (OU),
\begin{align}
	k_\text{SE}(\vec x, \vec z)&\ce \theta \exp\left(-\frac{\norm{\vec x - \vec z}^2}{2\ell^2}\right), \label{eq:kernel_se}\\
	\qquad k_\text{OU}(\vec x, \vec z)&\ce \theta \exp\left(-\frac{\norm{\vec x - \vec z}}{\ell}\right), \label{eq:kernel_ou}
\end{align}
where we fix $\sigma^2\ce 10^{-3}$ and $\theta\ce 1$, and we vary $\ell$ as $\log\ell\in \{-1,0,1,2\}$.
As benchmarking datasets we use the two datasets consisting of more than 20\,000 instances used by \citet{artemev2021cglb}: \texttt{kin40k} and \texttt{protein}.
We further consider two additional datasets from the UCI repository \citep{dua2019uci}: \texttt{metro}
and \texttt{pm25} \citep{liang2015pmDataset}.
We chose these datasets in addition as they are of similar size, they are marked as regression tasks and without missing values.
\sbchg{
We note here that shuffling the datasets does not exactly establish the i.i.d.~assumption of \cref{thm:main}.
In practice, the results of this section demonstrate that \acgp{} performs satisfactorily also in the sampling-without-replacement case.
}%
\newcommand{\bex}{metroOU0.0} 
\begin{figure}[htb]
	\centering
	\includegraphics[width=\linewidth]{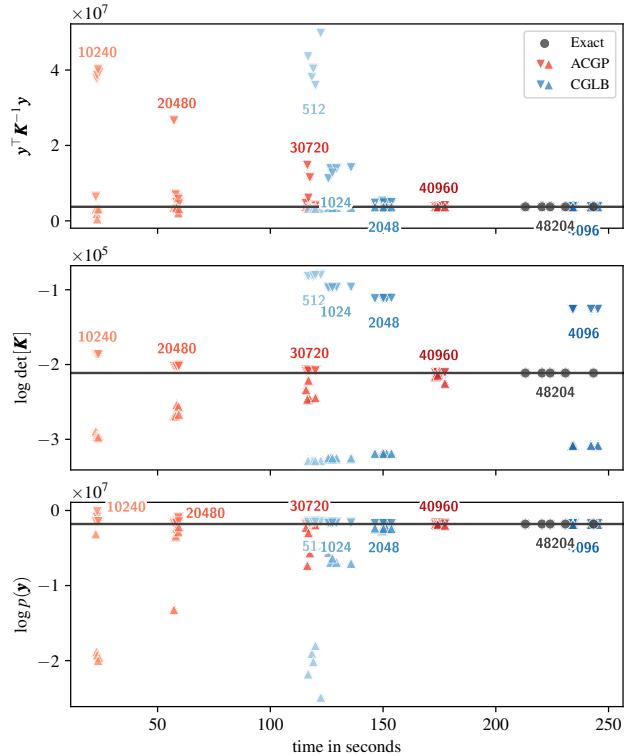}
\caption{
	\revm{Comparison of the upper and lower bounds for ACGP and CGLB on the \texttt{metro} dataset using the OU kernel with a length scale of $\log\ell=0$ and the time it takes to compute them.}
	The black line indicates the result obtained using exact GP regression with points above {and below it marking the upper and lower bounds, respectively.}
	The experiment was repeated five times with different seeds to illustrate the variability in the computation time, shown here as multiple points \revm{of the same color}.
	For \acgp{} the number near the points shows $M$, the size of the used subset; for \cglb{} it is the number of inducing inputs.
}
\label{fig:bound_quality}
\end{figure}

Empirically, \cglb{} seems
to better estimate
the quadratic term, whereas \acgp{} is faster to identify the log-determinant.
\Cref{fig:bound_quality} shows a typical example.
Note that, 
for the quadratic form, the upper bounds tend to be less tight than the lower bounds.
Generally,
there is no clear winner; sometimes \acgp{} estimates both quantities faster and sometimes \cglb{}.
See \cref{sec:app_additional_results} for figures on all results.

The reason why \cglb{} has more difficulties to approximate the log-determinant is that the bound involves $\operatorname{trace}[\mat K-\mat Q]$ where $\mat Q$ is a low rank approximation to $\mat K$.
If $\bm{K}_{\text{ff}}$ is of high rank, the gap in the trace can be large.
For \cglb{} the time to compute the bounds is dominated by the pivoted Cholesky decomposition to select the inducing inputs.
This overhead becomes irrelevant for the following hyper-parameter tuning experiments, since the selection is computed only once in the beginning.
One conclusion from these experiments is to keep in mind that when high precision is required, simply computing the exact solution can be a hard-to-beat baseline.
%
\setlength{\figwidth}{.5\textwidth}%
\setlength{\figheight}{.2\textheight}%
\subsection{Application in hyper-parameter tuning}
\label{sec:hyperparameter_experiments}
We repeat the hyper-parameter tuning experiments performed by \citet{artemev2021cglb} using the same set-up, see \cref{sec:app_experimental_details} for details. 
We use the same kernel function, a Mat\'ern$\frac{3}{2}$, and the same optimizer, 
L-BFGS-B \citep{liu1989LFBGSB}, with \textsc{SciPy} \citep{Virtanen2020SciPy} default parameters.
\citet{artemev2021cglb} report their best results using $M=2048$ inducing inputs.
For reference, we also compare against Sparse Variational Gaussian process regression (SGPR) by \citet{titsias2009variational} initialized with the same 512, 1024 and 2048 inducing inputs as CGLB.
We use root mean square error (RMSE), negative log predictive density (NLPD) and exact, marginal log-likelihood on the training set, $\log p(\vec y)$, as performance metrics.
The results for all experiments discussed in this section can be found in \cref{subsec:hyperparameters}. Here, we will focus on the behavior of each method during training.

A possible application of \acgp{} is that an optimizer can decide how precise function evaluations need to be.
To explore this possibility, we successively decrease the ``relative change in function value'' (\texttt{ftol}) convergence criterion of L-BFGS-B as $(2/3)^{\text{restart}+1}$ and set this as value for $r$.
With this choice, \acgp does not have any more free parameters than a standard optimizer.
The blocksize is a problem independent parameter and it is set to the same value as in \cref{sec:bound_experiments}.

We explore two different computing environments.
For datasets smaller than 20\,000 data points, we ran our experiments on a single GPU.
The results can be summarized in one paragraph:
all methods converge the latest after two minutes.
The time difference between methods is less than twenty seconds.
Exact Gaussian process regression is fastest, more often than not.
The results can be found in \cref{subsec:hyperparameters}.
We conclude that in an environment with significantly more processing resources than memory, approximation may just cause overhead.


For datasets larger than 20\,000 datapoints, our setup differs from \citet{artemev2021cglb}
in that we use only CPUs on machines where the kernel matrix still fits fully into memory.
On all datasets, \acgp{} is essentially exhibiting the same optimization behavior as the exact Gaussian process regressor, just stretched out.
\acgp{} can provide results faster than exact optimization but may be slower in convergence as \cref{fig:ht_results_faster} shows for the \texttt{protein} dataset.
This observation is as expected.
However, approximation can also hinder fast convergence as \cref{fig:ht_results_slower} reveals on for the \texttt{metro} dataset.
\cglb{} benefits from caching the chosen inducing inputs and reusing the solution from the last solved linear equation system.
The algorithm is faster, though it often plateaus at worse objective function values. 
The results for \texttt{kin40k} are similar to \texttt{protein} and the results for \texttt{pm25} are similar to \texttt{metro}.
These and additional results can be found in \cref{subsec:hyperparameter_plots}.
Again, when the available memory permits, the exact computation is a hard-to-beat baseline. 
However, the Cholesky as a standard numerical routine has been engineered over decades, whereas for the implementations of \cglb{} and \acgp{} there is opportunity for improvement.

\begin{figure}[htb]
\centering
\begin{subfigure}{0.95\columnwidth}%
	\includegraphics[width=\linewidth]{./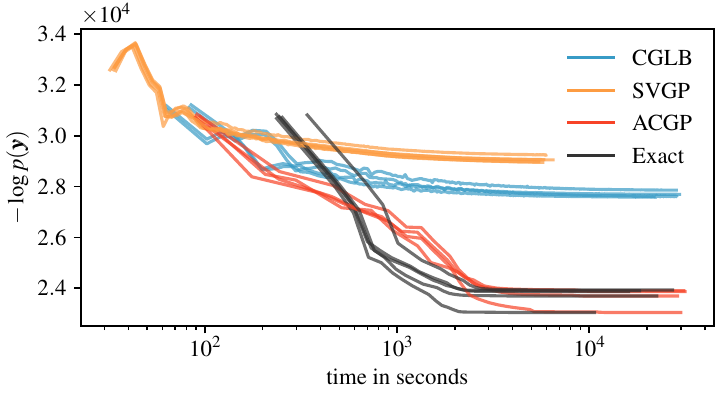}
	\caption{\texttt{protein} dataset. The iteratively increasing precision may allow \acgp{} to reach better solutions faster than exact inference at the price of later convergence.
	}
	\label{fig:ht_results_faster}
\end{subfigure}
\begin{subfigure}{0.95\columnwidth}%
	\includegraphics[width=1.0\linewidth]{./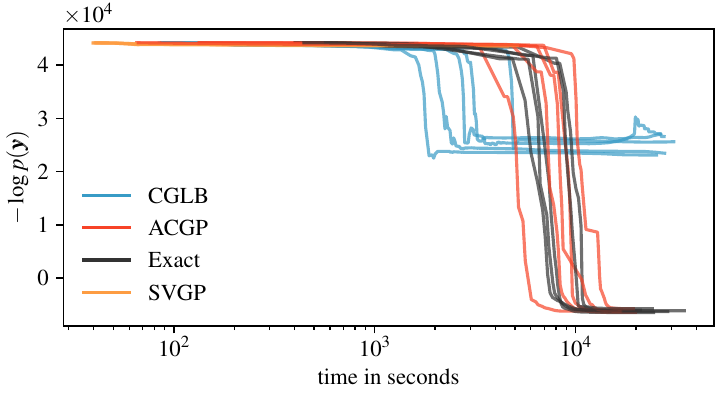}
	\caption{\texttt{metro} dataset. Function evaluations with \cglb{} are generally the fastest at the cost of plateauing at higher objective function values.
	}
	\label{fig:ht_results_slower}
\end{subfigure}
	\caption{
		Typical examples of the evolution of the exact log marginal likelihood $p(\vec y)$ while optimizing hyper-parameters.
		See \cref{subsec:hyperparameters} for additional plots for all datasets, as well as for SVGP runs.
		\label{fig:ht}
	}
\end{figure}

		\section{Conclusions}
The Cholesky decomposition is the de facto way to invert matrices when training Gaussian processes, yet it tends to be considered a black box. However, if one opens this black box, it turns out that the Cholesky decomposition computes the marginal log-likelihood of the full dataset, and, crucially, in intermediate steps, the posteriors of unprocessed training data conditioned on the processed.
Making the community aware of this remarkable insight is one of our main contributions of our paper.
Our main novelty is to use this insight to bound the (expected) marginal log-likelihood of the full dataset from only a subset.
%
%
%
With
only small
modifications to this classic matrix decomposition, we can use these upper and lower bounds to stop the decomposition before all observations have been processed.
This has the practical benefit that the kernel matrix $\mat{K}$ does not have to computed prior to performing the decomposition, but can rather be computed on-the-fly.

Empirical results indicate that the approach carries
significant promise. 
In general, we find that exact GP inference leads to better behaved optimization
than approximations such as \cglb{} and inducing point methods, and that
a well-optimized Cholesky implementation is surprisingly competitive in terms
of performance. 
%
An advantage of our approach is that it is essentially parameter-free. The user has to specify a requested numerical accuracy and the computational demands will be scaled accordingly.
Finally, we note that \acgp{} is complementary to much existing work,
and should be seen as an addition to the GP toolbox, rather than a substitute
for existing tools.

%
%
%

%
		%
		\subsubsection*{Acknowledgements}
			Shout-out to Damien Garreau for a substantial amount of suggestions for this paper.
			Further, we are grateful for the valuable feedback of all anonymous reviewers who saw the different iterations of this article.
			
			This work was funded in part by the Novo Nordisk Foundation through the Center for Basic Machine Learning Research in Life Science (NNF20OC0062606, NNF20OC0065611).
			It also received funding from the European Research Council (ERC) under the European Union’s Horizon 2020 research, innovation programme (757360), from a research grant (15334, 42062) from VILLUM FONDEN, from the Danish Ministry of Education and Science, and from Digital Pilot Hub and Skylab Digital.
			The authors acknowledge the Pioneer Centre for AI, DNRF grant P1.
		%
		%
		\FloatBarrier{}%
		\subsubsection*{References}%
		%
		\printbibliography[heading=none]
		
		\appendix
		\onecolumn
		\renewcommand{\eqcomment}[1]{\\&\sslash\text{{\small\emph{#1}}}\notag}%
		\section{Evolution of the log-marginal likelihood}%
\label{app:evidence}%
This section contains figures for the progression of the log-marginal likelihood for five different permutations of the same datasets as used in \cref{sec:bound_experiments} of the main paper.
\cref{fig:evidence_rbf} shows the results for the squared exponential kernel (\cref{eq:app_kernel_se}) with $\theta\ce 1$ and $\sigma^2\ce 10^{-3}$, and \cref{fig:evidence_ou} shows the results for the Ornstein-Uhlenbeck kernel (\cref{eq:app_kernel_ou}) using the same parameters.
\newcommand{\evidencefig}[3]{
\begin{minipage}[b]{.5\textwidth}
	\centering
	\includegraphics[width=0.95\textwidth]{./figs/llh_progression/llh_#1_tight.pdf}
	\subcaption{
		Log-marginal likelihood evolution for #2.
	}
	\label{fig:evidence_#1}
\end{minipage}
%
}%
\begin{figure}[htb]%
\evidencefig{pm25RBF}{\PM{}}{RBF}%
\evidencefig{metroRBF}{\METRO{}}{RBF}\\[5mm]%
\evidencefig{proteinRBF}{\PROTEIN{}}{RBF}%
\evidencefig{wilson_kin40kRBF}{\KINFORTYK{}}{RBF}%
\caption{
The figure shows the log-marginal likelihood as a function of the size of the training set for the large datasets described in \cref{tbl:datasets} using the squared exponential kernel.
See \cref{app:evidence} for a description of the experimental setup.
}%
\label{fig:evidence_rbf}%
\end{figure}%
\begin{figure}[htb]
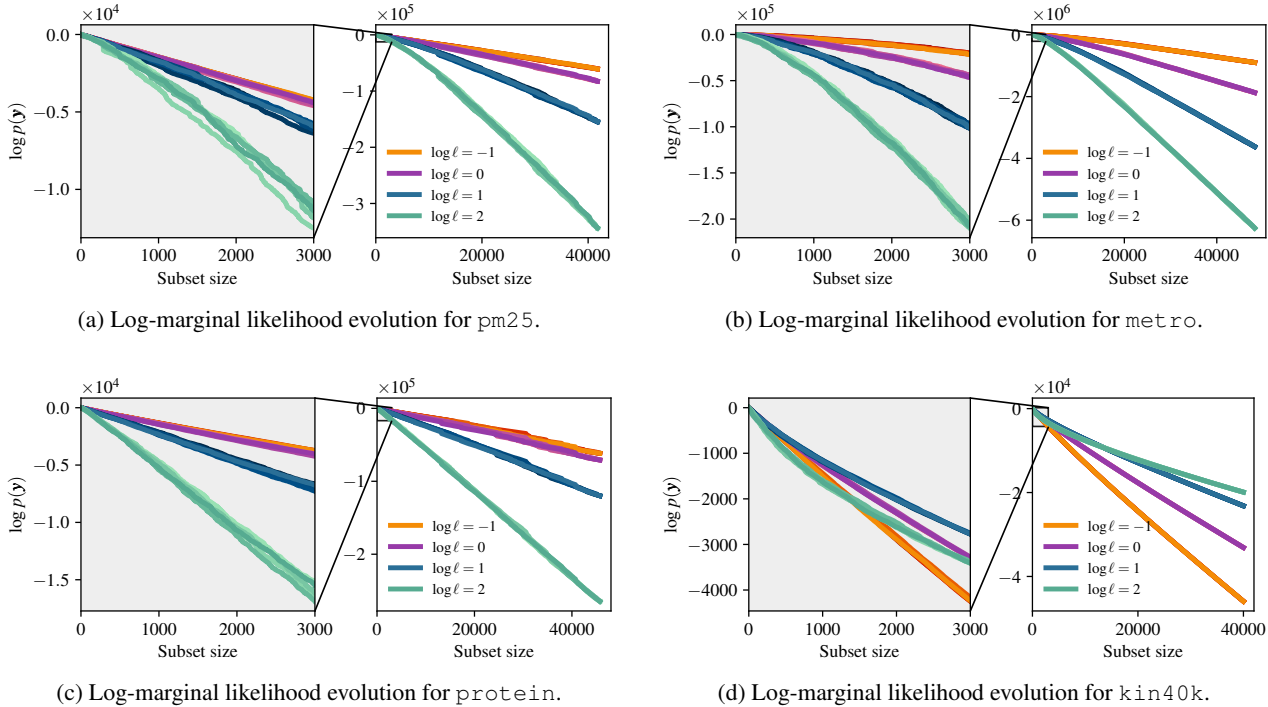
%
	\evidencefig{pm25OU}{\PM{}}{OU}%
	\evidencefig{metroOU}{\METRO{}}{OU}\\[5mm]%
	\evidencefig{proteinOU}{\PROTEIN{}}{OU}%
	\evidencefig{wilson_kin40kOU}{\KINFORTYK{}}{OU}%
	\caption{The figure shows the log-marginal likelihood as a function of the size of the training set for the large datasets described in \cref{tbl:datasets} using the Ornstein-Uhlenbeck kernel.
	See \cref{app:evidence} for a description of the experimental setup.
	}%
	\label{fig:evidence_ou}
\end{figure}%
\FloatBarrier{}%
\section{Experimental details}
\label{sec:app_experimental_details}

\begin{table}[htb]
	\centering
	\caption{
		Overview over all datasets used for the experiments in \cref{sec:experiments}.
		The total dataset size (training and testing) is denoted $N$ and $D$ denotes the dimensionality.
	}
	\begin{tabular}{lS[table-format=5]S[table-format=2]l} 
		\toprule
		Key & {$N$} & {$D$} & Source 
		\\\midrule
		\BIKE & 17379 & 17 & \citet{Fanaee2013BikeDataset}. \web{http://archive.ics.uci.edu/ml/datasets/bike+sharing+dataset}
		\\\ELEVATORS & 16599 & 18 &  \citet{Camachol1998AileronsElevators}. 
		\\\KINFORTYK & 40000 & 8 & \citet{Schwaighofer2002kin40k}.
		\\\METRO & 48204 & 66 & No citation request. \web{http://archive.ics.uci.edu/ml/datasets/Metro+Interstate+Traffic+Volume} 
		\\\PM & 43824 & 79 & \citet{Liang2015pmDataset}. \web{http://archive.ics.uci.edu/ml/datasets/Beijing+PM2.5+Data}
		\\\POLETELECOMM & 15000 & 26 &  \citet{Weiss1995Poletelecomm}. 
		\\\PROTEIN & 45730 & 9 & No citation request. \web{http://archive.ics.uci.edu/ml/datasets/Physicochemical+Properties+of+Protein+Tertiary+Structure}
		\\\PUMADYN & 8192 & 32 & No citation request. \web[this website]{https://www.cs.toronto.edu/~delve/data/pumadyn/desc.html} 
		\\\bottomrule
	\end{tabular}
	\label{tbl:datasets}
\end{table}


For an overview of the datasets we use, see~\cref{tbl:datasets}.
The datasets are all normalized to have zero mean and unit variance for each feature.
We explore two different computing environments.
For datasets smaller than \num{20000} data points, we ran our experiments on a single GPU.
This is the same setup as in \citet{artemev2021cglb} with the difference that we use a \textsc{Titan RTX} whereas they have used a \textsc{Tesla V100}.
For datasets larger than \num{20000} datapoints, our setup differs from \citet{artemev2021cglb}.
We use only CPUs on machines where the kernel matrix still fits fully into memory.
Specifically, we used machines running Ubuntu 18.04 with 50 Gigabytes of RAM and two \textsc{Intel Xeon E5-2670 v2} CPUs.

%

\subsection{Bound quality experiments}
For \cglb, we compute the bounds with varying number of inducing inputs $M \ce \{512, 1024, 2048, 4096\}$ and measure the time it takes to compute the bounds. 
For \acgp, we define the blocksize $\blocksize \ce 256\cdot 40=\num{10192}$ which is the default \openblas block size on our machines times the number of cores.
This ensures that the sample size for our bounds is sufficiently large for accurate estimation, and at the same time the number of page-faults should be comparable to the default Cholesky implementation.
We measure the elapsed time every time a block of data points is added to the processed dataset and the bounds are recomputed.

We compare both methods using squared exponential kernel (SE) and the Ornstein-Uhlenbeck kernel (OU).
\begin{align}
	k_\text{SE}(\vec x, \vec z)&\ce \theta \exp\left(-\frac{\norm{\vec x - \vec z}^2}{2\ell^2}\right) \label{eq:app_kernel_se}
	\\k_\text{OU}(\vec x, \vec z)&\ce \theta \exp\left(-\frac{\norm{\vec x - \vec z}}{\ell}\right). \label{eq:app_kernel_ou}
\end{align}
where we fix $\theta\ce 1$ and we vary $\ell$ as $\log\ell\in \{-1,0,1,2\}$.
We use a Gaussian likelihood and fix the noise to $\sigma^2\ce 10^{-3}$.

\subsection{Hyper-parameter tuning}
In this section, we describe our experimental setup for the hyper-parameter optimization experiments, which closely follows that of \citet{artemev2021cglb}.
We randomly split each dataset into a training set consisting of 2/3 of examples, and a test set consisting of the remaining third.
We use a Mat\'ern$\frac{3}{2}$ kernel function and L-BFGS-B as the optimizer with \textsc{SciPy} \citep{Virtanen2020SciPy} default parameters if not specified otherwise.
All algorithms are stopped the latest after 2000 optimization steps, after 12 hours of compute time, or when optimization has failed three times.
We repeat each experiment five times with a different shuffle of the dataset and report the results in \cref{tbl:results,tbl:results_gpu}.

For \cglb, it is necessary to decide on a number of inducing inputs.
From the results reported by \citet{artemev2021cglb}, it appears that using $M=2048$ inducing inputs yields the best trade-off in terms of speed and performance, hence we use this value in our experiments.
For the exact Cholesky and \cglb, the L-BFGS-B convergence criterion ``relative change in function value'' (\texttt{ftol}) is set to 0.

For \acgp, we need to decide on both the desired relative error, $r$, as well as the block size $m$.
We successively decrease the optimizer's tolerance \texttt{ftol}
as $(2/3)^{\text{restart}+1}$ and we set the same value for $r$.
That is, regardless of whether the optimization of \acgp stopped successfully or for abnormal reasons, the optimization restarts aiming for higher precision.
The effect of this is that, early in the hyper-parameter optimization, \acgp will stop early, thus providing only an approximation to the optimal hyper-parameter values, but also saving computations. With each restart, \acgp increases the precision, ensuring that we get closer and closer to the optimal hyper-parameter values at the expense of approaching the computational demand of an exact GP.
The block size $\blocksize$ is set to the same value as for the bound quality experiments, \cref{sec:bound_experiments}, $40\cdot 256=\num{10192}$, which is the number of cores times the \openblas block size.
This ensures that the sample size for our bounds is sufficiently large for accurate estimation, and at the same time the number of page-faults should be comparable to the default Cholesky implementation.
Note that $\blocksize$ is a global parameter, independent of the dataset.
Hence, natural choices for both $r$ and $\blocksize$ are determined by parameters of standard software, which have sensible, machine-dependent default values. \acgp can therefore be considered parameter-free.

Differing from the previous section, we use for \acgp the biased estimator $(N-M)\log p(\vec y_{:M})/M$ instead of $\U/2+\L/2$ to approximate $\log p(\vec y)$ when stopping.
Since stopping occurs when log-determinant and quadratic form evolve roughly linearly, the two estimators are not far off each other.
The main reason for using the biased estimator is of technical nature: for auto-differentiation, it is easier and faster to implement a custom backward function which can handle the in-place operations of our Cholesky implementation.
This custom backward function needs roughly a factor two of the computation of $\log p(\vec y)$ whereas the \textsc{Torch}-default needs a factor six.
This shows that when comparing to exact inference, auto-differentiation can be disadvantageous and make the Cholesky appear slower than it is.
Regarding \cglb, computation time is not dominated by the gradient but only the function evaluation itself.

\section{Additional results}
\label{sec:app_additional_results}
In this section, we report additional results for both the hyper-parameter tuning experiments (section~\ref{subsec:hyperparameters}) as well as plots to show the quality of the bounds on both the log-determinant term, the quadratic term, and the log-marginal likelihood (see \cref{subsec:bound_evolution}).

\subsection{Additional results for hyper-parameter tuning}
\label{subsec:hyperparameters}

Denote with $N_*$ the number of test instances, and with $\mu$ and $\sigma^2$ the mean and variance approximations of a method.
As performance metrics we use root mean square error (RMSE)
$$\sqrt{\frac{1}{N^*}\sum_{n=1}^{N^*} (y_n^*-\mu(\vec x_n^*))^2}\ ,$$
negative log predictive density (NLPD)
$$\frac{1}{2N^*}\sum_{n=1}^{N^*} \frac{(y_n^*-\mu(\vec x_n^*))^2}{\sigma^2(\vec x_n^*)}+\log\left(2\pi\sigma^2(\vec x_n^*)\right) \ ,$$
and the negative marginal log likelihood $-\log p(\vec y)$.
\cref{tbl:results,tbl:results_gpu} summarize the results reported for each dataset, averaging over the outcomes of the final optimization step of each repetition.
For each metric, we indicate whether a higher ($\uparrow$) or lower ($\downarrow$) value indicates a better result.

The results for the exact GP regression are marked in italics to emphasize that these are results we are trying to approach, not to beat. As the other methods are all approximations to the exact GP, there is little hope of achieving better performance.
The best result among the approximation methods for each dataset is highlighted in bold.

\begin{table}[htb]
    \centering
    \caption{Summary of the CPU hyper-parameter tuning results from \cref{sec:hyperparameter_experiments}. For each metric, we report its final value over the course of optimization. For SVGP, we did not compute the exact marginal log-likelihoods, to save cluster time.
    \label{tbl:results}}
    \sisetup{
    separate-uncertainty,
    retain-zero-uncertainty,
    drop-exponent = true,
    exponent-mode = fixed,
    text-series-to-math = true,
    detect-all = true,
}
\begin{tabular}{ll
S[table-format = 2.2(4), round-precision=2, fixed-exponent=-2]
S[table-format = +2.2(6), round-precision=3, fixed-exponent=-1]
S[table-format = +1.4(5), round-precision=3, fixed-exponent=4]
}
\toprule{\bfseries Dataset} & {\bfseries Model}  & {\bfseries RMSE / $10^{-2}$ ($\downarrow$)} & {\bfseries NLPD / $10^{-1}$ ($\downarrow$)} & {\bfseries $\log p(\vy)$ / $10^{4}$ ($\uparrow$)}\\
\midrule
\multirow{8}{*}{\METRO} & \goldstandard Exact  & \goldstandard     0.2401(26) & \goldstandard     -1.290(17) & \goldstandard       6193(243)\\
 & ACGP  & \bfseries     0.2411(27) & \bfseries     -1.283(23) & \bfseries       6078(250)\\
 & CGLB (1024)  &     0.4645(680) &      0.818(85) &     -24526(1995)\\
 & CGLB (2048)  &     0.4057(794) &      0.745(102) &     -25217(1745)\\
 & CGLB (4096)  &     0.3583(104) &      0.680(24) &     -25193(533)\\
 & SVGP (1024)  &     0.9328(35) &      1.350(4) &     -43892(52)\\
 & SVGP (2048)  &     0.9232(37) &      1.340(4) &     -43701(53)\\
 & SVGP (4096)  &     0.8898(156) &      1.304(17) &     -42950(326)\\
\midrule
\multirow{8}{*}{\PM} & \goldstandard Exact  & \goldstandard     0.4599(268) & \goldstandard      0.363(64) & \goldstandard     -20216(739)\\
 & ACGP  &     0.4445(134) & \bfseries      0.325(40) &     -19438(130)\\
 & CGLB (1024)  & \bfseries     0.3463(291) &      0.560(42) & \bfseries     -19259(909)\\
 & CGLB (2048)  &     0.4430(194) &      0.711(50) &     -23956(867)\\
 & CGLB (4096)  &     0.5306(89) &      0.789(13) &     -25681(322)\\
 & SVGP (1024)  &     0.7208(732) &      1.097(99) &     -31405(2014)\\
 & SVGP (2048)  &     0.5655(96) &      0.870(11) &     -26308(206)\\
 & SVGP (4096)  &     0.5930(1122) &      0.895(166) &     -26904(4079)\\
\midrule
\multirow{8}{*}{\KINFORTYK} & \goldstandard Exact  & \goldstandard     0.0742(7) & \goldstandard     -1.238(5) & \goldstandard      20835(41)\\
 & ACGP  & \bfseries     0.0742(7) & \bfseries     -1.238(5) & \bfseries      20835(41)\\
 & CGLB (1024)  &     0.0917(6) &     -0.713(2) &      14726(58)\\
 & CGLB (2048)  &     0.0868(6) &     -0.827(2) &      16181(48)\\
 & CGLB (4096)  &     0.0846(6) &     -0.918(2) &      16905(51)\\
 & SVGP (1024)  &     0.1394(9) &     -0.420(2) &       6124(46)\\
 & SVGP (2048)  &     0.1205(8) &     -0.568(2) &       8858(47)\\
 & SVGP (4096)  &     0.1068(7) &     -0.702(2) &      11290(43)\\
\midrule
\multirow{8}{*}{\PROTEIN} & \goldstandard Exact  & \goldstandard     0.5585(62) & \goldstandard      0.652(46) & \goldstandard     -23686(331)\\
 & ACGP  & \bfseries     0.5555(17) & \bfseries      0.629(4) & \bfseries     -23847(92)\\
 & CGLB (1024)  &     0.5758(41) &      0.851(5) &     -28194(93)\\
 & CGLB (2048)  &     0.5685(46) &      0.831(6) &     -27689(107)\\
 & CGLB (4096)  &     0.5606(44) &      0.806(6) &     -27070(108)\\
 & SVGP (1024)  &     0.6221(35) &      0.941(4) &     -29994(110)\\
 & SVGP (2048)  &     0.6004(38) &      0.900(5) &     -29049(105)\\
 & SVGP (4096)  &     0.5811(43) &      0.860(6) &     -28133(106)\\
\bottomrule
\end{tabular}
\end{table}
\begin{table}[htb]
    \centering
    \caption{Summary of the GPU hyper-parameter tuning results from \cref{sec:hyperparameter_experiments}. For each metric, we report its final value over the course of optimization. We did not compute the exact marginal log-likelihoods, to save cluster time.
    \label{tbl:results_gpu}}
		\sisetup{
    separate-uncertainty,
    retain-zero-uncertainty,
    drop-exponent = true,
    exponent-mode = fixed,
    text-series-to-math = true,
    detect-all = true,
}
\begin{tabular}{ll
S[table-format = 2.2(4), round-precision=2, fixed-exponent=-2]
S[table-format = +2.2(6), round-precision=3, fixed-exponent=-1]
S[table-format = +1.4(5), round-precision=3, fixed-exponent=4]
}
\toprule{\bfseries Dataset} & {\bfseries Model}  & {\bfseries RMSE / $10^{-2}$ ($\downarrow$)} & {\bfseries NLPD / $10^{-1}$ ($\downarrow$)} & {\bfseries $\log p(\vy)$ / $10^{4}$ ($\uparrow$)}\\
\midrule
\multirow{8}{*}{\BIKE} & \goldstandard Exact  & \goldstandard     0.0009(4) & \goldstandard     -5.032(8) & \goldstandard      49424(73)\\
 & ACGP  & \bfseries     0.0021(10) & \bfseries     -5.031(3) & \bfseries      49321(19)\\
 & CGLB (1024)  &     0.0053(6) &     -3.324(82) &      34946(503)\\
 & CGLB (2048)  &     0.0032(4) &     -3.781(51) &      38572(487)\\
 & CGLB (4096)  &     0.0038(15) &     -4.123(108) &      41049(1105)\\
 & SVGP (1024)  &     0.0127(7) &     -2.663(38) &      27446(447)\\
 & SVGP (2048)  &     0.0093(12) &     -3.048(59) &      30752(516)\\
 & SVGP (4096)  &     0.0090(23) &     -3.240(151) &      31980(1411)\\
\midrule
\multirow{8}{*}{\POLETELECOMM} & \goldstandard Exact  & \goldstandard     0.0813(41) & \goldstandard     -0.781(266) & \goldstandard       8423(874)\\
 & ACGP  & \bfseries     0.0730(23) & \bfseries     -1.232(16) & \bfseries      10149(86)\\
 & CGLB (1024)  &     0.0790(15) &     -1.073(4) &       8705(52)\\
 & CGLB (2048)  &     0.0765(17) &     -1.146(6) &       9238(51)\\
 & CGLB (4096)  &     0.0739(18) &     -1.212(9) &       9822(60)\\
 & SVGP (1024)  &     0.0920(11) &     -0.909(3) &       7378(54)\\
 & SVGP (2048)  &     0.0824(15) &     -1.048(5) &       8450(47)\\
 & SVGP (4096)  &     0.0753(17) &     -1.181(7) &       9543(55)\\
\midrule
\multirow{8}{*}{\ELEVATORS} & \goldstandard Exact  & \goldstandard     0.3512(36) & \goldstandard      0.378(9) & \goldstandard      -4690(48)\\
 & ACGP  & \bfseries     0.3479(24) & \bfseries      0.370(7) & \bfseries      -4653(30)\\
 & CGLB (1024)  &     0.3542(40) &      0.386(10) &      -4714(45)\\
 & CGLB (2048)  &     0.3526(37) &      0.381(10) &      -4699(46)\\
 & CGLB (4096)  &     0.3541(47) &      0.386(13) &      -4703(51)\\
 & SVGP (1024)  &     0.3562(37) &      0.391(9) &      -4746(42)\\
 & SVGP (2048)  &     0.3542(36) &      0.386(9) &      -4715(49)\\
 & SVGP (4096)  &     0.3523(35) &      0.381(9) &      -4697(48)\\
\midrule
\multirow{8}{*}{\PUMADYN} & \goldstandard Exact  & \goldstandard     0.2255(103) & \goldstandard     -0.063(51) & \goldstandard         88(261)\\
 & ACGP  &     0.2285(8) &     -0.045(1) &        -16(17)\\
 & CGLB (1024)  & \bfseries     0.2055(5) & \bfseries     -0.163(3) & \bfseries        559(30)\\
 & CGLB (2048)  &     0.2197(287) &     -0.104(120) &        259(620)\\
 & CGLB (4096)  &     0.2055(6) &     -0.163(3) &        555(46)\\
 & SVGP (1024)  &     0.9865(122) &      1.406(12) &      -7749(0)\\
 & SVGP (2048)  &     0.9865(122) &      1.406(12) &      -7749(0)\\
 & SVGP (4096)  &     0.9865(122) &      1.406(12) &      -7749(0)\\
\bottomrule
\end{tabular}
\end{table}

\clearpage
\subsection{Additional plots for hyper-parameter tuning}
\label{subsec:hyperparameter_plots}
The plots for the hyper-parameter optimization are shown in figures~\ref{fig:hyp_metro_lml}--\ref{fig:hyp_pumadyn_nlpd}. Each point in the plots corresponds to one accepted optimization step for the given methods. Each point thus corresponds to a particular set of hyper-parameters during the optimization.
In figures~\ref{fig:hyp_metro_rmse}--\ref{fig:hyp_pumadyn_rmse}, we show the root-mean-square error, RMSE, that each methods obtains on the test set at each optimisation step, and figures ~\ref{fig:hyp_metro_nlpd}--\ref{fig:hyp_pumadyn_nlpd} show the same for NLPD.
In figures~\ref{fig:hyp_metro_lml}--\ref{fig:hyp_pumadyn_lml}, we show the log-marginal likelihood, $\log p(\vy)$, that an exact GP would have achieved with the specific set of hyper-parameters at each optimization step for each method.

\begin{figure}[htb]
\begin{minipage}[b]{.5\textwidth}
    \centering
    \includegraphics[width=0.9\textwidth]{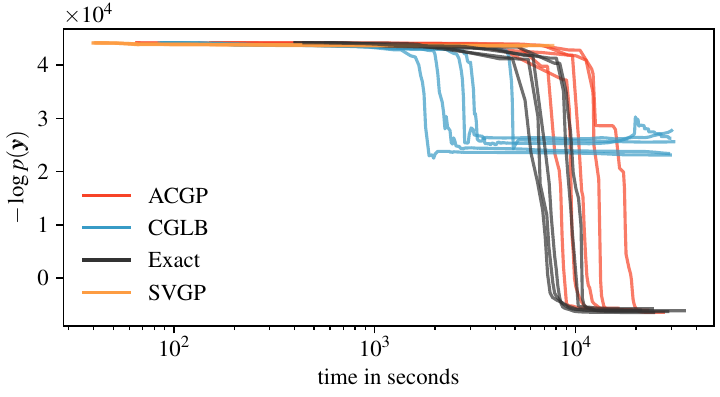}
    \caption{$\log p(\vy)$ for the \texttt{metro} dataset.}
    \label{fig:hyp_metro_lml}
\end{minipage}
\begin{minipage}[b]{.5\textwidth}
    \centering
    \includegraphics[width=0.9\textwidth]{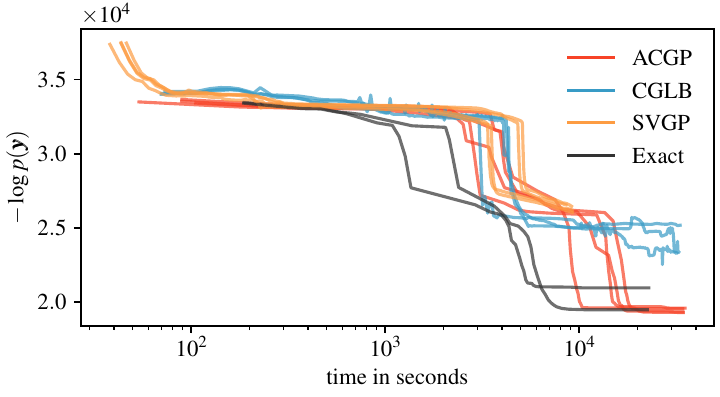}
    \caption{$\log p(\vy)$ for the \texttt{pm25} dataset.}
    \label{fig:hyp_pm25_lml}
\end{minipage}\\[5mm]
\begin{minipage}[b]{.5\textwidth}
    \centering
    \includegraphics[width=0.9\textwidth]{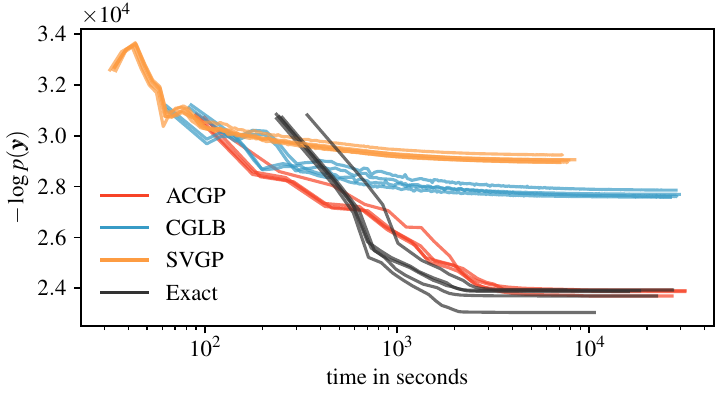}
    \caption{$\log p(\vy)$ for the \texttt{protein} dataset.}
    \label{fig:hyp_protein_lml}
\end{minipage}
\begin{minipage}[b]{.5\textwidth}
    \centering
    \includegraphics[width=0.9\textwidth]{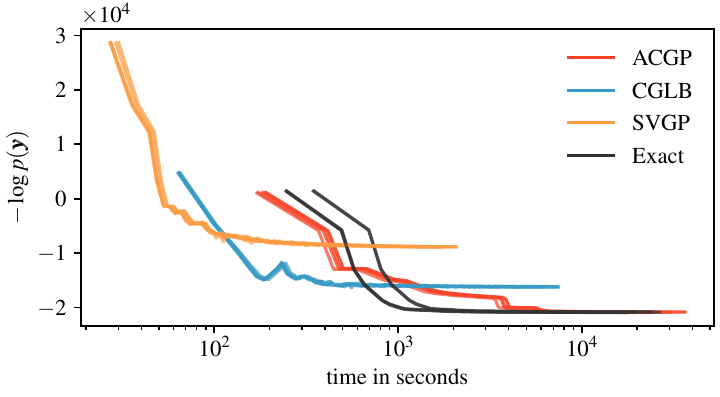}
    \caption{$\log p(\vy)$ for the \texttt{kin40k} dataset.}
    \label{fig:hyp_kin40k_lml}
\end{minipage}\\[5mm]
\begin{minipage}[b]{.5\textwidth}
\centering
\includegraphics[width=0.9\textwidth]{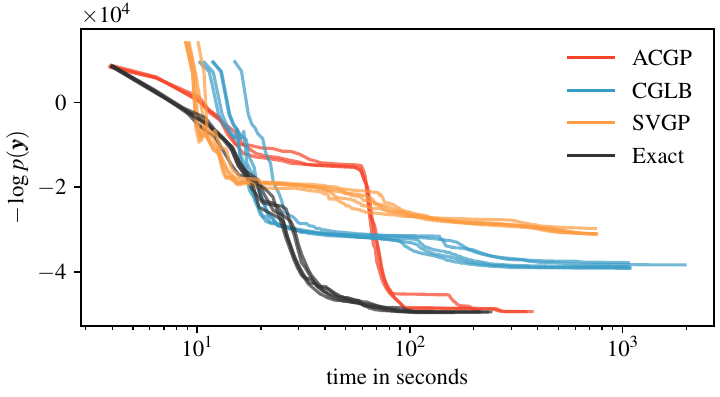}
\caption{$\log p(\vy)$ for the \texttt{bike} dataset.}
\label{fig:hyp_bike_lml}
\end{minipage}
\begin{minipage}[b]{.5\textwidth}
\centering
\includegraphics[width=0.9\textwidth]{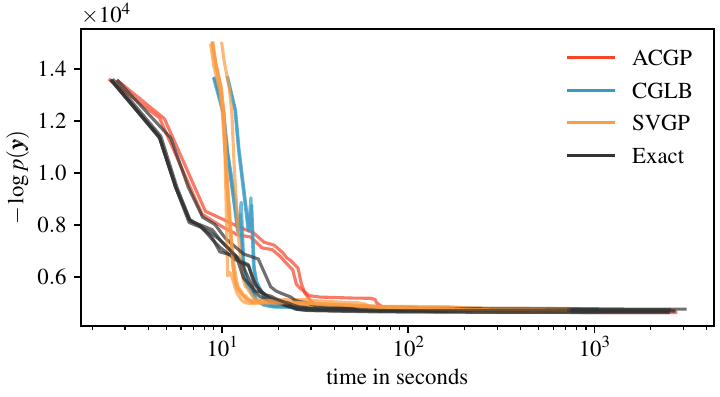}
\caption{$\log p(\vy)$ for the \texttt{elevators} dataset.}
\label{fig:hyp_elevators_lml}
\end{minipage}\\[5mm]
\begin{minipage}[b]{.5\textwidth}
\centering
\includegraphics[width=0.9\textwidth]{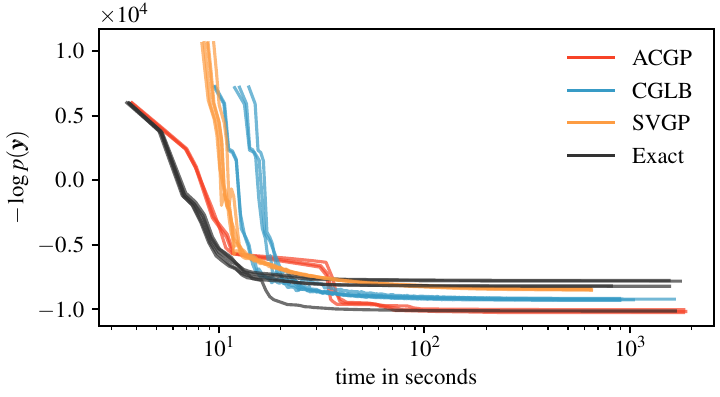}
\caption{$\log p(\vy)$ for the \texttt{pole} dataset.}
\label{fig:hyp_pole_lml}
\end{minipage}
\begin{minipage}[b]{.5\textwidth}
\centering
\includegraphics[width=0.9\textwidth]{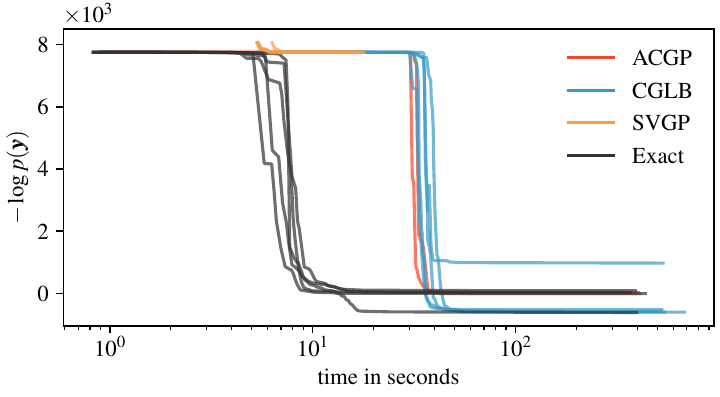}
\caption{$\log p(\vy)$ for the \texttt{pumadyn32nm} dataset.}
\label{fig:hyp_pumadyn_lml}
\end{minipage}
\end{figure}

\begin{figure}[htb]
\begin{minipage}[b]{.5\textwidth}
    \centering
    \includegraphics[width=0.9\textwidth]{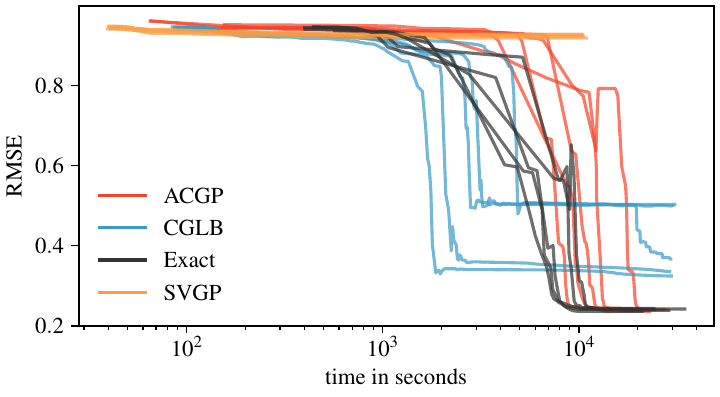}
    \caption{RMSE for the \texttt{metro} dataset.}
    \label{fig:hyp_metro_rmse}
\end{minipage}
\begin{minipage}[b]{.5\textwidth}
    \centering
    \includegraphics[width=0.9\textwidth]{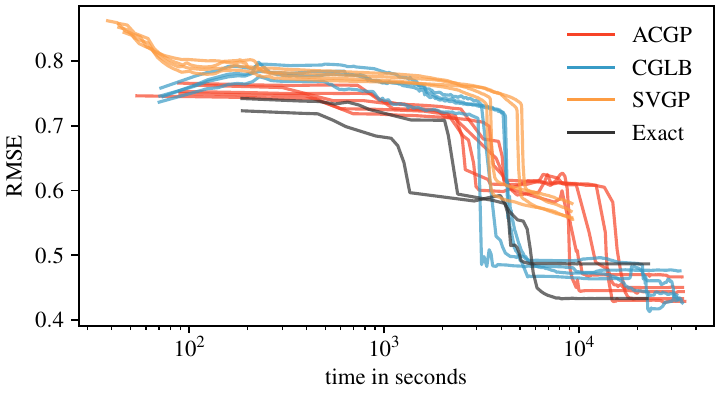}
    \caption{RMSE for the \texttt{pm25} dataset.}
    \label{fig:hyp_pm25_rmse}
\end{minipage}\\[5mm]
\begin{minipage}[b]{.5\textwidth}
    \centering
    \includegraphics[width=0.9\textwidth]{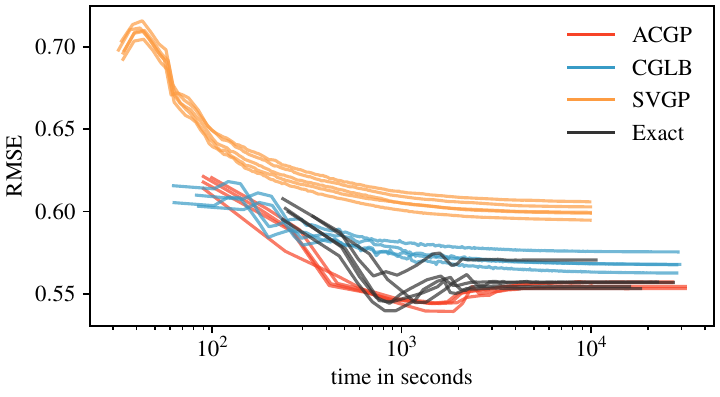}
    \caption{RMSE for the \texttt{protein} dataset.}
    \label{fig:hyp_protein_rmse}
\end{minipage}
\begin{minipage}[b]{.5\textwidth}
    \centering
    \includegraphics[width=0.9\textwidth]{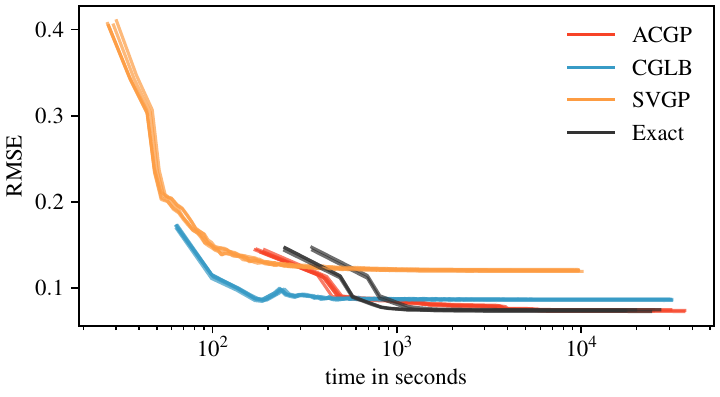}
    \caption{RMSE for the \texttt{kin40k} dataset.}
    \label{fig:hyp_kin40k_rmse}
\end{minipage}\\[5mm]
\begin{minipage}[b]{.5\textwidth}
	\centering
	\includegraphics[width=0.9\textwidth]{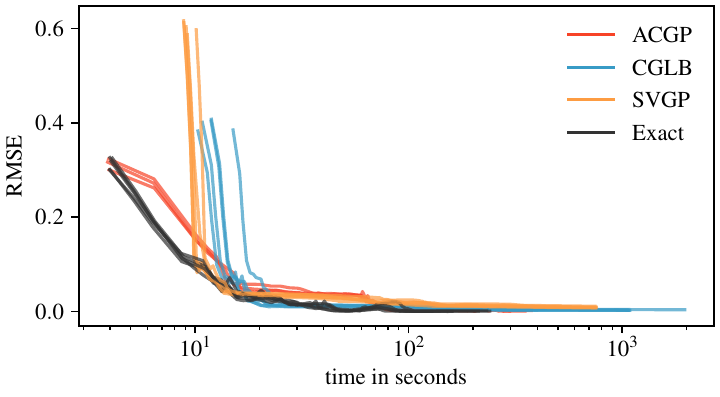}
	\caption{RMSE for the \texttt{bike} dataset.}
	\label{fig:hyp_bike_rmse}
\end{minipage}
\begin{minipage}[b]{.5\textwidth}
	\centering
	\includegraphics[width=0.9\textwidth]{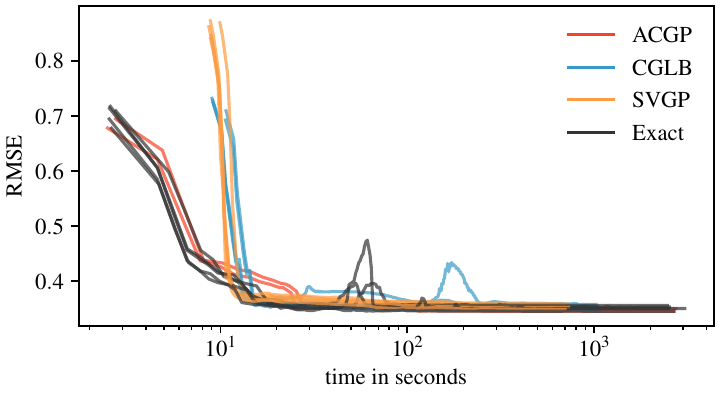}
	\caption{RMSE for the \texttt{elevators} dataset.}
	\label{fig:hyp_elevators_rmse}
\end{minipage}\\[5mm]
\begin{minipage}[b]{.5\textwidth}
	\centering
	\includegraphics[width=0.9\textwidth]{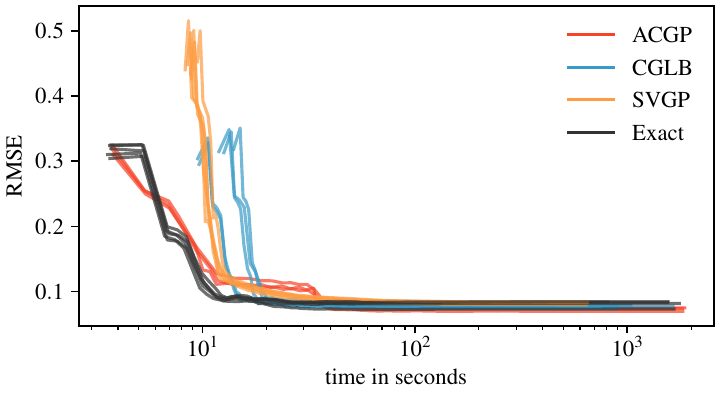}
	\caption{RMSE for the \texttt{pole} dataset.}
	\label{fig:hyp_pole_rmse}
\end{minipage}
\begin{minipage}[b]{.5\textwidth}
	\centering
	\includegraphics[width=0.9\textwidth]{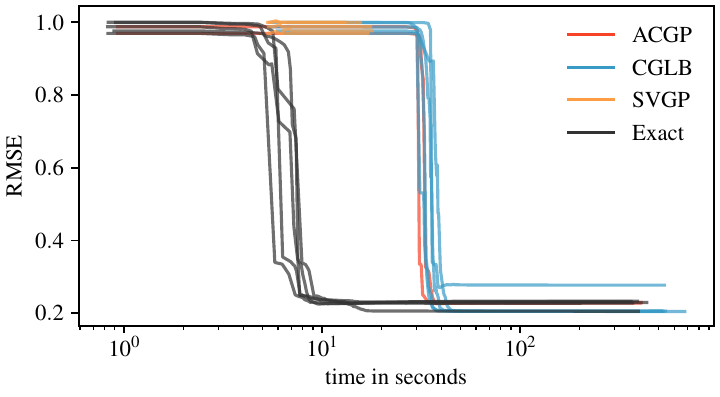}
	\caption{RMSE for the \texttt{pumadyn32nm} dataset.}
	\label{fig:hyp_pumadyn_rmse}
\end{minipage}
\end{figure}

\begin{figure}[htb]
	\begin{minipage}[b]{.5\textwidth}
		\centering
		\includegraphics[width=0.9\textwidth]{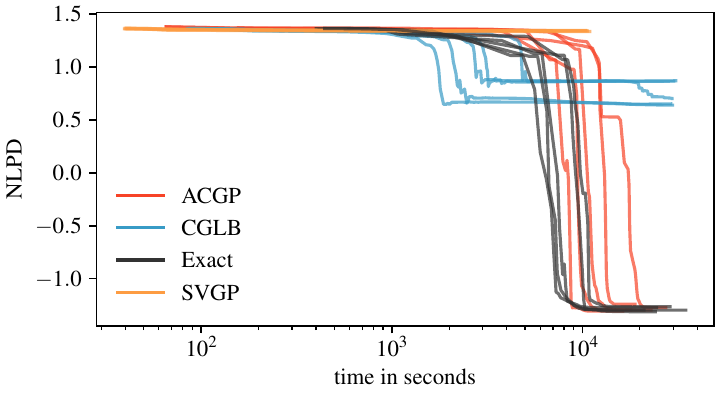}
		\caption{NLPD for the \texttt{metro} dataset.}
		\label{fig:hyp_metro_nlpd}
	\end{minipage}
	\begin{minipage}[b]{.5\textwidth}
		\centering
		\includegraphics[width=0.9\textwidth]{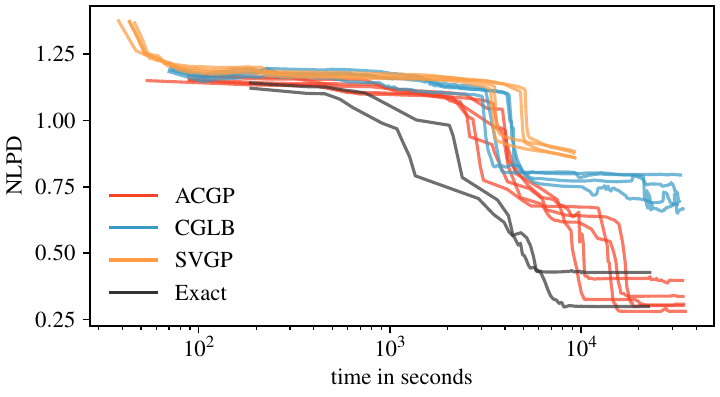}
		\caption{NLPD for the \texttt{pm25} dataset.}
		\label{fig:hyp_pm25_nlpd}
	\end{minipage}\\[5mm]
	\begin{minipage}[b]{.5\textwidth}
		\centering
		\includegraphics[width=0.9\textwidth]{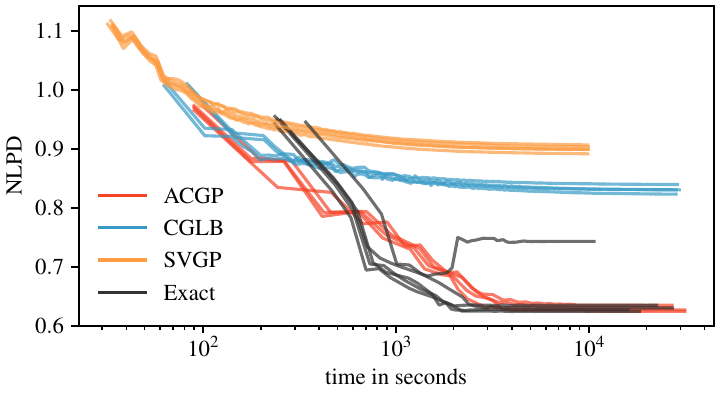}
		\caption{NLPD for the \texttt{protein} dataset.}
		\label{fig:hyp_protein_nlpd}
	\end{minipage}
	\begin{minipage}[b]{.5\textwidth}
		\centering
		\includegraphics[width=0.9\textwidth]{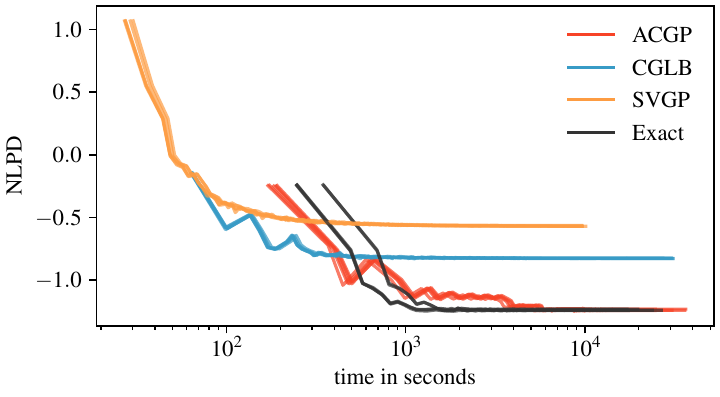}
		\caption{NLPD for the \texttt{kin40k} dataset.}
		\label{fig:hyp_kin40k_nlpd}
	\end{minipage}\\[5mm]
	\begin{minipage}[b]{.5\textwidth}
		\centering
		\includegraphics[width=0.9\textwidth]{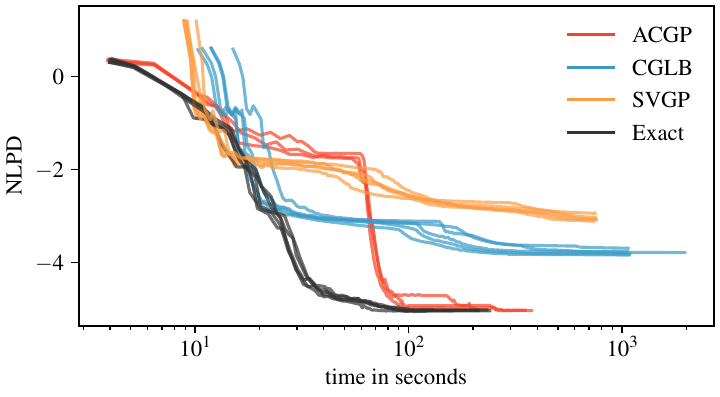}
		\caption{NLPD for the \texttt{bike} dataset.}
		\label{fig:hyp_bike_nlpd}
	\end{minipage}
	\begin{minipage}[b]{.5\textwidth}
		\centering
		\includegraphics[width=0.9\textwidth]{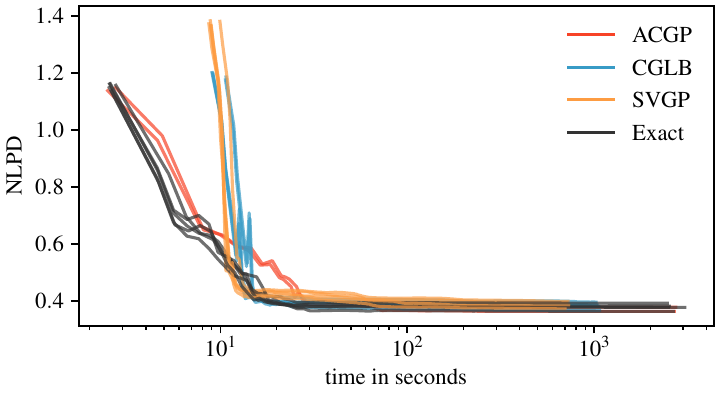}
		\caption{NLPD for the \texttt{elevators} dataset.}
		\label{fig:hyp_elevators_nlpd}
	\end{minipage}\\[5mm]
	\begin{minipage}[b]{.5\textwidth}
		\centering
		\includegraphics[width=0.9\textwidth]{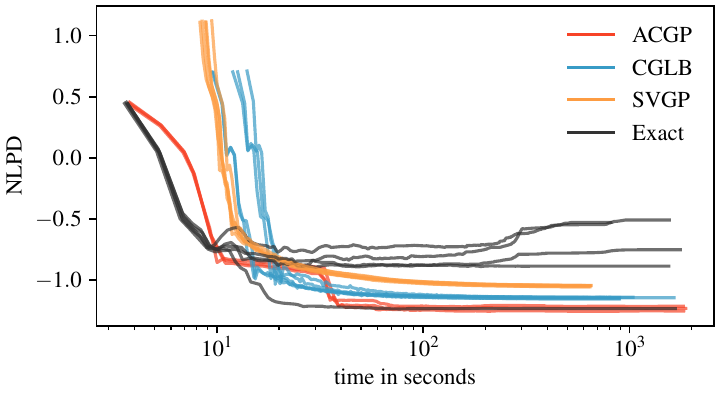}
		\caption{NLPD for the \texttt{pole} dataset.}
		\label{fig:hyp_pole_nlpd}
	\end{minipage}
	\begin{minipage}[b]{.5\textwidth}
		\centering
		\includegraphics[width=0.9\textwidth]{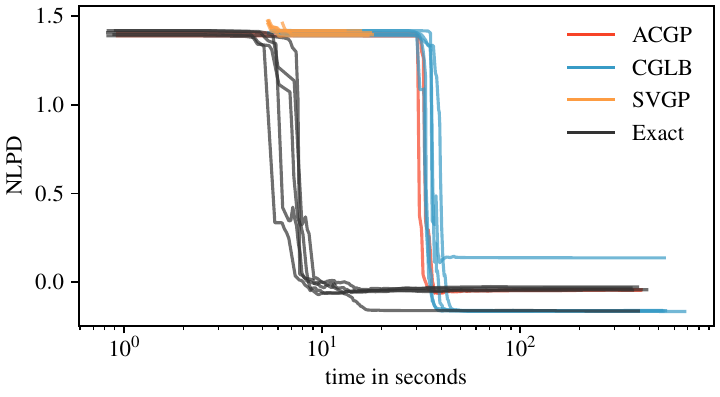}
		\caption{NLPD for the \texttt{pumadyn32nm} dataset.}
		\label{fig:hyp_pumadyn_nlpd}
	\end{minipage}
\end{figure}

\clearpage
\subsection{Additional plots for the bound quality experiments}
\label{subsec:bound_evolution}
\subsubsection{Bounds for experiments on \texttt{metro}}
\label{subsec:bounds_metro}
\newcommand{\subfigwidth}{0.95\textwidth}
\vspace*{-3mm}
\begin{figure}[htb!]
   \begin{minipage}[b]{.5\textwidth}
      \centering
      \includegraphics[width=\subfigwidth]{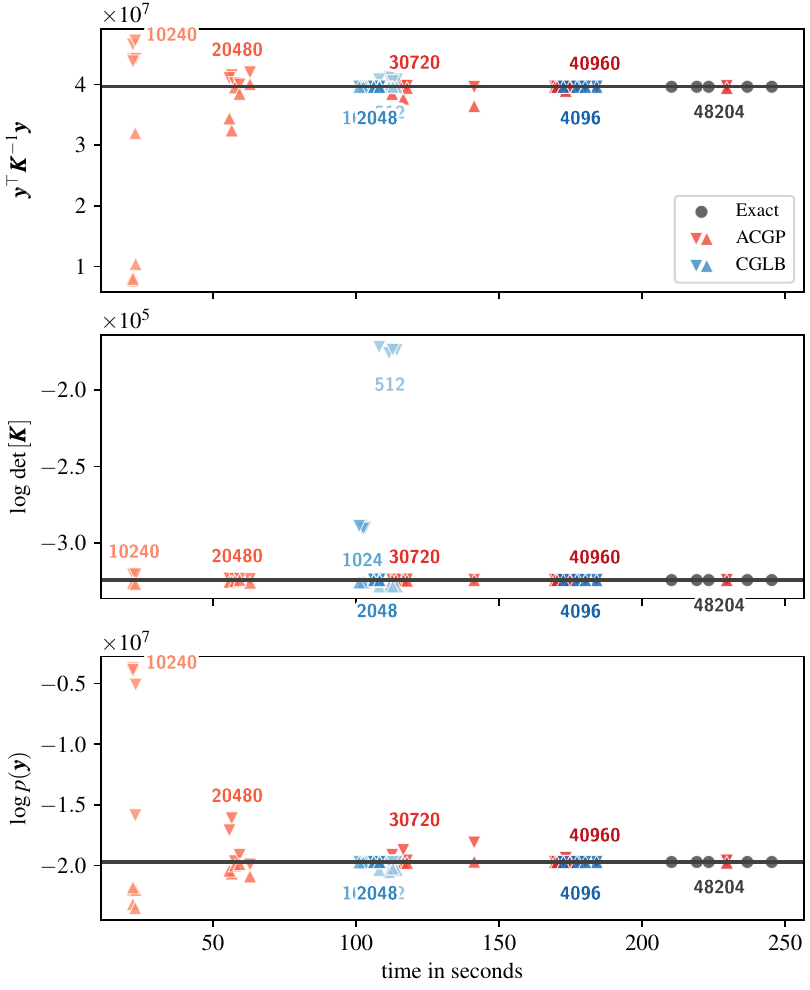}
      \subcaption{SE kernel, $\log\ell = -1$.}
      \label{subfig:metro_rbf_-1}
   \end{minipage}
   \begin{minipage}[b]{.5\textwidth}
      \centering
      \includegraphics[width=\subfigwidth]{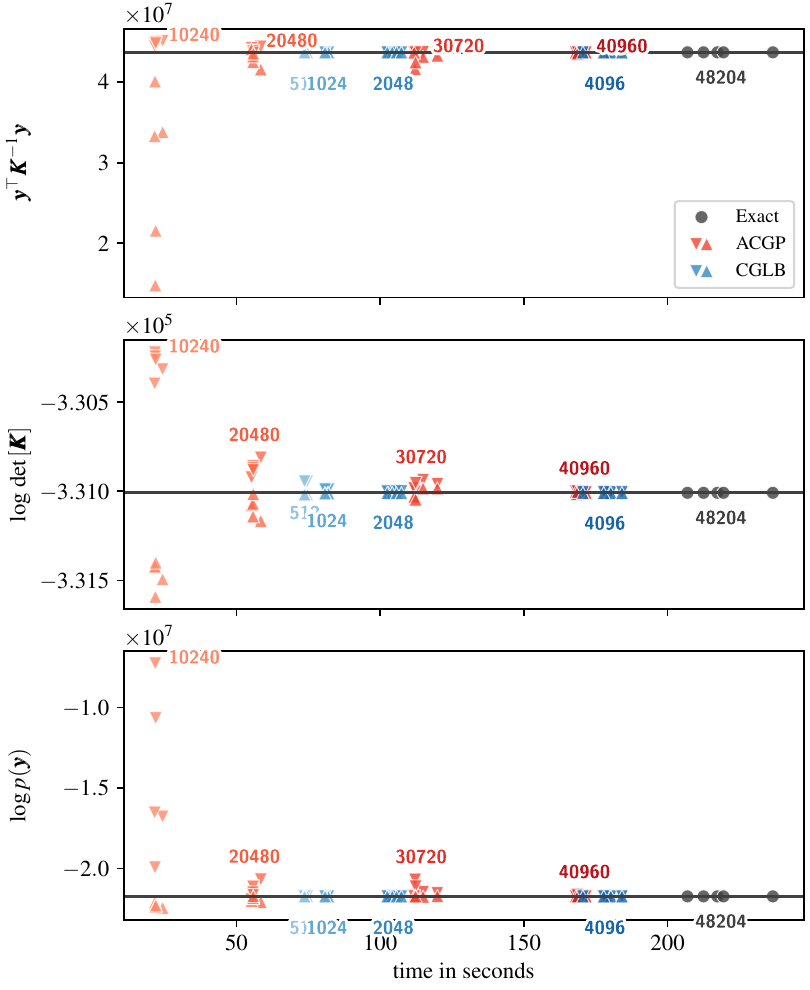}
      \subcaption{SE kernel, $\log\ell = 0$.}
      \label{subfig:metro_rbf_0}
   \end{minipage}
   \begin{minipage}[b]{.5\textwidth}
      \centering
      \includegraphics[width=\subfigwidth]{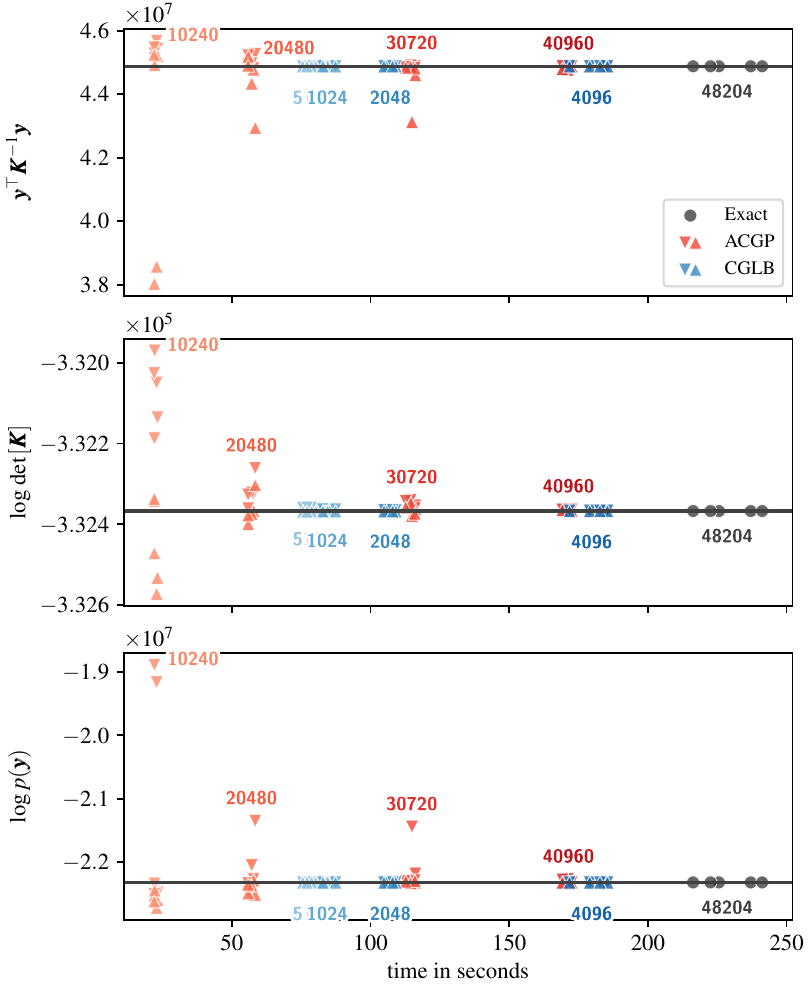}
      \subcaption{SE kernel, $\log\ell = 1$.}
      \label{subfig:metro_rbf_1}
   \end{minipage}
   \begin{minipage}[b]{.5\textwidth}
      \centering
      \includegraphics[width=\subfigwidth]{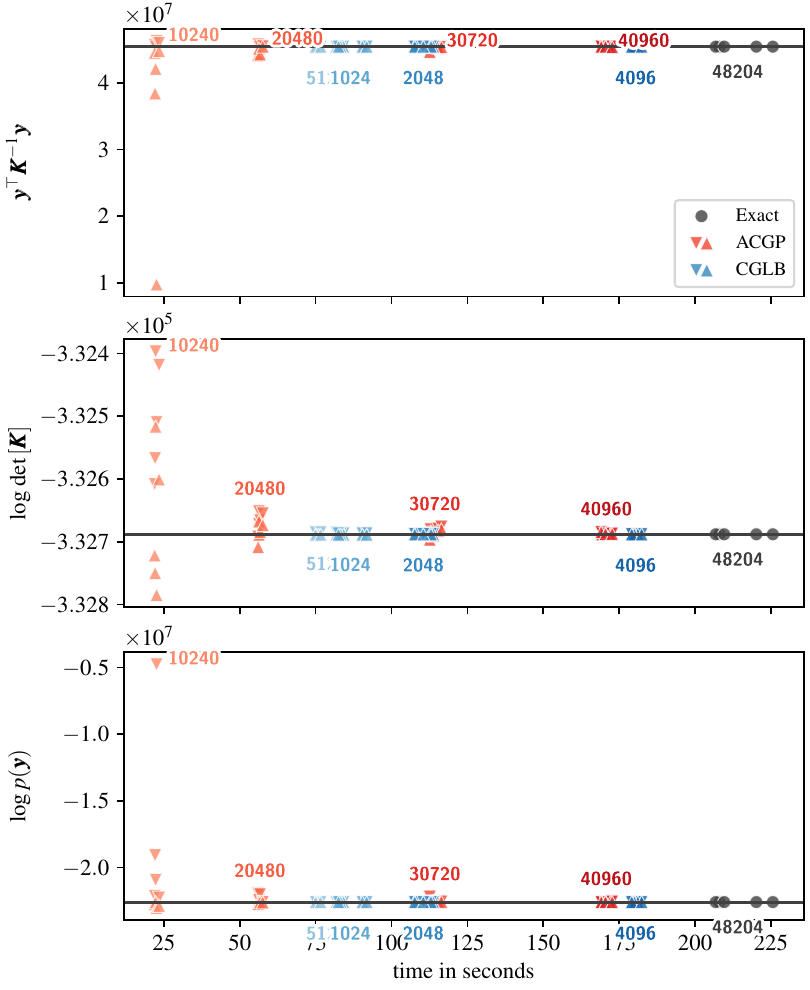}
      \subcaption{SE kernel, $\log\ell = 2$.}
      \label{subfig:metro_rbf_2}
   \end{minipage}
   \caption{Upper and lower bounds for the \texttt{metro} dataset when using a squared exponential (SE) kernel.}
	\label{fig:bounds_metro_rbf}
\end{figure}
\clearpage
\begin{figure}[htb!]
   \begin{minipage}[b]{.5\textwidth}
      \centering
      \includegraphics[width=\subfigwidth]{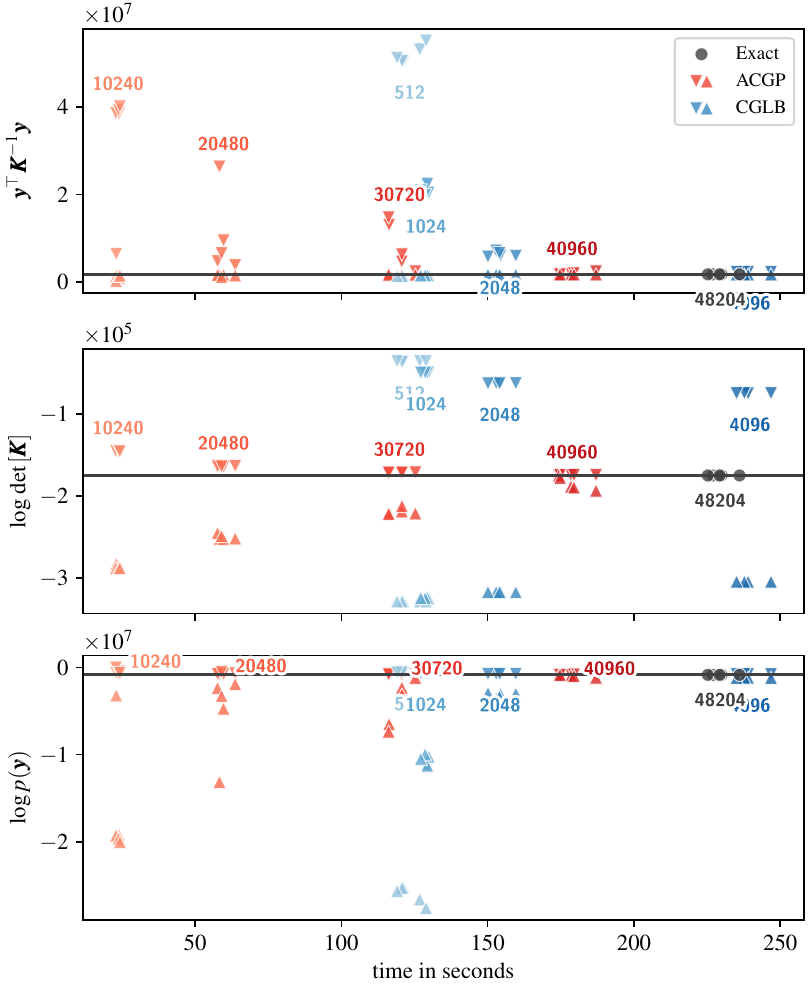}
      \subcaption{OU kernel, $\log\ell = -1$.}
      \label{subfig:metro_ou_-1}
   \end{minipage}
   \begin{minipage}[b]{.5\textwidth}
      \centering
      \includegraphics[width=\subfigwidth]{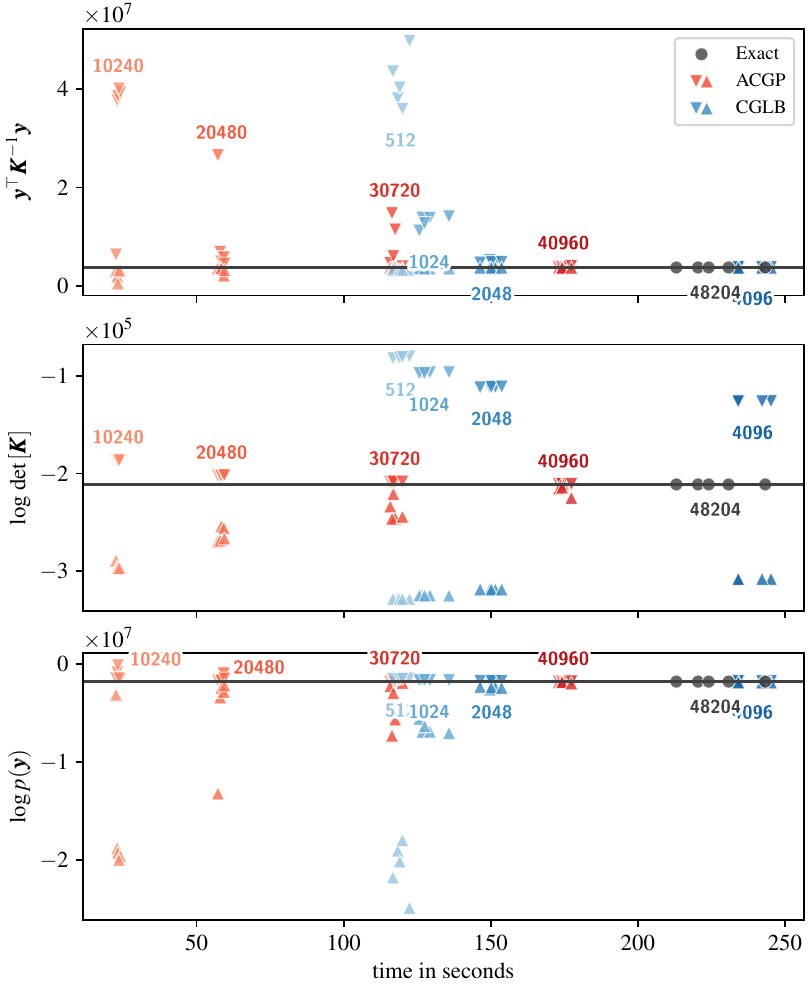}
      \subcaption{OU kernel, $\log\ell = 0$.}
      \label{subfig:metro_ou_0}
   \end{minipage}
   \begin{minipage}[b]{.5\textwidth}
      \centering
      \includegraphics[width=\subfigwidth]{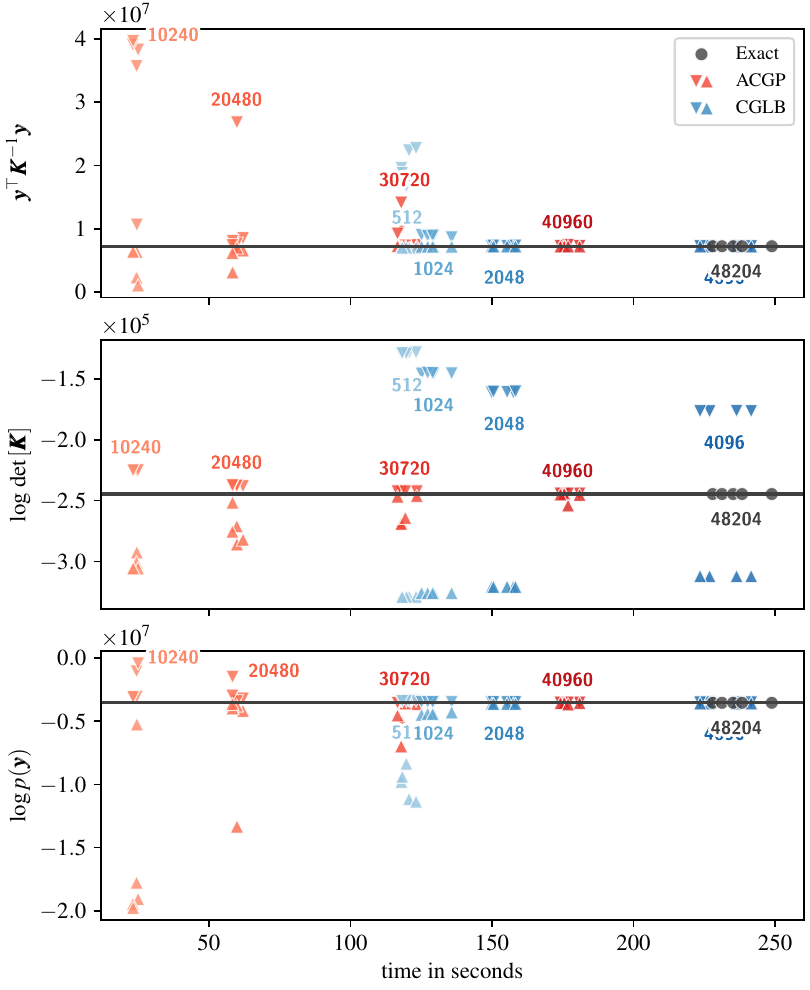}
      \subcaption{OU kernel, $\log\ell = 1$.}
      \label{subfig:metro_ou_1}
   \end{minipage}
   \begin{minipage}[b]{.5\textwidth}
      \centering
      \includegraphics[width=\subfigwidth]{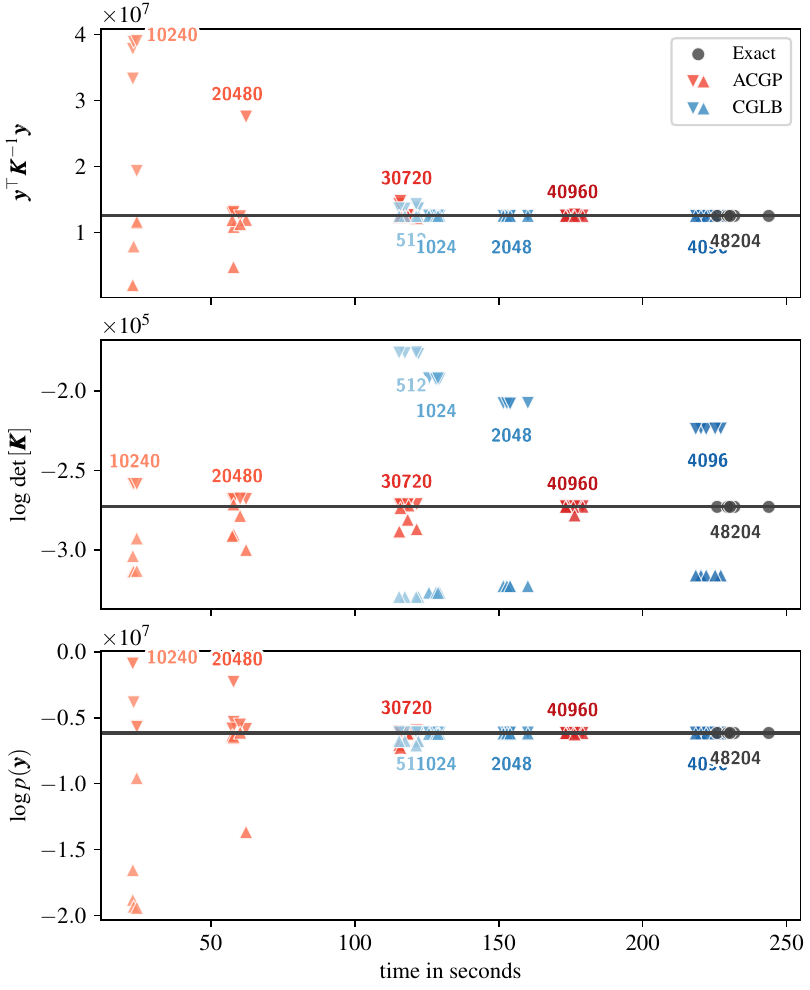}
      \subcaption{OU kernel, $\log\ell = 2$.}
      \label{subfig:metro_ou_2}
   \end{minipage}
   \caption{Upper and lower bounds for the \texttt{metro} dataset when using a Ornstein-Uhlenbeck (OU) kernel.}
	\label{fig:bounds_metro_ou}
\end{figure}
\clearpage

\subsubsection{Bounds for experiments on \texttt{pm25}}
\label{subsec:bounds_pm25}
\vspace*{-3mm}
\begin{figure}[htb!]
   \begin{minipage}[b]{.5\textwidth}
      \centering
      \includegraphics[width=\subfigwidth]{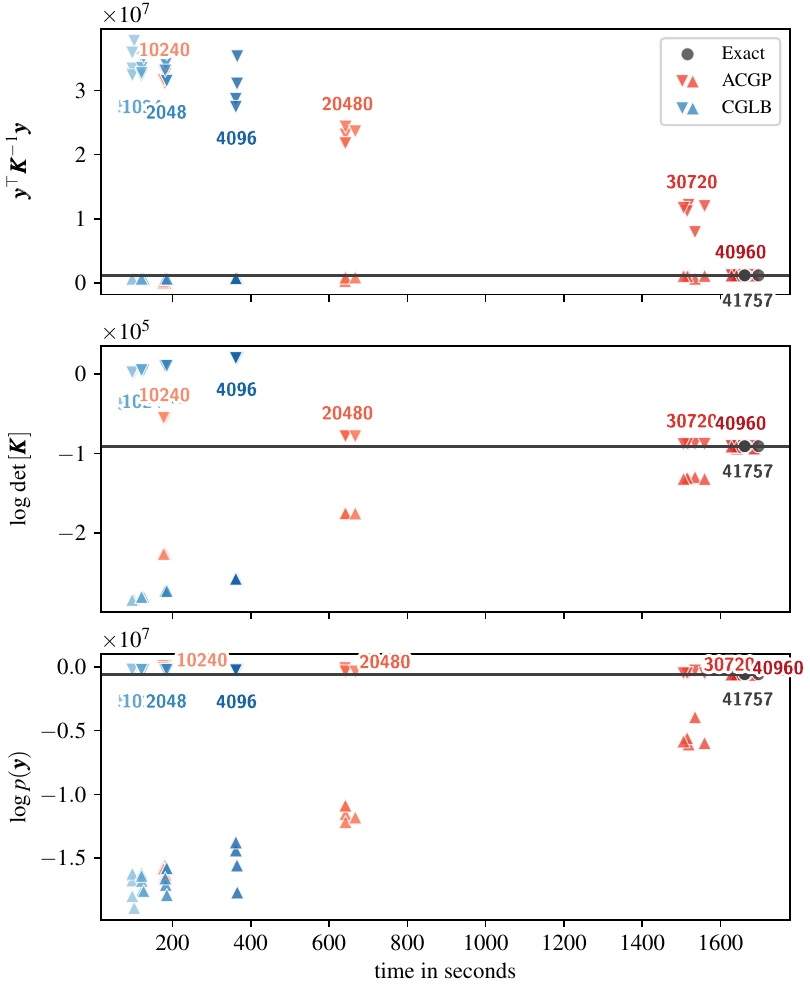}
      \subcaption{SE kernel, $\log\ell = -1$.}
      \label{subfig:pm25_rbf_-1}
   \end{minipage}
   \begin{minipage}[b]{.5\textwidth}
      \centering
      \includegraphics[width=\subfigwidth]{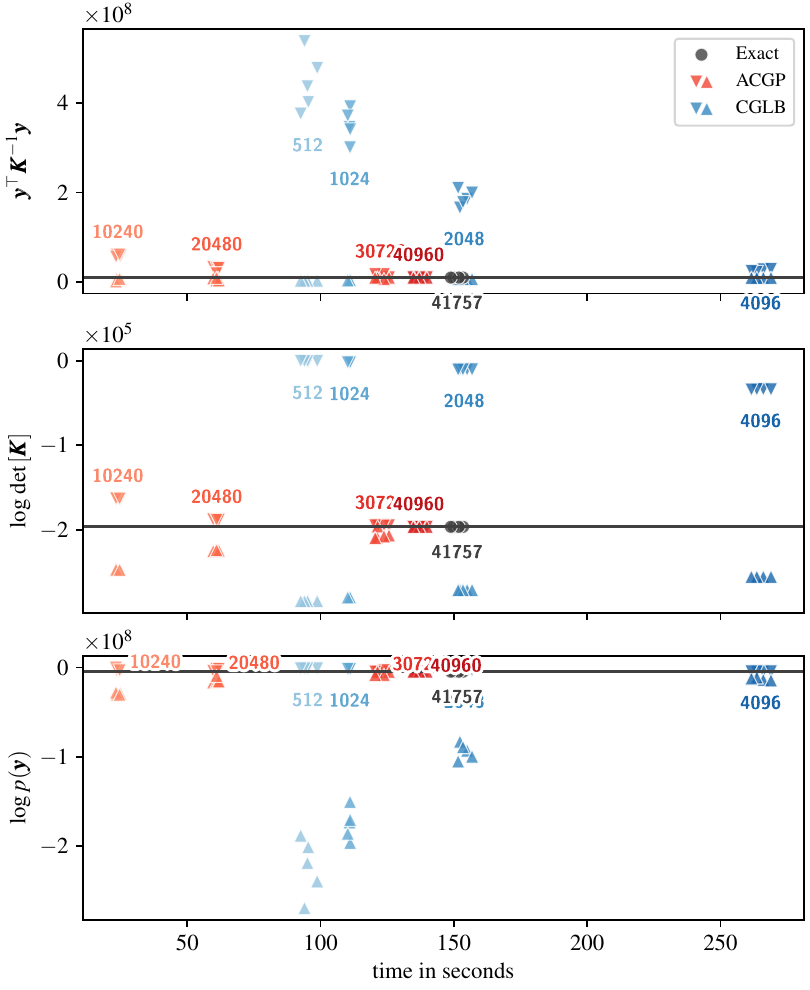}
      \subcaption{SE kernel, $\log\ell = 0$.}
      \label{subfig:pm25_rbf_0}
   \end{minipage}
   \begin{minipage}[b]{.5\textwidth}
      \centering
      \includegraphics[width=\subfigwidth]{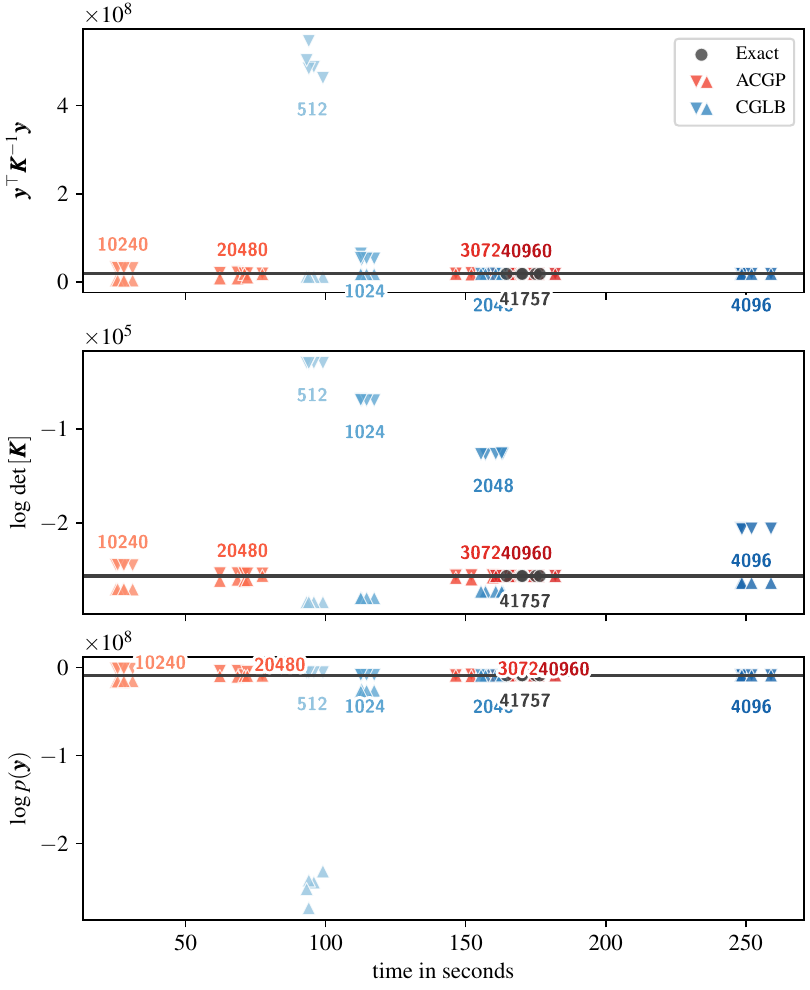}
      \subcaption{SE kernel, $\log\ell = 1$.}
      \label{subfig:pm25_rbf_1}
   \end{minipage}
   \begin{minipage}[b]{.5\textwidth}
      \centering
      \includegraphics[width=\subfigwidth]{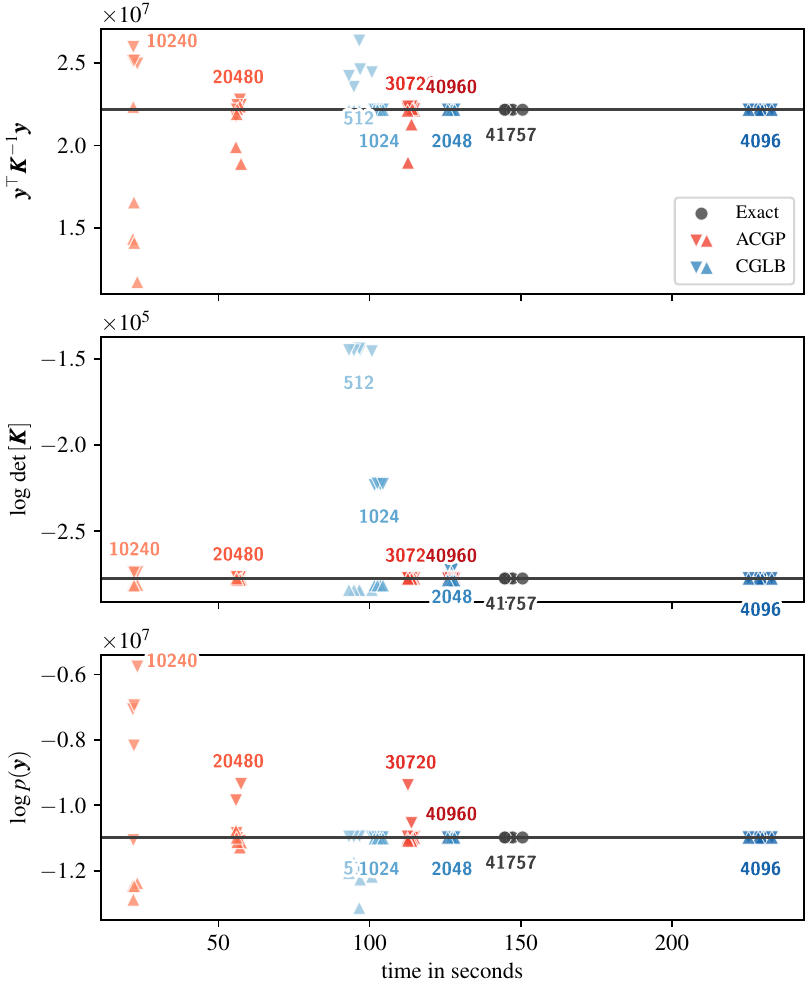}
      \subcaption{SE kernel, $\log\ell = 2$.}
      \label{subfig:pm25_rbf_2}
   \end{minipage}
   \caption{Upper and lower bounds for the \texttt{pm25} dataset when using a squared exponential (SE) kernel.}
	\label{fig:bounds_pm25_rbf}
\end{figure}
\clearpage
\begin{figure}[htb!]
   \begin{minipage}[b]{.5\textwidth}
      \centering
      \includegraphics[width=\subfigwidth]{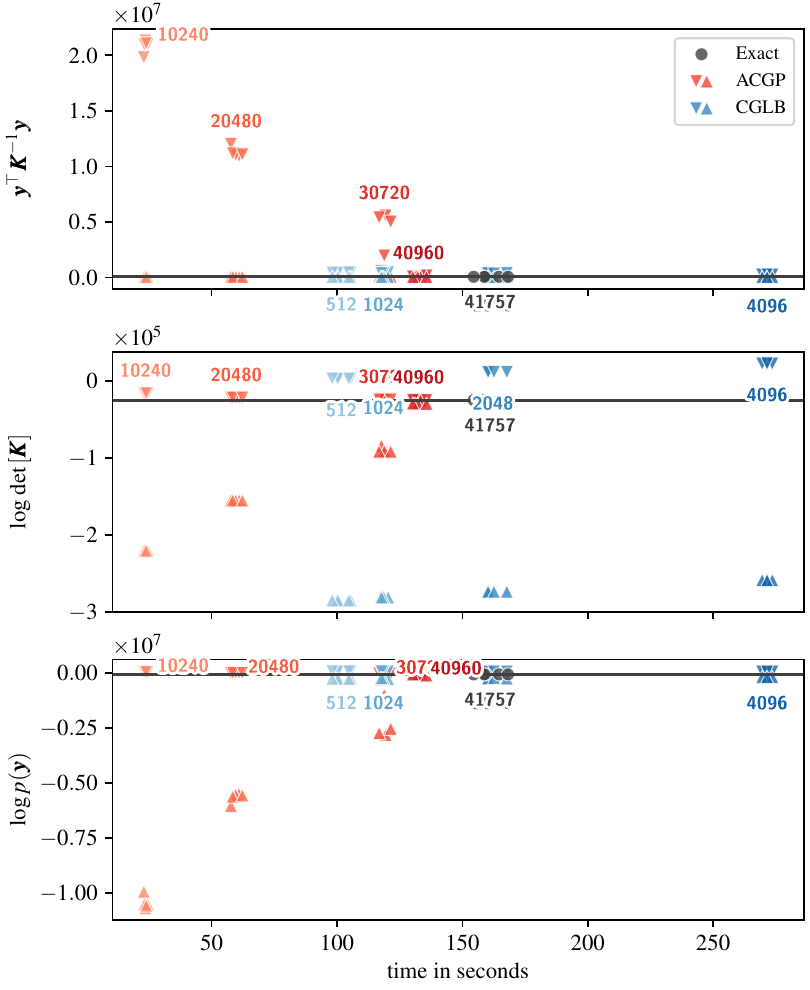}
      \subcaption{OU kernel, $\log\ell = -1$.}
      \label{subfig:pm25_ou_-1}
   \end{minipage}
   \begin{minipage}[b]{.5\textwidth}
      \centering
      \includegraphics[width=\subfigwidth]{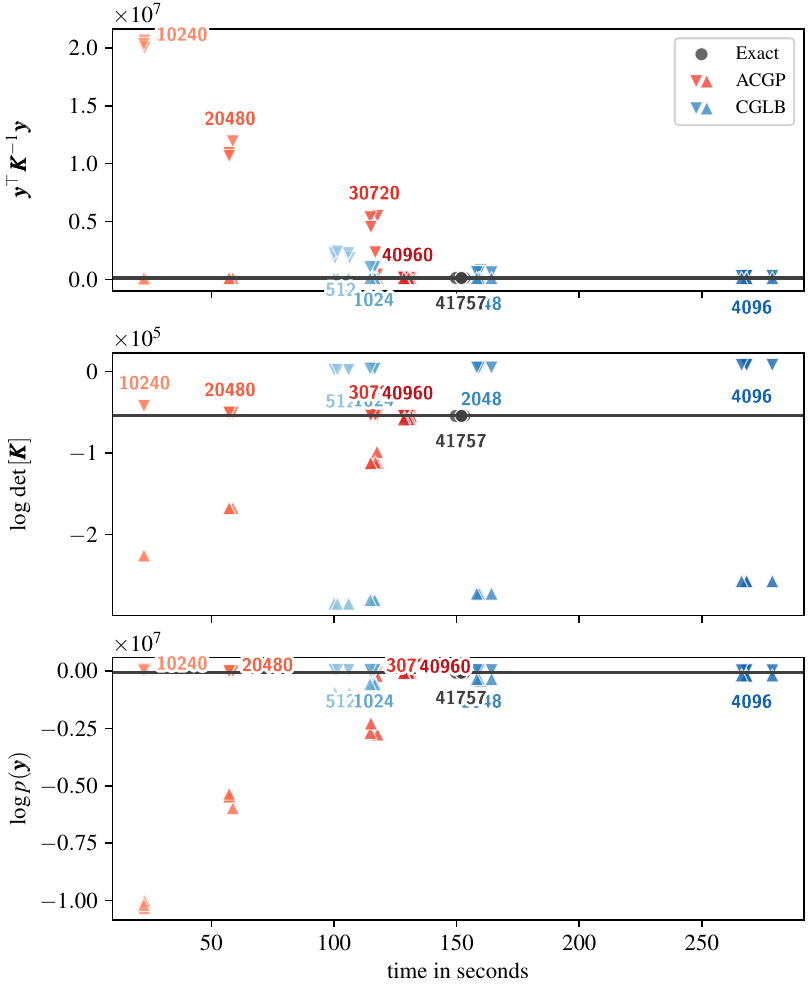}
      \subcaption{OU kernel, $\log\ell = 0$.}
      \label{subfig:pm25_ou_0}
   \end{minipage}
   \begin{minipage}[b]{.5\textwidth}
      \centering
      \includegraphics[width=\subfigwidth]{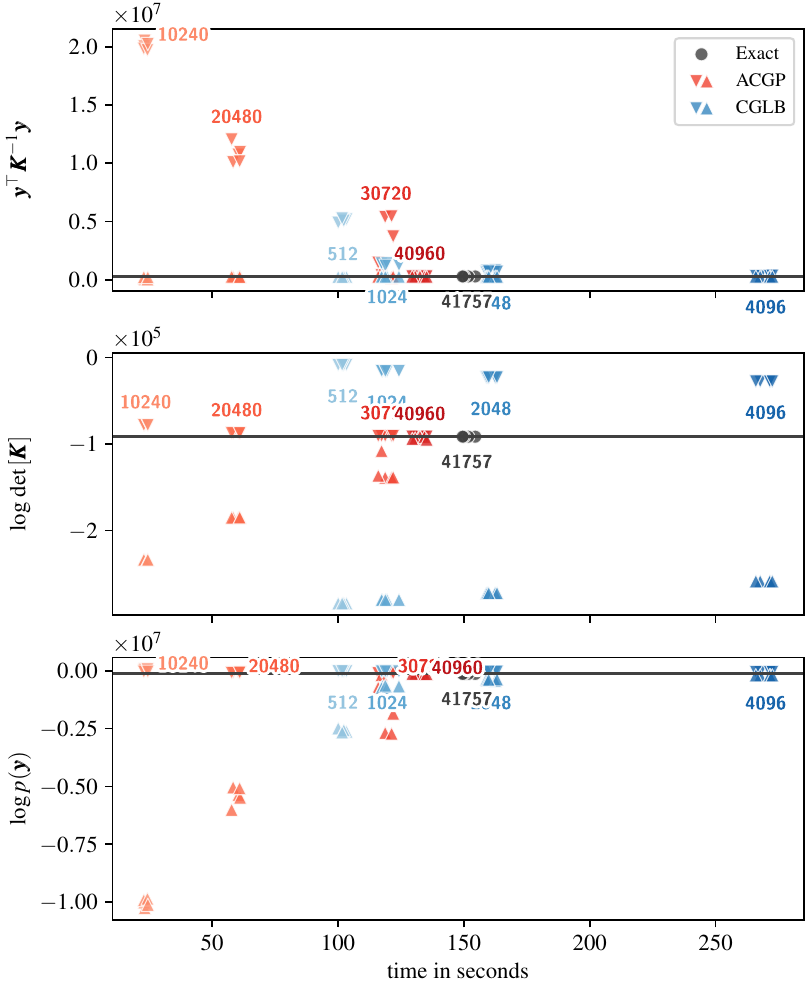}
      \subcaption{OU kernel, $\log\ell = 1$.}
      \label{subfig:pm25_ou_1}
   \end{minipage}
   \begin{minipage}[b]{.5\textwidth}
      \centering
      \includegraphics[width=\subfigwidth]{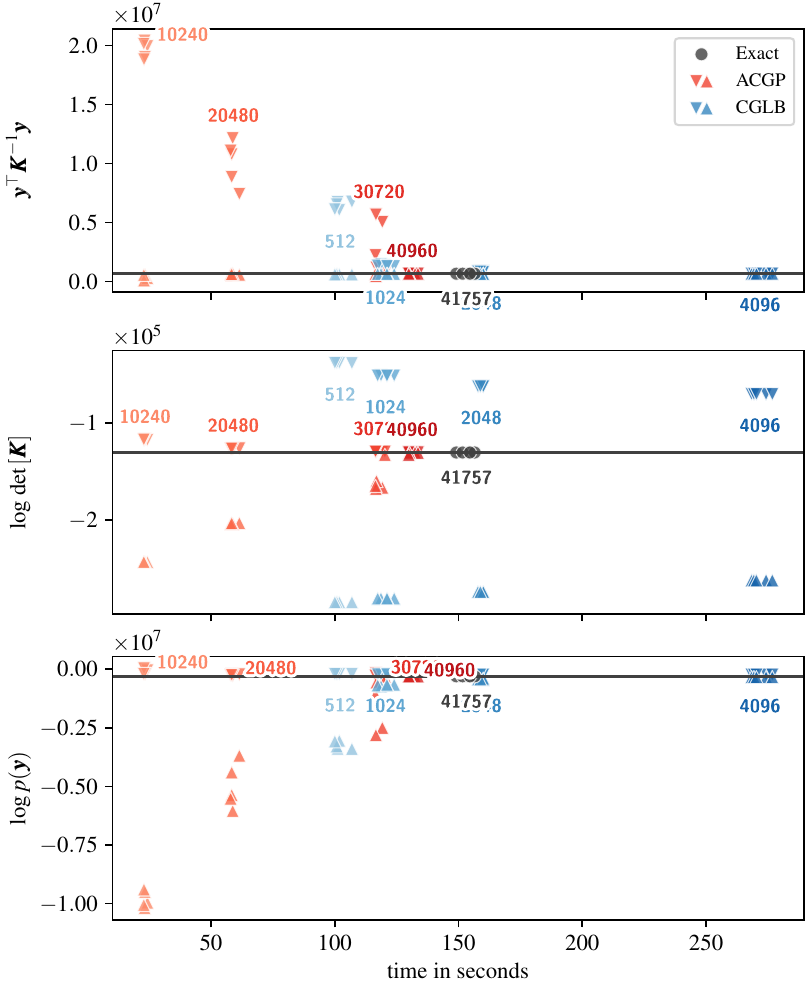}
      \subcaption{OU kernel, $\log\ell = 2$.}
      \label{subfig:pm25_ou_2}
   \end{minipage}
   \caption{Upper and lower bounds for the \texttt{pm25} dataset when using a Ornstein-Uhlenbeck (OU) kernel.}
	\label{fig:bounds_pm25_ou}
\end{figure}
\clearpage

\subsubsection{Bounds for experiments on \texttt{protein}}
\label{subsec:bounds_protein}
\vspace*{-3mm}
\begin{figure}[htb!]
   \begin{minipage}[b]{.5\textwidth}
      \centering
      \includegraphics[width=\subfigwidth]{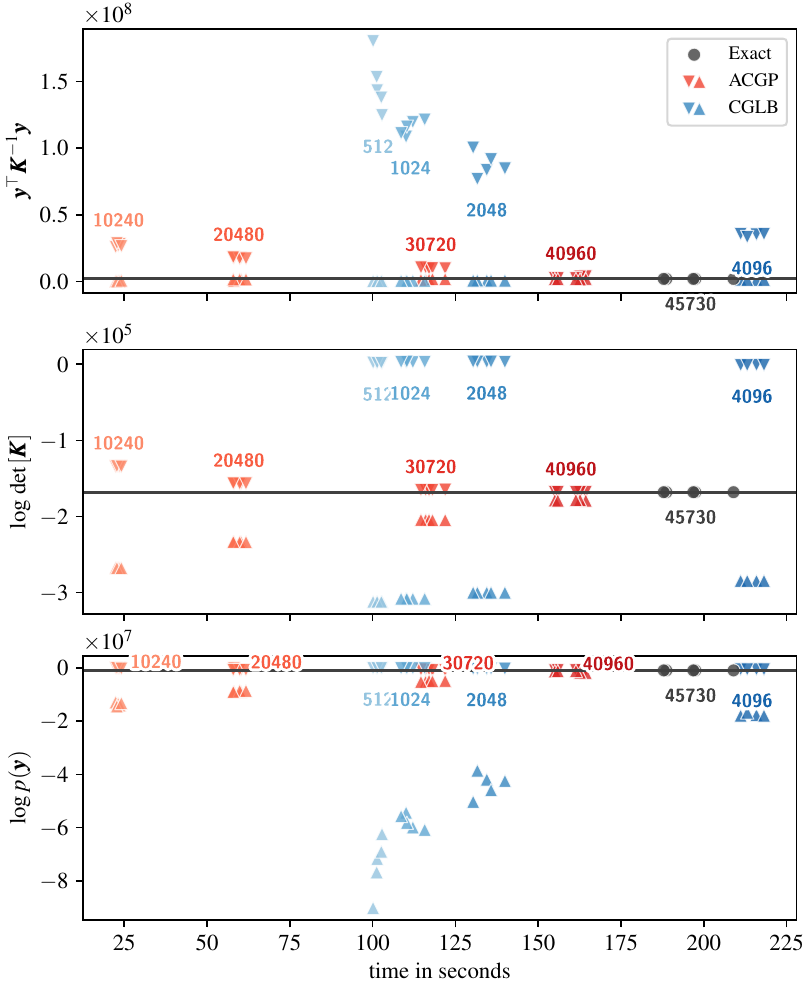}
      \subcaption{SE kernel, $\log\ell = -1$.}
      \label{subfig:protein_rbf_-1}
   \end{minipage}
   \begin{minipage}[b]{.5\textwidth}
      \centering
      \includegraphics[width=\subfigwidth]{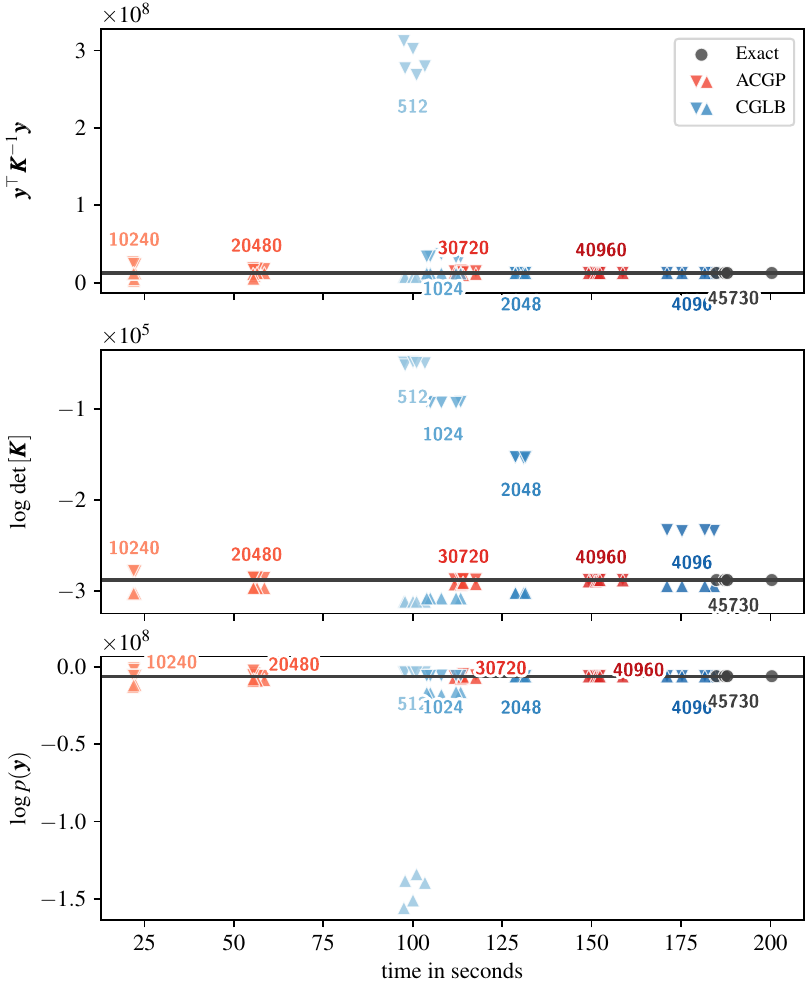}
      \subcaption{SE kernel, $\log\ell = 0$.}
      \label{subfig:protein_rbf_0}
   \end{minipage}
   \begin{minipage}[b]{.5\textwidth}
      \centering
      \includegraphics[width=\subfigwidth]{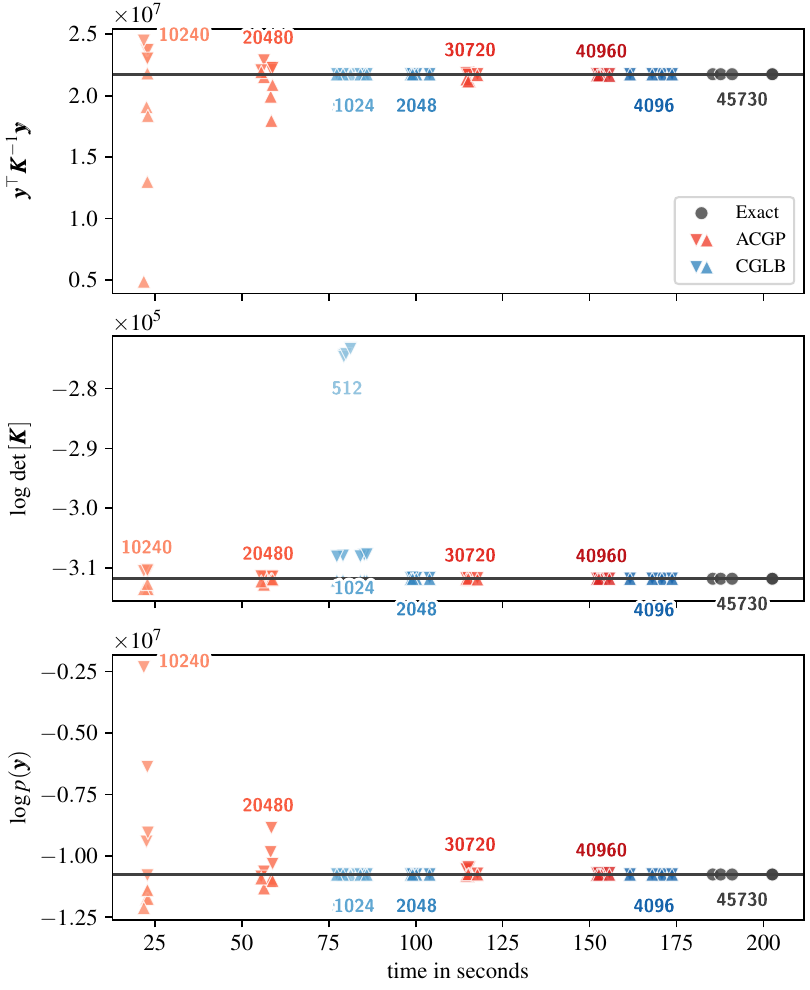}
      \subcaption{SE kernel, $\log\ell = 1$.}
      \label{subfig:protein_rbf_1}
   \end{minipage}
   \begin{minipage}[b]{.5\textwidth}
      \centering
      \includegraphics[width=\subfigwidth]{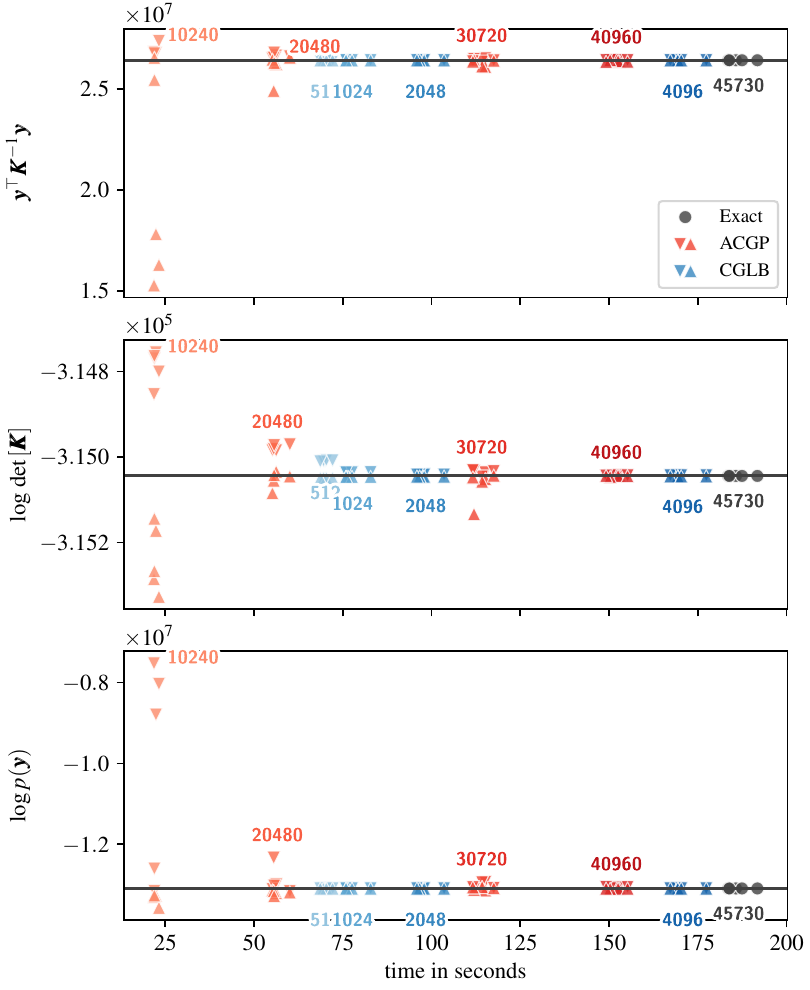}
      \subcaption{SE kernel, $\log\ell = 2$.}
      \label{subfig:protein_rbf_2}
   \end{minipage}
   \caption{Upper and lower bounds for the \texttt{protein} dataset when using a squared exponential (SE) kernel.}
	\label{fig:bounds_protein_rbf}
\end{figure}
\clearpage
\begin{figure}[htb!]
   \begin{minipage}[b]{.5\textwidth}
      \centering
      \includegraphics[width=\subfigwidth]{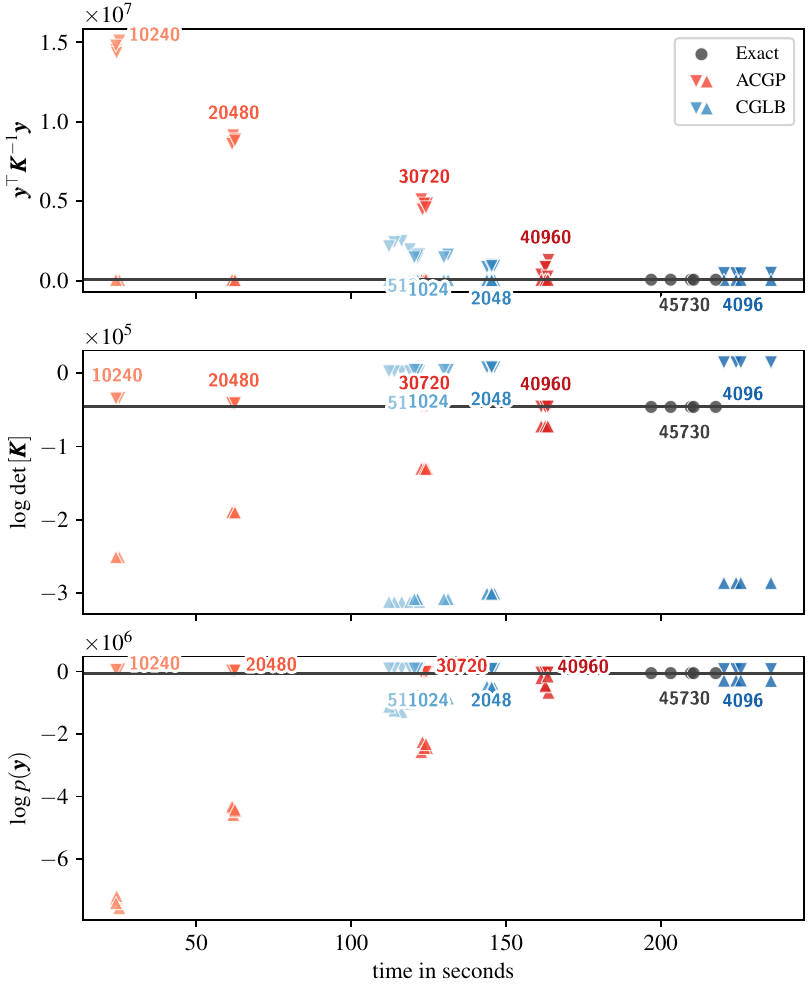}
      \subcaption{OU kernel, $\log\ell = -1$.}
      \label{subfig:protein_ou_-1}
   \end{minipage}
   \begin{minipage}[b]{.5\textwidth}
      \centering
      \includegraphics[width=\subfigwidth]{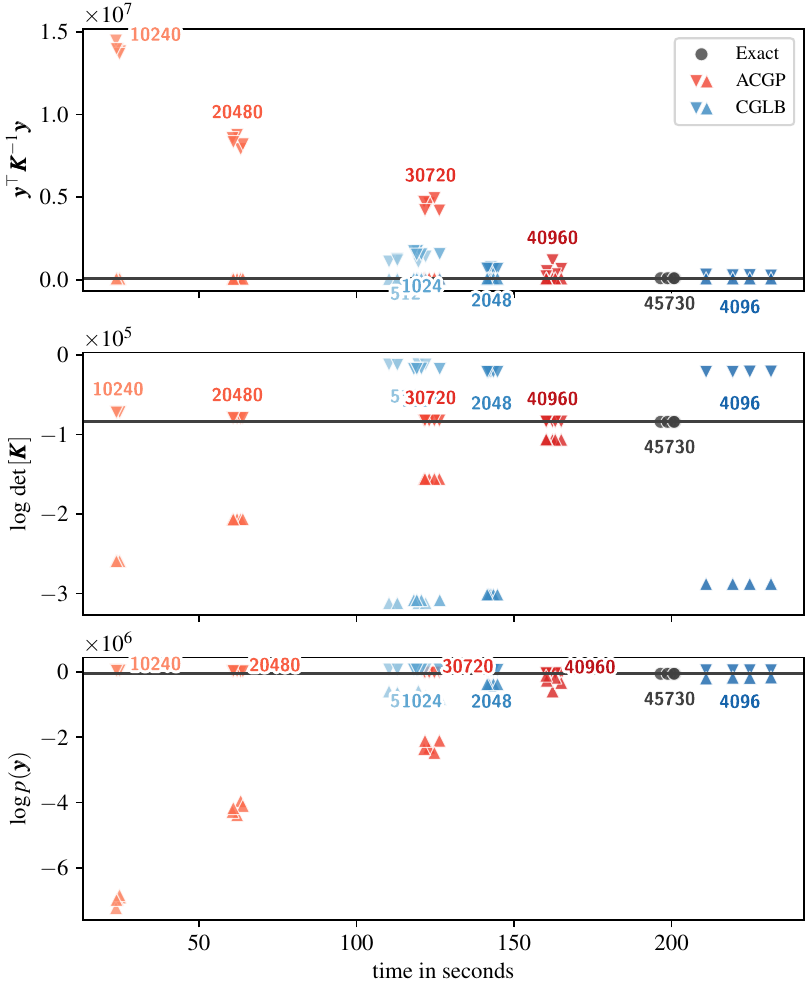}
      \subcaption{OU kernel, $\log\ell = 0$.}
      \label{subfig:protein_ou_0}
   \end{minipage}
   \begin{minipage}[b]{.5\textwidth}
      \centering
      \includegraphics[width=\subfigwidth]{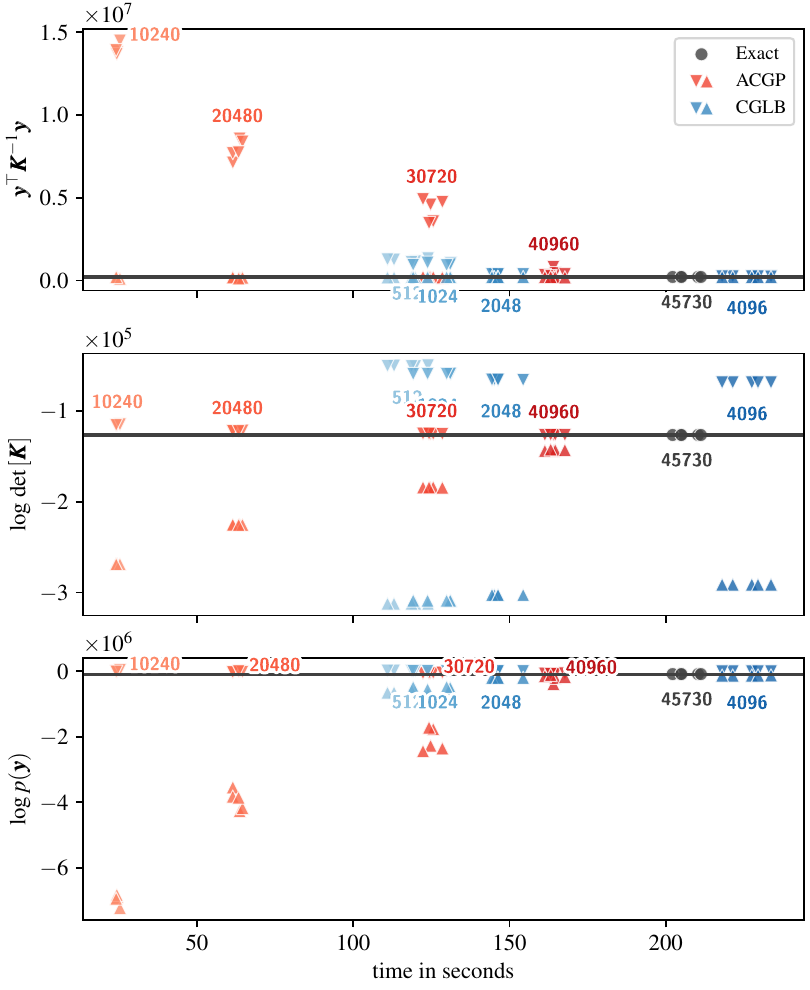}
      \subcaption{OU kernel, $\log\ell = 1$.}
      \label{subfig:protein_ou_1}
   \end{minipage}
   \begin{minipage}[b]{.5\textwidth}
      \centering
      \includegraphics[width=\subfigwidth]{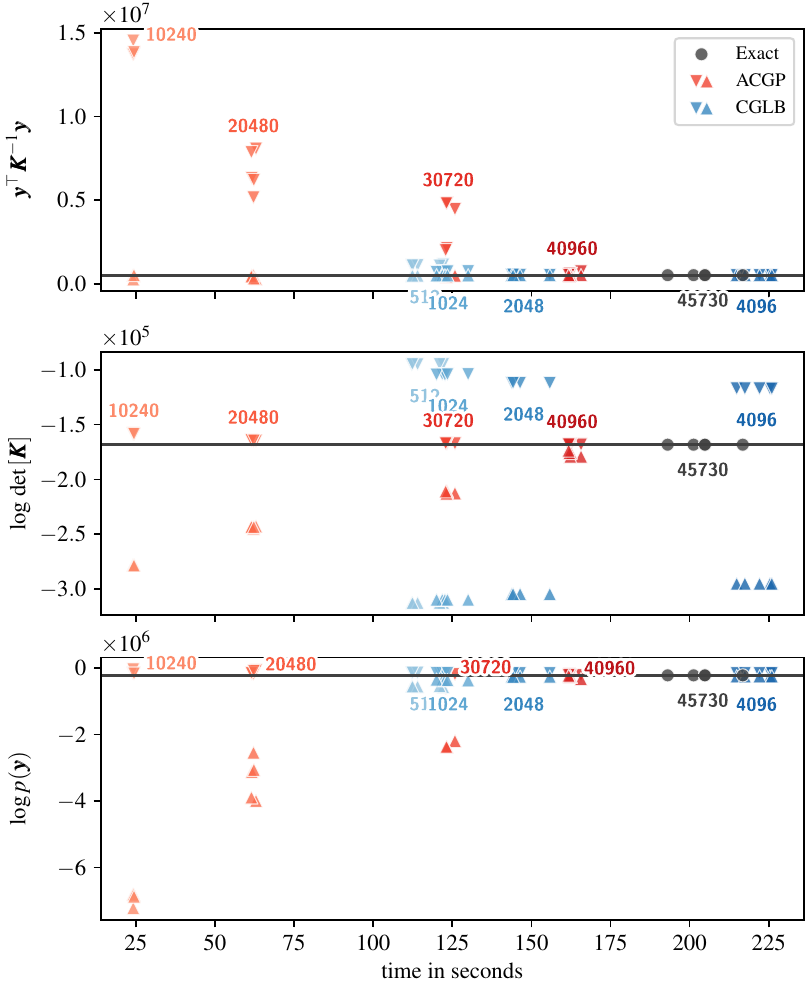}
      \subcaption{OU kernel, $\log\ell = 2$.}
      \label{subfig:protein_ou_2}
   \end{minipage}
   \caption{Upper and lower bounds for the \texttt{protein} dataset when using a Ornstein-Uhlenbeck (OU) kernel.}
	\label{fig:bounds_protein_ou}
\end{figure}
\clearpage

\subsubsection{Bounds for experiments on \texttt{kin40k}}
\label{subsec:bounds_kin40k}
\vspace*{-3mm}
\begin{figure}[htb!]
   \begin{minipage}[b]{.5\textwidth}
      \centering
      \includegraphics[width=\subfigwidth]{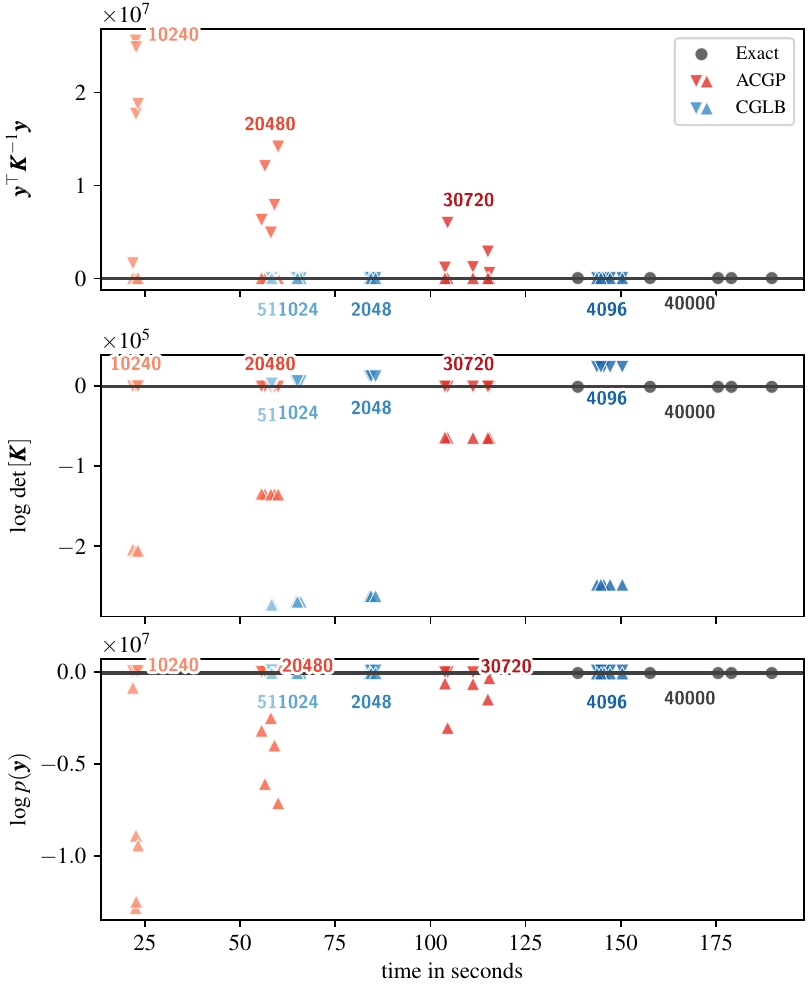}
      \subcaption{SE kernel, $\log\ell = -1$.}
      \label{subfig:kin40k_rbf_-1}
   \end{minipage}
   \begin{minipage}[b]{.5\textwidth}
      \centering
      \includegraphics[width=\subfigwidth]{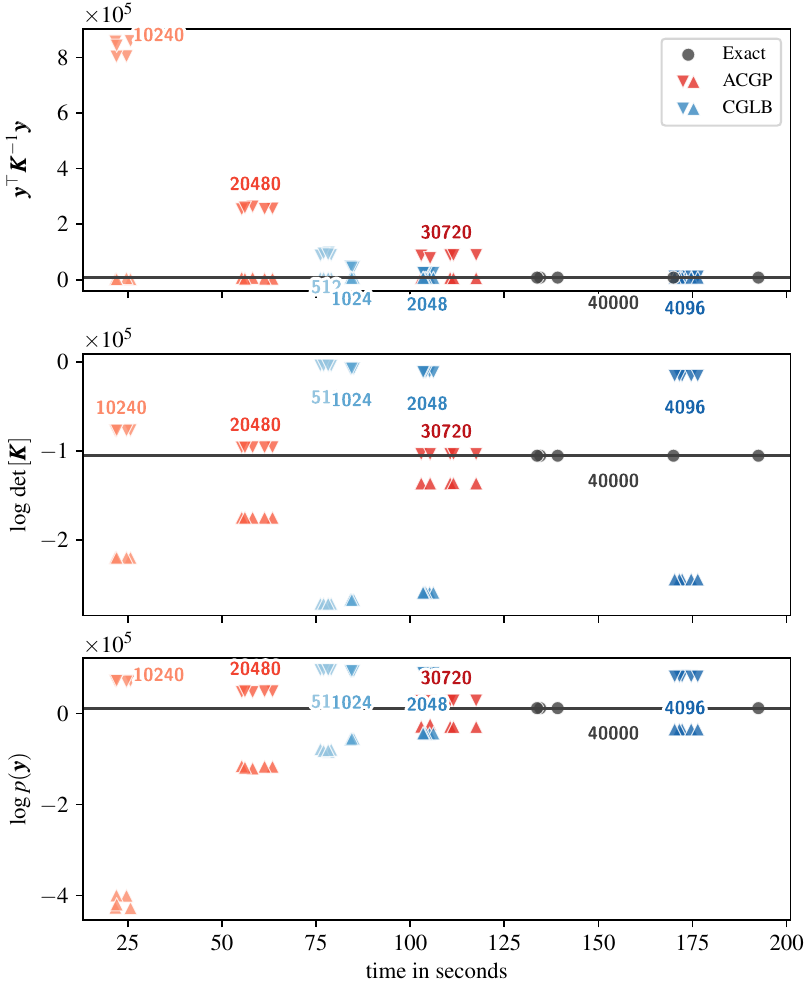}
      \subcaption{SE kernel, $\log\ell = 0$.}
      \label{subfig:kin40k_rbf_0}
   \end{minipage}
   \begin{minipage}[b]{.5\textwidth}
      \centering
      \includegraphics[width=\subfigwidth]{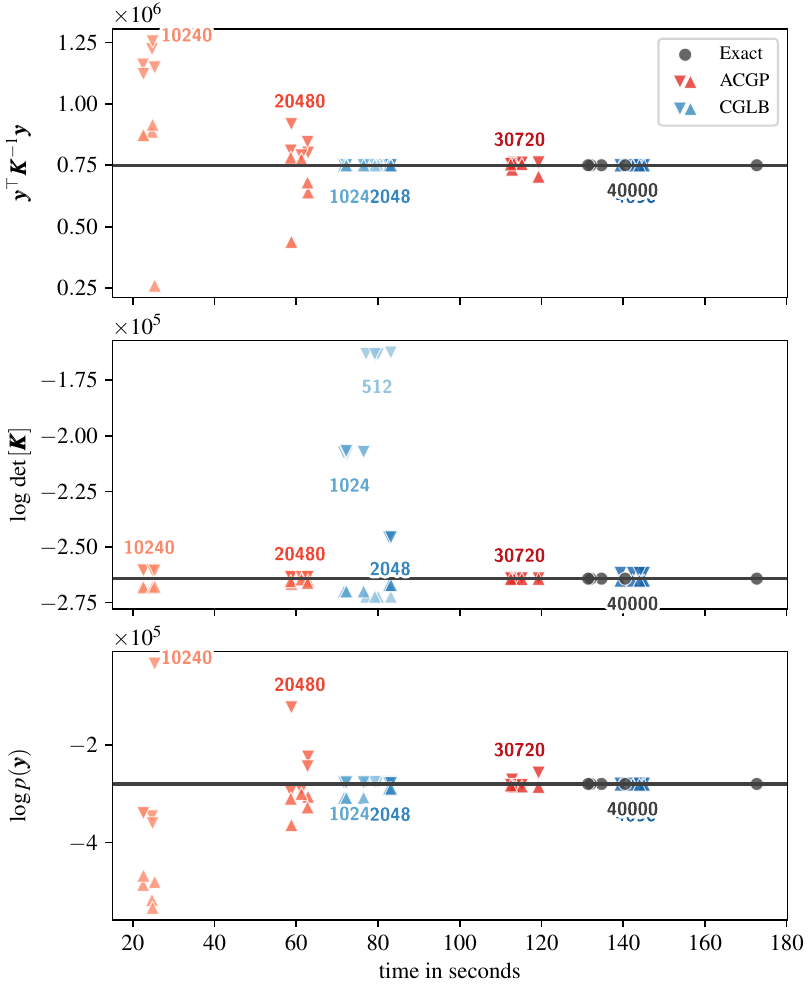}
      \subcaption{SE kernel, $\log\ell = 1$.}
      \label{subfig:kin40k_rbf_1}
   \end{minipage}
   \begin{minipage}[b]{.5\textwidth}
      \centering
      \includegraphics[width=\subfigwidth]{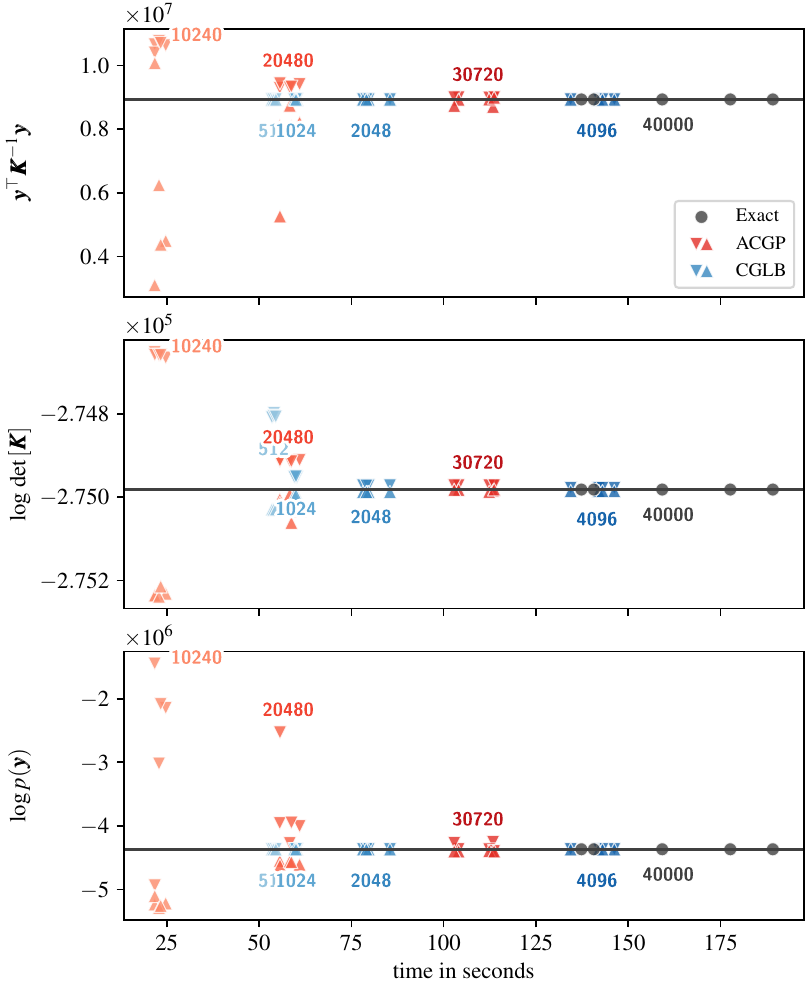}
      \subcaption{SE kernel, $\log\ell = 2$.}
      \label{subfig:kin40k_rbf_2}
   \end{minipage}
   \caption{Upper and lower bounds for the \texttt{kin40k} dataset when using a squared exponential (SE) kernel.}
	\label{fig:bounds_kin40k_rbf}
\end{figure}
\clearpage
\begin{figure}[htb!]
   \begin{minipage}[b]{.5\textwidth}
      \centering
      \includegraphics[width=\subfigwidth]{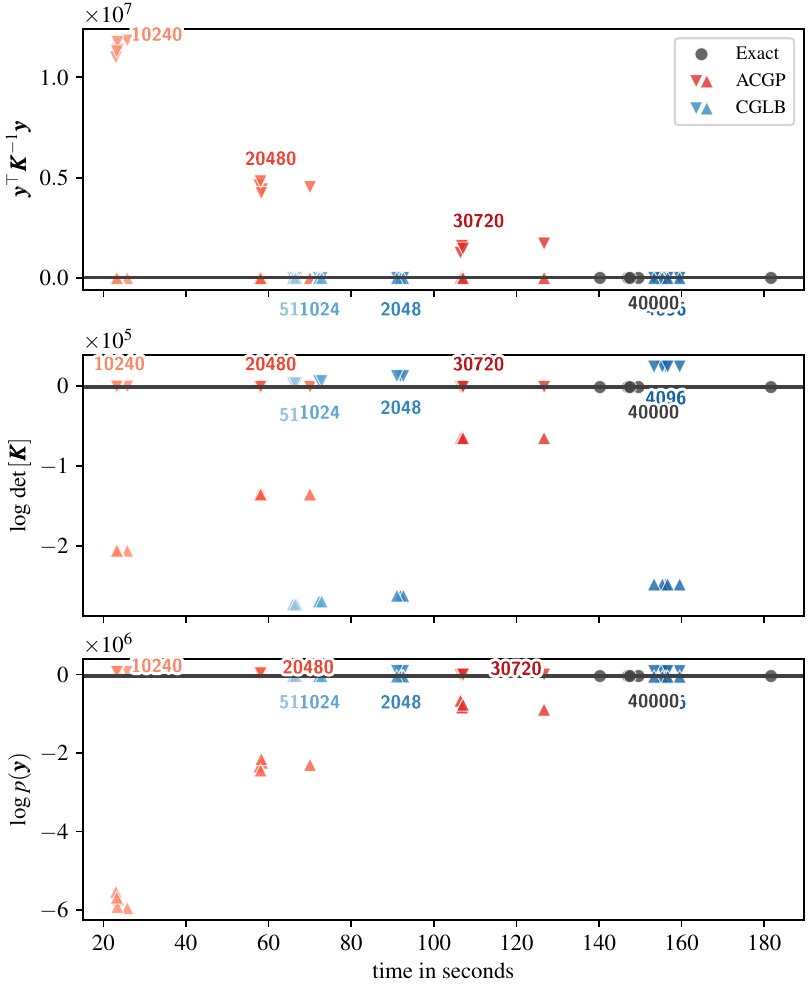}
      \subcaption{OU kernel, $\log\ell = -1$.}
      \label{subfig:kin40k_ou_-1}
   \end{minipage}
   \begin{minipage}[b]{.5\textwidth}
      \centering
      \includegraphics[width=\subfigwidth]{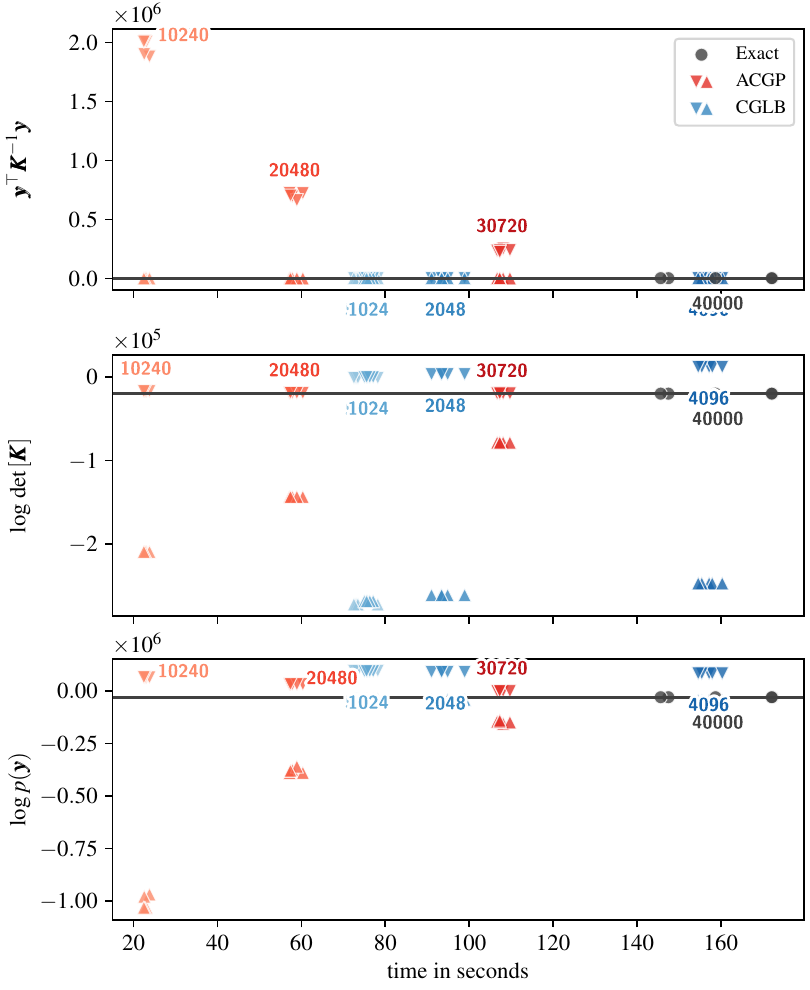}
      \subcaption{OU kernel, $\log\ell = 0$.}
      \label{subfig:kin40k_ou_0}
   \end{minipage}
   \begin{minipage}[b]{.5\textwidth}
      \centering
      \includegraphics[width=\subfigwidth]{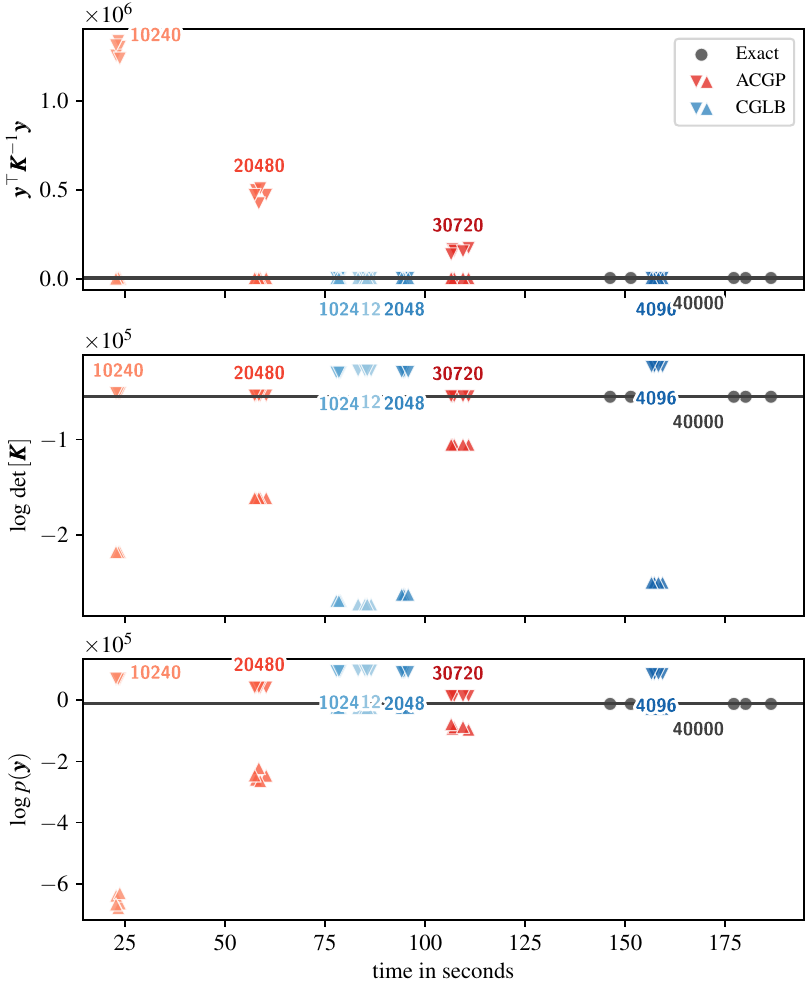}
      \subcaption{OU kernel, $\log\ell = 1$.}
      \label{subfig:kin40k_ou_1}
   \end{minipage}
   \begin{minipage}[b]{.5\textwidth}
      \centering
      \includegraphics[width=\subfigwidth]{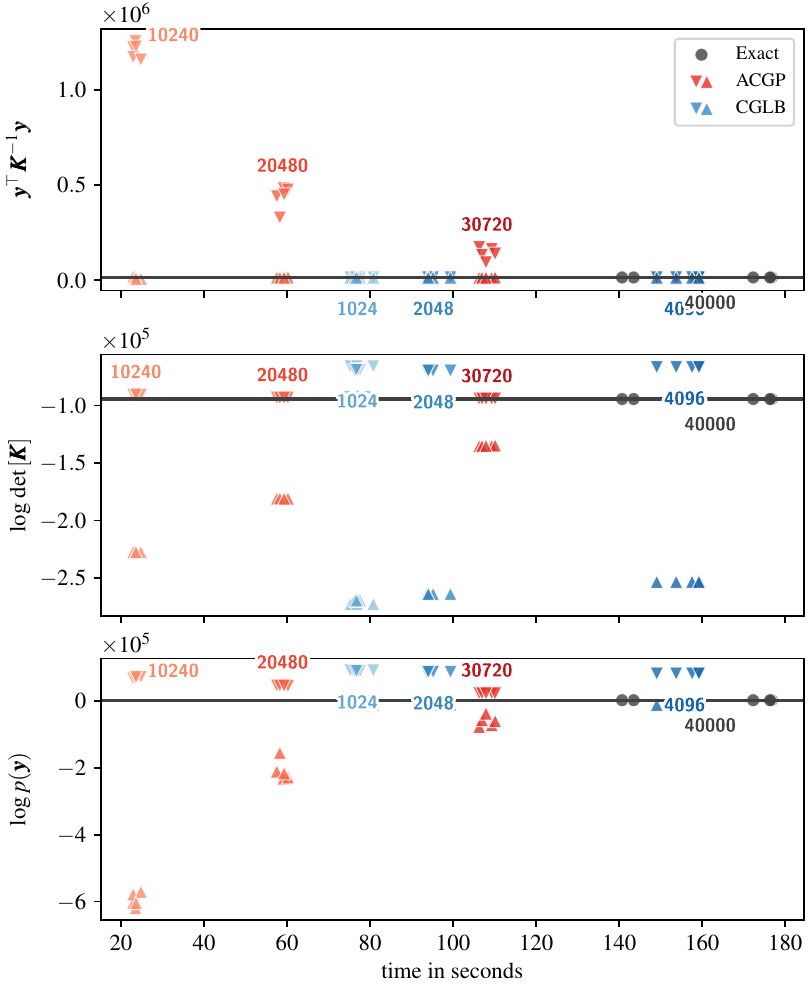}
      \subcaption{OU kernel, $\log\ell = 2$.}
      \label{subfig:kin40k_ou_2}
   \end{minipage}
   \caption{Upper and lower bounds for the \texttt{kin40k} dataset when using a Ornstein-Uhlenbeck (OU) kernel.}
	\label{fig:bounds_kin40k_ou}
\end{figure}
\clearpage

\clearpage

\section{Notation}
We use a \textsc{python}-inspired index notation, abbreviating for example $[y_1, \ldots, y_{n}]\T$ as $\vy_{:n}$---observe that the indexing starts at 1.
\todo[inline]{check that that's what we need}
Indexing binds before any other operation such that $\inv{\mat K_{:\tzero, :\tzero}}$ is the inverse of $\mat K_{:\tzero, :\tzero}$ and \emph{not} all elements up to $\tzero$ of $\inv{\mat K}$.
For $s \in \{1,\dots, N\}$ define $\mathcal{F}_s\ce\sigma(\vec x_1, y_1, \dots, \vec x_s, y_s)$ to be the $\sigma$-algebra generated by $\vec x_1, y_1, \dots, \vec x_\tzero, y_\tzero$.
With respect to the main article, we change the letter $M$ to $t$.
The motivation for the former notation is to highlight the role of the variable as a subset size, whereas in this part, the focus is on $M$ as a stopping time.

\section{Proof Sketch}
\label{sec:proof_sketch}
In this section of the appendix, we provide additional intuition on the theorems and proofs for the theory behind \acgp{}.

\subsection{The cumulative perspective}
%
Using Bayes rule, we can write $\log p(\vec y)$ equivalently as 
\begin{align}
	\label{eq:app_log_marginal_alternative}
&\log p(\bm{y}) = -\frac{1}{2}\left(\log\det{\mat K_N}+\vec y\Trans\inv{\mat K_N}\vec y+N\log(2\pi)\right)= \sum_{n=1}^N \log p(y_{n}\mid \vec y_{:n-1}).
\end{align}
For each potential stopping point $t$ we can decompose \cref{eq:app_log_marginal_alternative} into a sum of three terms:
\begin{align*}
  \log p(\vec y) &= \underbrace{\sum_{n=1}^\tzero \log p(y_{n}\mid \vec y_{:n-1})}_{A: \text{ fully processed}}
   +\!\!\! \underbrace{\sum_{n=\tzero+1}^{t}\!\!\! \log p(y_{n}\mid \vec y_{:n-1})}_{B: \text{ partially processed}}
   +\!\!\! \underbrace{\sum_{n=t+1}^{N}\!\!\! \log p(y_{n}\mid \vec y_{:n-1})}_{C: \text{ remaining}},
\end{align*}
where $\tzero<t$.
We will use the partially processed points between $\tzero$ and $t$, to obtain unbiased upper and lower bounds on the expected value of $\log p(\vec y_{\tzero+1:}\given \vec y_{:\tzero})$:
\begin{align}
 \Exp[\L_t\mid\vec x_1,y_1,\dots \vec x_{\tzero},y_{\tzero}]
 \leq A+
 \Exp[B+C \mid \vec x_1,y_1,\dots \vec x_{\tzero},y_{\tzero}]
 \leq
 \Exp[\U_t \negthickspace \mid \vec x_1,y_1,\dots \vec x_{\tzero},y_{\tzero}].
\end{align}

\subsection{General bounds}
%
The posterior of the $n$th observation conditioned on the previous is Gaussian with
\begin{align*}
  p(y_{n}\mid \vec y_{:n-1})
    =  & \N(\GPmean[n-1]{n}, \GPvar[n-1]{n}) \\
  \GPmean[n-1]{n}
    \ce& k(\vec x_n, \mat X_{:n-1})\inv{\mat K_{n-1}}\vec y_{:n-1} \\
  \postk{\vec x_n}{\vec x_n}{n-1}
    \ce& k(\vec x_n, \vec x_n)
     - k(\vec x_n, \mat X_{:n-1})\inv{\mat K_{n-1}}k(\mat X_{:n-1}, \vec x_n),
\end{align*}
where we assumed (w.l.o.g) that $\mu_0(\vec x)\ce 0$.
Inspecting these expressions one finds that
\begin{align}
\log\det{\mat K_N}&=\sum_{n=1}^N \log\left(\GPvar[n-1]{n}\right),
\\\vec y\Trans\inv{\mat K_N}\vec y&=\sum_{n=1}^N\frac{\left(y_n-\GPmean[n-1]{n}\right)^2}{\GPvar[n-1]{n}}.
\end{align}

Our strategy is to find function families $u$ (and $l$) which upper (and lower) bound the expectation
\begin{align*}
	l_{n,t}^d \leq_E \log \postk{\vec x_n}{\vec x_n}{n-1}\leq_E u_{n,t}^d\\
	l_{n,t}^q \leq_E \frac{\left(y_n-\GPmean[n-1]{n}\right)^2}{\GPvar[n-1]{n}}\leq_E u_{n,t}^q,
\end{align*}
where $\leq_E$ denotes that the inequality holds in expectation.
We will choose the function families such that the unseen variables interact only in a \emph{controlled} manner.
More specifically,
\begin{align}
\label{eq:required_form}
f^x_{n,t}(\vec x_n, y_n, \dots, \vec x_1, y_1)=\sum_{j=\tzero+1}^n g_{t}^{f,x}(\vec z_n, \vec z_j; \vec z_1, \dots \vec z_{\tzero})\notag,
\end{align}
with $f\in\{u,l\}$ and $x\in\{d,q\}$.
The effect of this restriction becomes apparent when taking the expectation.
The sum over the bounds becomes the sum of only two terms: variance and covariance, formally:
\begin{align}
&\Exp\left[\sum_{n=\tzero+1}^N f^x_{n,t}(\vec z_n, \dots, \vec z_1)\mid\sigma(\vec z_1, \dots, \vec z_{\tzero})\right]
\\&=\left(N-\tzero\right)\Exp\left[g(\vec z_{\tzero+1}, \vec z_{\tzero+1}, \vec z_{1}\dots, \vec z_n)\mid\sigma(\vec z_1, \dots, \vec z_{\tzero})\right] \notag
\\&+\left(N-\tzero\right) \frac{N-\tzero+1}{2} \Exp\left[g(\vec z_{\tzero+1}, \vec z_{\tzero+2}, \vec z_{1}\dots, \vec z_n)\mid\sigma(\vec z_1, \dots, \vec z_{\tzero})\right].
\end{align}
We can estimate this expectation from the observations we obtained between $\tzero$ and $t$.
\begin{align}
&\approx\frac{N-t}{t-\tzero}\sum_{n=\tzero+1}^t g(\vec z_n, \vec z_n, \vec z_{1}\dots, \vec z_{\tzero})
\\&+\frac{2(N-t)}{t-\tzero} \frac{N-\tzero+1}{2} \sum_{i=1}^{\frac{t-\tzero}{2}} g(\vec z_{\tzero+2i}, \vec z_{\tzero+2i-1}, \vec z_{1}\dots, \vec z_{\tzero}).  \notag
\end{align}



\subsection{Bounds on the log-determinant}
Since the posterior variance of a Gaussian process can never increase with more data, the average of the (log) posterior variances is an estimator for an upper bound on the log-determinant. 
Hence in this case, we simply ignore the interaction between the remaining variables.
We set $g(\vec x_n, \vec x_i)\ce \delta_{ni}\log\left(\GPvar{n}\right)$ where $\delta_{ni}$ denotes Kronecker's $\delta$.

To obtain a lower bound we use that for $c>0$ and $a\geq b\geq 0$, one can show that $\log\left(c+a-b\right)\geq\log\left(c+a\right)-\frac{b}{c}$ where the smaller $b$ the better the bound.
In our case $c=\noisef{\vec x_n}$, $a=\postkt{\vec x_n}{\vec x_n}$ and $b=\postkt{\vec x_n}{\estX[n-1]}\inv{\left(\postkt{\estX[n-1]}{\estX[n-1]}+\noisef{\estX[n-1]}\right)}\postkt{\estX[n-1]}{\vec x_n}$.
Underestimating the eigenvalues of $\postkt{\estX[n-1]}{\estX[n-1]}$ by 0 we obtain a lower bound, where each quantity can be estimated.
Formally, for any $\tzero\leq t$,
\begin{align}
&\log\left(\GPvar[n-1]{n}\right)\geq \left(\log\left(\GPvar{n}\right)-\sum_{i=\tzero+1}^{n-1}\frac{\postkt{\vec x_n}{\vec x_i}^2}{\noisef{\vec x_n}\noisef{\vec x_i}}\right).
\end{align}
This bound can be worse than the deterministic lower bound $\lBoundDet$.
It depends on how large $n$ is, how large the average correlation is and how small $\lBoundDet$ is.
Denote with $\mu$ the estimator for the left addend and with $\rho$ the estimator for the second addend.
We can determine the number of steps $n-\tzero$ that this bound is better by solving for the maxima of a quadratic equation:
\begin{align}
	p\left(\mu-\frac{p-1}{2}\rho\right)\geq p\lBoundDet
\end{align}
The tipping point $\psi$ is
\begin{align}
\psi\ce\max\left(N, \tzero+\left\lfloor\frac{\mu-\lBoundDet}{\rho}+\frac{1}{2}\right\rfloor\right).
\end{align}
Hence, for $n>\psi$ we set $u^d_n\ce \lBoundDet$.

Observe that, the smaller $\postkt{\vec x_j}{\vec x_{j+1}}^2$ the closer the bounds.
This term represents the correlation of datapoints conditioned on the $\tzero$ datapoints observed before.
Thus, our bounds come together, when incoming observations become independent conditioned on what was already observed.
Essentially, that $\postkt{\vec x_j}{\vec x_{j+1}}^2=0$ is the basic assumption of inducing input approximations \citep{quionero2005unifying}.

\subsection{Bounds on the quadratic form}
For an upper bound on the quadratic form we apply a similar trick:
\begin{align}
\frac{x}{c+a-b}\leq \frac{x(c+b)}{c(c+a)},
\end{align}
where $x\geq 0$.
Further we assume that in expectation the mean square error improves with more data.
\todo[inline]{the bound below holds only in expectation!}
Formally,
\begin{align}
\frac{\left(y_j-\GPmean[j-1]{j}\right)^2}{\GPvar[j-1]{j}}\leq_E
\frac{\left(y_j-\GPmean{j}\right)^2}{\noisef{\vec x_j}\left(\GPvar{j}\right)}\left(\noisef{\vec x_j}+\sum_{i=\tzero+1}^{j-1}\frac{\left(\postkt{\vec x_j}{\vec x_i}\right)^2}{\noisef{\vec x_i}}\right)
\end{align}

For a lower bound observe that
\begin{align}
\vec y\Trans\inv{\mat K}\vec y=\vec y_{:s}\Trans\inv{\mat K_{s}}\vec y_{:s}+\left(\vec y_{\tzero+1:N}-\GPmeanX{\remX}\right)\Trans\inv{\matQ[N]}\left(\vec y_{\tzero+1:N}-\GPmeanX{\remX}\right)
\end{align}
where $\matQ \ce \postkt{\estX[j]}{\estX[j]}+\noisefm{\estX[j]}$ 
with $j\geq \tzero+1$ for the posterior covariance matrix of $\estX[j]$ conditioned on $\mat X_{1:\tzero}$.
We use a trick we first encountered in \citet{kim2018scalableStructureDiscovery}: $\vec y\Trans\inv{\mat A}\vec y\geq 2\vec y\Trans\vec b-\vec b\Trans\mat A\vec b$, for any $\vec b$.
\newcommand{\meanerror}{\vec e}
For brevity introduce $\meanerror\ce \vec y_{\tzero+1:N}-\GPmeanX{\remX}$.
After applying the inequality with $\vec b\ce \inv{\Diag[\matQ[N]]}\vec e$, we obtain
\begin{align}
2 \sum_{n=\tzero+1}^N \frac{\left(y_n-\GPmean{n}\right)^2}{\GPvar{n}}
-\sum_{n,n'=\tzero+1}^N\frac{(y_n-\GPmean{n})}{\GPvar{n}}[\matQ[N]]_{nn'}\frac{(y_{n'}-\GPmean{n'})}{\GPvar{n'}}
\end{align}
which is now in the form of \cref{eq:required_form}.

Observe that, the smaller the square error $(y_j-\GPmean{j})^2$, the closer the bounds.
That is, if the model fit is good, the quadratic form can be easily identified.
\subsection{Using the Bounds for Stopping the Cholesky}
\label{sec:stopping}
\label{sec:bound_computation}
We will use the following stopping strategy:
when the difference between bounds becomes sufficiently small and their absolute value is far away from zero.
More precisely, when having deterministic bounds $\L\leq x\leq \U$ on a number $x$, with
\begin{align}
	\frac{\U-\L}{2\min(\Abs{\U},\Abs{\L})}\leq r \text{ and }
	\label{eq:stopping_condition_1}
	\\\sign{\U}=\sign{L},
	\label{eq:stopping_condition_2}
\end{align}
then the relative error of the estimate $\frac{1}{2}(\U+\L)$ is less than $r$, that is $|\frac{\frac{1}{2}(\U+\L)-x}{x}|\leq r$.

\begin{remark}
\label{remark:autodiff}
In our experiments, we do not use $\frac{1}{2}(\U+\L)$ as estimator, and instead use the biased estimator $(N-\tau)\frac{1}{\tau}\log p(\vec y_{:\tau})$.
Since stopping occurs when log-determinant and quadratic form evolve roughly linearly, the two estimators are not far off each other.
The main reason for using the biased estimator is of a technical nature; it is easier and faster to implement a custom backward function which can handle the in-place operations of our Cholesky implementation.
\end{remark}

\begin{remark}
\label{remark:estimated_correlation}
To estimate the average correlation between elements of the kernel matrix, we use all elements of the off-diagonal instead of only every second.
This has no effect on our main result, but it becomes important when developing PAC bounds.
\end{remark}

\begin{remark}
\label{remark:advancing_bounds}
The lower bound on the log-determinant, and the upper bound on the quadratic form switch their form at a step $\psi$ (\cref{thm:log_det_lower_bound,thm:quad_upper_bound}).
Currently, to prove our results, this requires $\psi$ to be $\mathcal{F}_{\tzero}$-measurable, and for that reason we use estimators using inputs only up to index $\tzero$, to define $\psi$.
However, a PAC bound proof would allow to condition on the event that the estimators (plus some $\epsilon$) overestimate their expected values with high probability.
Under that condition, we could use the true expected value (which is $\mathcal{F}_{\tzero}$-measurable) to define $\psi$.
Hence, in our practical implementation we use estimators based on inputs with indices up to $M$ to define $\psi$.
\end{remark}

The question remains how to use the bounds and stopping strategy to derive an approximation algorithm.
We transform the exact Cholesky decomposition for that purpose.
For brevity denote $\mat L_{\tzero}\ce \chol[k(\mat X_{:\tzero}, \mat X_{:\tzero})+\noisef{\mat X_{:\tzero}}]$ and $\mat T_{\tzero}\ce k(\mat X_{\tzero+1:}, \mat X_{}){\mat L_{\tzero}}\iTrans$.
For any $\tzero\in \{1,\dots N\}$:
\begin{align}
\mat L_N=
\begin{bmatrix}\mat L_{\tzero} & \mat 0\\
\mat T & \chol\left[k(\mat X_{\tzero+1:, \tzero+1:})-\mat T\mat T\Trans\right]
\end{bmatrix}
\end{align}
One can verify that $\mat L_N$ is indeed the Cholesky of $\mat K_N$ by evaluating $\mat L_N\mat L_N\Trans$.
Observe that $k(\mat X_{\tzero+1:, \tzero+1:})-\mat T\mat T\Trans$ is the posterior covariance matrix of the $\vec y_{\tzero+1:}$ conditioned on $\vec y_{:\tzero}$.
Hence, in the step before the Cholesky of the posterior covariance matrix is computed, we can estimate our log-determinant bounds.

Similar reasoning applies for solving the linear equation system.
We can write
\begin{align}
\label{eq:recursive_LES}
\vec \alpha_N=
\begin{bmatrix}
\vec \alpha_{\tzero}\\
\inv{\chol\left[k(\mat X_{\tzero+1:, \tzero+1:})-\mat T\mat T\Trans\right]}\left(\vec y_{\tzero+1:}-\mat T_{\tzero}\vec \alpha_{\tzero}\right)
\end{bmatrix}
\end{align}
Now observe that $\mat T_{\tzero}\vec \alpha_{\tzero}=\GPmeanX{\mat X_{\tzero+1:}}$.
Hence, before the solving the lower equation system (and before computing the posterior Cholesky), we can compute our bounds for the quadratic form.
There are different options to implement the Cholesky decomposition.
We use a blocked, row-wise implementation \citep{george1986parallelCholesky}. 
For a practical implementation see \cref{algo:chol_blocked} and \cref{alg:bounds}.

\begin{algorithm}
	\caption{blocked and recursive formulation of Cholesky decomposition and Gaussian elimination, augmented with our stopping conditions marked in gray.}
	\label{algo:chol_blocked}
	\begin{algorithmic}[1]
		\Procedure{\acgp{}}{$k(\cdot, \cdot)$, $\mu(\cdot)$, $\noisef{\cdot}$, $\mat X$, $\vec y$, $\blocksize$, $\maxN$}
		\State $\mat A\gets \mat 0^{\maxN\times \maxN}, \vec \alpha\gets \vec 0^{\maxN}$ \algorithmiccomment{allocate memory}
		\State $\mat A_{1:\blocksize,1:\blocksize}\gets k(\mat X_{1:\blocksize})+\noisef{\mat X_{1:\blocksize}}$
		\algorithmiccomment{initialize kernel matrix}
		\State $\vec \alpha_{1:\blocksize}\gets \vec y_{1:\blocksize}-\mu(\mat X_{1:\blocksize})$
		\algorithmiccomment{evaluate mean function for the same datapoints}
		\State $\mat A_{1:\blocksize,1:\blocksize}\gets \operatorname{chol}(\mat A_{1:\blocksize, 1:\blocksize})$
		\algorithmiccomment{call to low-level Cholesky}
		\State $\vec \alpha_{1:\blocksize}\gets \inv{\mat A_{1:\blocksize,1:\blocksize}}\vec \alpha_{1:\blocksize}$
		\algorithmiccomment{second back-substitution step}
		\State $i\gets \blocksize+1$, $j\gets \min(i+\blocksize, N)$
		\While{$i<\maxN$}
		\State $\mat A_{i:j,1:i}\gets k(\mat X_{i:j}, \mat X_{1:i})$
		\algorithmiccomment{evaluate required block-off-diagonal part of the kernel matrix}
		\State $\mat A_{i:j,1:i} \gets \mat A_{i:j,1:i}\mat A_{1:i,1:i}\iTrans$
		\algorithmiccomment{solve triangular linear equation system}
		\State $\mat A_{i:j,i:j}\gets k(\mat X_{i:j})+\noisef{\mat X_{i:j}}$
		\algorithmiccomment{evaluate required block-diagonal part of the kernel matrix}
		\State $\vec \alpha_{i:j}\gets \vec y_{i:j}-\mu(\mat X_{i:j})$
		\algorithmiccomment{evaluate mean function for the same datapoints}
		\State $\mat A_{i:j,i:j}\gets \mat A_{i:j,i:j} - \mat A_{i:j,1:i}\mat A_{i:j,1:i}\Trans$\algorithmiccomment{down-date}
		\State \algorithmiccomment{now $\mat A_{i:j,i:j}=\matQ[j]$}
		\State $\vec \alpha_{i:j}\gets \vec \alpha_{i:j}-\mat A_{i:j,1:i}\vec \alpha_{1:i}$\algorithmiccomment{now $\vec \alpha_{i:j}$ contains $\vec y_{i:j}-\GPmeanX[i]{\mat X_{i:j}}$}
		\AState{$\L, \U\gets$\textbf{EvaluateBounds}($i$, $j$)} \algorithmiccomment{costs $\O(j-i)$}
		\If{\cref{eq:stopping_condition_1,eq:stopping_condition_2} fulfilled}
		\AState{\textbf{return} estimator}
		\EndIf
		\State $\mat A_{i:j,i:j}\gets \operatorname{chol}(\mat A_{i:j,i:j})$
		\algorithmiccomment{finish computing Cholesky for data-points up to index $j$}
		\State $\vec \alpha_{i:j}\gets \inv{\mat A_{i:j,i:j}} \vec \alpha_{i:j}$
		\algorithmiccomment{finish solving linear equation system for index up to $j$}
		\State $i\gets i+m$, $j\gets \min(i+\blocksize, \maxN)$
		\EndWhile
		\algorithmiccomment{now $\mat A=\mat L$ and $\vec \alpha=\inv{\mat L}(\vec y-\mu(\mat X))$}
		\AState{\textbf{return} estimator}
		\EndProcedure
	\end{algorithmic}
\end{algorithm}
\begin{algorithm}
\caption{bound algorithm as used in our experiments. The algorithm deviates slightly from our theory. We use \cref{eq:quad_upper_bound_final} for the upper bound in the quadratic form, and we use all off-diagonal entries (instead of only every second).
}
	\label{alg:bounds}
	\begin{algorithmic}[1]
		\Procedure{EvaluateBounds}{$\tzero$, $t$}
		\State{$D\gets \sum_{j=1}^{\tzero} \log \mA_{jj}$}\algorithmiccomment{in practice we reuse the sum from the last iteration}
		\State{$Q\gets \sum_{j=1}^{\tzero} \vec \alpha_j^2$}
		\State{$\mu\gets \frac{1}{t-\tzero}\sum_{j=\tzero+1}^{t} \log \mA_{jj}$ }\algorithmiccomment{average variance of the new points conditioned on all points processed until $\tzero$}
		\State{$\U_D\gets D + (N-\tzero)\mu$}
		\State{$\rho\gets \frac{1}{t-\tzero-1}\sum_{j=\tzero+1}^{t-1} \frac{\mA_{j,j+1}^2}{\noisef{\vec x_{j}}\noisef{\vec x_{j+1}}}$}
		\algorithmiccomment{average square correlation (deviating from theory!)}
		\State{$\psi\gets\min(N, \tzero+\lfloor\frac{\mu-\lBoundDet}{\rho}+\frac{1}{2}\rfloor$}\algorithmiccomment{number of steps the probabilistic bound is better than the deterministic}
		\State{$\L_D\gets D + (\psi-\tzero)\left(\mu - \frac{\psi-\tzero-1}{2}\rho\right)+(N-\psi)\lBoundDet$}
		\State{$\mu \gets\frac{1}{t-\tzero}\sum_{j=\tzero+1}^t \frac{\vec \alpha_j^2}{\mA_{j,j}}$} 
		\algorithmiccomment{average error calibration}
		\State{$\rho \gets\max\left(0, \frac{1}{t-\tzero-1}\sum_{j=\tzero+1}^{t-1} \frac{\vec \alpha_j \vec \alpha_{j+1} \mA_{j,j+1}}{\mA_{j,j}\mA_{j+1,j+1}}\right)$} 
		\algorithmiccomment{calibrated error correlation}
		\State{$\L_Q\gets Q+\max(0, (N-\tzero)(2\mu-\rho(N-\tzero-1)))$}
		\State{$\rho \gets\frac{1}{t-\tzero-1}\sum_{j=\tzero+1}^{t-1} \frac{\vec \alpha_j^2 \mA_{j,j+1}^2}{\mA_{j,j}\noisef{\vec x_j}\noisef{\vec x_j}}$} 
		\algorithmiccomment{square error correlation}
		\State{$\hat{\mu} \gets\frac{1}{t-\tzero}\sum_{j=\tzero+1}^t \frac{\vec \alpha_j^2}{\noisef{\vec x_j}}$} 
		\algorithmiccomment{worst-case estimate for the quadratic}		
		\State{$\psi\gets\min(N, \tzero+\lfloor\frac{\mu-\hat{\mu}}{\rho}+\frac{1}{2}\rfloor$)}
		\algorithmiccomment{number of steps the bound is better than the worst-case estimate}
		\State{$\U_Q\gets Q+(\psi-\tzero)(\mu-\frac{\psi-\tzero-1}{2}\rho)+(N-\psi)\hat{\mu}$}
		\State{\textbf{return} $\L_D+\L_Q, \U_D+\U_Q$}
		\EndProcedure
	\end{algorithmic}
\end{algorithm}

\section{Assumptions}
\begin{assumption}
	\label{assume:exchangeability}
	\label{assumption:exchangeability}
	Let $(\Omega, \mathcal{F}, \Proba)$ be a probability space and let $(\vec x_j, y_j)_{j=1}^N$ be a sequence of independent and identically distributed random vectors with $\vec x:\Omega\rightarrow\Re^D$ and $y:\Omega\rightarrow\Re$.
\end{assumption}

\begin{assumption}
\label{assume:expected_quadratic_form_assumptions}
\label{assumption:expected_quadratic_form_assumptions}
For all $\tzero,i,j,t$ with $\tzero< i\leq j\leq N$ and functions $f(\vec x_j,\vec x_i;\vec x_1, \dots \vec x_{\tzero})\geq 0$
\begin{align}
	\label{eq:expected_quadratic_form_assumptions}
	\Exp\left[f(\vec x_j, \vec x_i)\left(y_j-\GPmean[j-1]{j}\right)^2\mid\mathcal{F}_{\tzero}\right]
	\leq \Exp\left[f(\vec x_j, \vec x_i)\left(y_j-\GPmean[\tzero]{j}\right)^2\mid\mathcal{F}_{\tzero}\right]
\end{align}
where $f(\vec x_j, \vec x_i)\in\left\{\frac{1}{\GPvar{j}}, \frac{\postkt{\vec x_j}{\vec x_i}^2}{(\GPvar{j})\noisef{\vec x_j}\noisef{\vec x_i}}\right\}$.
\end{assumption}
That is, we assume that in expectation the estimator improves with more data.
Note that, $f$ can not depend on any entries of $\vec y$.

\section{Main Theorem}
\label{sec:main_theorem_proof}
\todo[inline]{adapt notation}
This section restates \cref{thm:main} and connects the different proofs in the sections to follow.
\begin{theorem}
	Assume that \cref{assume:exchangeability} and \cref{assume:expected_quadratic_form_assumptions} hold.
	For any even $\blocksize\in\{2, 4, \dots, N-2\}$ and any $\tzero \in \{1, \dots, N-\blocksize\}$, the bounds defined in \cref{eq:bound_ud,thm:log_det_lower_bound,thm:quad_lower_bound,thm:quad_upper_bound} hold in expectation:
	\begin{align*}
		\Exp[\L_D\mid \mathcal{F}_{\tzero}]\leq \Exp[\log(\det{\mat K}) \mid \mathcal{F}_{\tzero}]\leq \Exp[\U_D\mid \mathcal{F}_{\tzero}] \text{ and}
		\\\Exp[\L_Q\mid \mathcal{F}_{\tzero}]\leq \Exp[\vec y\Trans\inv{\mat K}\vec y \mid \mathcal{F}_{\tzero}]\leq \Exp[\U_Q\mid \mathcal{F}_{\tzero}]\ .
	\end{align*}
\end{theorem}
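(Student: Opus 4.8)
The backbone is the pair of telescoping identities recalled in \cref{sec:cholesky}: writing $\matQ[N]\ce\postkt{\estX[N]}{\estX[N]}+\noisef{\estX[N]}$ and $\vec e_*\ce\vec y_{\tzero+1:}-\GPmeanX{\estX[N]}$, the matrix–determinant and matrix–inversion lemmas give the splits $\log\det\mat K=\log\det\mat K_{:\tzero,:\tzero}+\log\det\matQ[N]$ and $\vec y\T\inv{\mat K}\vec y=\vec y_{:\tzero}\T\inv{\mat K_{:\tzero,:\tzero}}\vec y_{:\tzero}+\vec e_*\T\inv{\matQ[N]}\vec e_*$. The first addend in each split is $\mathcal F_\tzero$-measurable and is exactly the ``processed'' term in \cref{eq:bound_ud,eq:bound_ld,eq:bound_uq,eq:bound_lq}, so it suffices to control, in $\mathcal F_\tzero$-conditional expectation, $\log\det\matQ[N]$ from above/below and $\vec e_*\T\inv{\matQ[N]}\vec e_*$ from above/below. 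Applying the telescoping identity once more inside $\matQ[N]$, $\log\det\matQ[N]=\sum_{n=\tzero+1}^N\log(\postk{\vec x_n}{\vec x_n}{n-1}+\noisef{\vec x_n})$ and $\vec e_*\T\inv{\matQ[N]}\vec e_*=\sum_{n=\tzero+1}^N(y_n-\GPmean[n-1]{n})^2/(\GPvar[n-1]{n})$, so we are back to bounding a sum of $(n{-}1)$-conditional posterior variances / normalised errors, now with the first $\tzero$ points conditioned out.

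Next I would replace each summand by a bound that, as a function of $(\vec x_1,y_1,\dots,\vec x_n,y_n)$, has the \emph{controlled} shape of \cref{eq:required_form}: a ``diagonal'' piece depending only on $\vec z_n$ and $\vec z_1,\dots,\vec z_\tzero$, plus a sum over $j\in\{\tzero{+}1,\dots,n{-}1\}$ of a pairwise piece in $(\vec z_n,\vec z_j)$. The four ingredients are elementary: (i) for $\U_D$, monotonicity of the posterior variance in the data, $\postk{\vec x_n}{\vec x_n}{n-1}\le\postkt{\vec x_n}{\vec x_n}$ \citep[Q.\ 2.9.4]{rasmussenwilliams}, yields a purely diagonal bound; (ii) for $\L_D$, $\log(c+a-b)\ge\log(c+a)-b/c$ with $c=\noisef{\vec x_n}$, $a-b=\postk{\vec x_n}{\vec x_n}{n-1}$, followed by overestimating $b$ by $\sum_{j=\tzero+1}^{n-1}\postkt{\vec x_n}{\vec x_j}^2/\noisef{\vec x_j}$ (bounding the eigenvalues of $\postkt{\estX[n-1]}{\estX[n-1]}$ below by $0$); (iii) for $\U_Q$, $x/(c+a-b)\le x(c+b)/(c(c+a))$ with the same $c,a,b$ and $x=(y_n-\GPmean[n-1]{n})^2$, after which \cref{assumption:targets} is used twice to swap $(y_n-\GPmean[n-1]{n})^2$ for $(y_n-\GPmean{n})^2$ in expectation (with $f=1/(\GPvar{n})$ and $f=\postkt{\vec x_n}{\vec x_j}^2/((\GPvar{n})\noisef{\vec x_n}\noisef{\vec x_j})$ respectively); (iv) for $\L_Q$, the deterministic inequality $\vec a\T\inv{\mat A}\vec a\ge2\vec a\T\vec b-\vec b\T\mat A\vec b$ applied to $\vec e_*\T\inv{\matQ[N]}\vec e_*=\vec g\T\inv{(\mat D\matQ[N]\mat D)}\vec g$ with $\mat D=\diag(\vec e_*)$, $\vec g=\mat D\vec e_*$, and $\vec b=\alpha\vec 1$, which gives $2\alpha\sum_n(y_n-\GPmean{n})^2-\alpha^2\sum_{n,n'}(y_n-\GPmean{n})[\matQ[N]]_{nn'}(y_{n'}-\GPmean{n'})$; splitting the double sum into its $n=n'$ and $n\ne n'$ parts puts it in the required form — and it is at this step that $\alpha$ must be $\mathcal F_\tzero$-measurable so it can be pulled through the conditional expectation.

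Now take the $\mathcal F_\tzero$-conditional expectation of each such sum. By \cref{assumption:exchangeability} the pairs $(\vec x_j,y_j)$ for $j>\tzero$ are i.i.d.\ given $\mathcal F_\tzero$, so every diagonal piece has the common expectation of the $(\tzero{+}1)$-st point, and every pairwise piece has the common expectation of the pair $(\tzero{+}1,\tzero{+}2)$ (the relevant pairwise functions being symmetric, or at least identically distributed over ordered pairs). Counting the terms gives $(N-\tzero)$ copies of the diagonal expectation and $\binom{N-\tzero}{2}$ copies of the pairwise one, which reproduces the prefactors $(N-\tzero)$ and $(N-\tzero)(N-\tzero-1)/2$ in the bound formulas. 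The final link is that the block averages in the bounds are conditionally unbiased estimators of exactly these expectations: $\mu_D,\mu_Q,\mu_Q'$ average a diagonal quantity over $n=\tzero{+}1,\dots,M$, and $\rho_D,\rho_Q,\rho_Q'$ average a pairwise quantity over the \emph{disjoint} pairs $(\tzero{+}2i,\tzero{+}2i{-}1)$ — the ``every second element'' choice (cf.\ \cref{remark:estimated_correlation}) is precisely what makes these genuine i.i.d.\ samples; wherever the bound uses a surrogate (e.g.\ $\noisef\cdot\ge\sigma^2$, so $\rho_D/\sigma^4$ and $\rho_Q'$ over-estimate the true decrease terms), one checks the inequality direction matches $\L$ versus $\U$. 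Substituting unbiased estimates for expectations then yields $\Exp[\L_D\mid\mathcal F_\tzero]\le\Exp[\log\det\mat K\mid\mathcal F_\tzero]\le\Exp[\U_D\mid\mathcal F_\tzero]$ and likewise for the quadratic form, with equality collapsing to the exact quantity at $\tzero=N$.

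The step I expect to be most delicate is the $\psi_D$-truncation inside $\L_D$. The per-point lower bound $c_n=\mu_D-\tfrac{n-\tzero-1}{\sigma^4}\rho_D$ eventually falls below the deterministic bound $\log\sigma^2\le\log(\postk{\vec x_n}{\vec x_n}{n-1}+\noisef{\vec x_n})$, and $\psi_D$ from \cref{eq:variance_cutoff_step} — an \emph{average} crossing index — is itself $\mathcal F_\tzero$-random, so one cannot directly split the expectation at $\psi_D$. The clean workaround is to note that \cref{eq:bound_ld} is, deterministically, no larger than $\log\det\mat K_{:\tzero,:\tzero}+\sum_{n=\tzero+1}^N\min(c_n,\log\sigma^2)$ — because $c_n$ is decreasing and $\psi_D$ exceeds the pointwise crossing index, so the summands $c_n$ that $\L_D$ keeps beyond that index are already $<\log\sigma^2$ — and then to bound each term individually: $\Exp[\min(c_n,\log\sigma^2)\mid\mathcal F_\tzero]\le\min(\Exp[c_n\mid\mathcal F_\tzero],\log\sigma^2)\le\Exp[\log(\postk{\vec x_n}{\vec x_n}{n-1}+\noisef{\vec x_n})\mid\mathcal F_\tzero]$, using both the per-point lower bound from (ii) and $\log\sigma^2\le$ the true summand. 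Beyond this truncation bookkeeping, the remaining friction is purely in matching each empirical average to the correct conditional expectation under the paper's index conventions (and, for the heteroskedastic case, carrying the $\inf\noisef\cdot=\sigma^2$ substitutions through); the inequality ingredients themselves are all one-liners.
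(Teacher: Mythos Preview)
Your overall plan is the paper's own: the same telescoping splits, the same four elementary inequalities (monotone posterior variance for $\U_D$; $\log(c+a-b)\ge\log(c+a)-b/c$ for $\L_D$; $x/(c+a-b)\le x(c+b)/(c(c+a))$ together with \cref{assumption:expected_quadratic_form_assumptions} for $\U_Q$; the quadratic-form minorant $\vec a\T\inv{\mat A}\vec a\ge2\vec a\T\vec b-\vec b\T\mat A\vec b$ after the $\Diag(\vec e_*)$ preconditioning for $\L_Q$), and the same use of \cref{assumption:exchangeability} to collapse every diagonal term to the $(\tzero{+}1)$-st expectation and every pairwise term to the $(\tzero{+}1,\tzero{+}2)$ expectation, with the disjoint-pair sampling supplying unbiased estimators.

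The one place your argument fails is the $\psi_D$ workaround. You assert that $\L_D$ is \emph{no larger} than $\log\det\mat K_{:\tzero,:\tzero}+\sum_{n=\tzero+1}^N\min(c_n,\log\sigma^2)$, but the inequality goes the other way: for $n\le\psi_D$ the bound $\L_D$ contributes $c_n\ge\min(c_n,\log\sigma^2)$, and for $n>\psi_D$ it contributes $\log\sigma^2\ge\min(c_n,\log\sigma^2)$ (since $c_n<\log\sigma^2$ there). Hence $\L_D\ge\log\det\mat K_{:\tzero,:\tzero}+\sum_n\min(c_n,\log\sigma^2)$, and your downstream chain $\Exp[\min(c_n,\log\sigma^2)\mid\mathcal F_\tzero]\le\min(\Exp[c_n\mid\mathcal F_\tzero],\log\sigma^2)\le\Exp[\text{true}\mid\mathcal F_\tzero]$ — correct in itself — no longer closes the argument. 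Swapping $\min$ for $\max$ repairs the first inequality but breaks the second, since $\Exp[\max(c_n,\log\sigma^2)]\ge\max(\Exp[c_n],\log\sigma^2)$. The paper's own proof handles this step differently: it absorbs the $(N-\psi_t)\log\sigma^2$ tail into the true sum via $\log\sigma^2\le\log(\GPvar[j-1]{j})$ and then invokes \cref{lemma:overestimating_expected_logdet} on the remaining piece, pulling $\psi_t$ through the expectation. You were right that this is the delicate step — note that the term-by-term inequality $\Exp[c_n\mid\mathcal F_\tzero]\le\Exp[\log(\GPvar[n-1]{n})\mid\mathcal F_\tzero]$ and the deterministic bound $\log\sigma^2\le\log(\GPvar[n-1]{n})$ together justify the result immediately for any $\mathcal F_\tzero$-measurable cutoff; the data-dependent $\psi_D$ of \cref{eq:variance_cutoff_step} requires the additional care the paper's proof supplies.
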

\begin{proof}
Follows from \cref{thm:log_det_lower_bound,thm:quad_lower_bound,thm:quad_upper_bound,lemma:decreasing_expectation}. 
\end{proof}

\section{Proof for the Lower Bound on the Determinant}
\newcommand{\advanceL}{\psi_{\tzero}}
\begin{theorem}
	\label{thm:log_det_lower_bound}
	Assume that \cref{assume:exchangeability} holds, and that $\blocksize\in\{2,4,\dots, N\}$ is an even number.
	Set $t\ce\tzero+\blocksize$, then, for all $\tzero\in \{1,\dots,N-\blocksize\}$
	$$\Exp\left[\left.\L_D\right| \mathcal{F}_{\tzero}\right]\leq \Exp\left[\left.D_N\right| \mathcal{F}_{\tzero}\right].$$
\allowdisplaybreaks
\begin{align}
	\L_D&\ce \log\det{\mat K_{:\tzero,:\tzero}}+\left(\advanceL-\tzero\right)\left(\loglowermean-\frac{\advanceL-\tzero-1}{2}\avgcov\right)+(N-\advanceL)\lBoundDet
	\eqcomment{the lower bound}
	\\\loglowermean&\ce \frac{1}{\blocksize}\sum_{j=\tzero+1}^t \log \left(\noisef{\vec x_j}+\postkt{\vec x_j}{\vec x_j}\right)
	\eqcomment{(under-)estimate of the posterior variance conditioned on $\tzero$ points}
	\\\avgcov&\ce \frac{2}{\blocksize}\sum_{j=\frac{\tzero+1}{2}}^{\frac{t-1}{2}} \frac{\postkt{\vec x_{2j+1}}{\vec x_{2j}}^2}{\noisef{\vec x_{2j+1}}\noisef{\vec x_{2j}}}
	\eqcomment{(over-)estimate of the correlation conditioned on $\tzero$ points}
	\\\advanceL&\ce \tzero+\max p \text{ where $p\in\mathbb{N}$ is such that}
	\\&\qquad p\left(\loglowermean[t-\blocksize]-\frac{p-1}{2}\avgcov[t-\blocksize]\right)\geq p\lBoundDet
	\eqcomment{number of steps that we suspect the decrease in variance to be controllable}
	\\&=\min\left(N,\tzero+ \left\lfloor\frac{\loglowermean[t-\blocksize]-\log\sigma^2}{\avgcov[t-\blocksize]}+\frac{1}{2}\right\rfloor\right)
\end{align}
where, if $\tzero-\blocksize<1$, we set $\loglowermean[t-\blocksize]\ce\log\sigma^2$ and $\avgcov[t-\blocksize]\ce 1$.
\end{theorem}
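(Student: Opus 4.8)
The plan is to work from the sequential form of the log-determinant and bound its conditional expectation term by term. Using $D_N=\log\det\mat K_N=\sum_{n=1}^N\log(\GPvar[n-1]{n})$ together with the block identity one gets $D_N=\log\det\mat K_{:\tzero,:\tzero}+\sum_{n=\tzero+1}^N\log(\GPvar[n-1]{n})$, and since the first summand is $\mathcal F_{\tzero}$-measurable and is exactly the leading term of $\L_D$, it suffices to lower-bound $\Exp[\sum_{n=\tzero+1}^N\log(\GPvar[n-1]{n})\mid\mathcal F_{\tzero}]$. For $n>\tzero$ the Cholesky recursion (\cref{sec:cholesky}) gives $\GPvar[n-1]{n}=\noisef{\vec x_n}+\postkt{\vec x_n}{\vec x_n}-\beta_n$ with $\beta_n\ce\postkt{\vec x_n}{\estX[n-1]}\inv{\matQ[n-1]}\postkt{\estX[n-1]}{\vec x_n}\ge0$ the extra drop in posterior variance from additionally conditioning on $\vec x_{\tzero+1},\dots,\vec x_{n-1}$, where $\matQ[n-1]=\postkt{\estX[n-1]}{\estX[n-1]}+\noisef{\estX[n-1]}$. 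Two per-realization lower bounds hold for each summand: (i) $\log(\GPvar[n-1]{n})\ge\log\noisef{\vec x_n}\ge\lBoundDet$, because the posterior variance is non-negative and $\noisef{\cdot}\ge\sigma^2$; and (ii), by the elementary inequality $\log(c+a-b)\ge\log(c+a)-b/c$ for $c>0,\ 0\le b\le a$ (applicable since the posterior variance stays non-negative), applied with $c=\noisef{\vec x_n}$, $a=\postkt{\vec x_n}{\vec x_n}$, $b=\beta_n$, together with positive semidefiniteness of the posterior kernel matrix $\postkt{\estX[n-1]}{\estX[n-1]}$ (so $\inv{\matQ[n-1]}$ is dominated by the inverse of the noise diagonal), $\log(\GPvar[n-1]{n})\ge\log(\GPvar{n})-\sum_{i=\tzero+1}^{n-1}\postkt{\vec x_n}{\vec x_i}^2/(\noisef{\vec x_n}\noisef{\vec x_i})$.

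Next I would take conditional expectations using \cref{assume:exchangeability}. Conditioned on $\mathcal F_{\tzero}$, the pairs $(\vec x_n,y_n)_{n>\tzero}$ are i.i.d.\ and $\postkt{\cdot}{\cdot},\noisef{\cdot}$ are fixed functions, so $\Exp[\log(\GPvar{n})\mid\mathcal F_{\tzero}]$ is a common value $\mu$ (the conditional mean of $\loglowermean$) for all $n>\tzero$, and $\Exp[\postkt{\vec x_n}{\vec x_i}^2/(\noisef{\vec x_n}\noisef{\vec x_i})\mid\mathcal F_{\tzero}]$ is a common value $\rho\ge0$ (the conditional mean of $\avgcov$) for all distinct $n,i>\tzero$. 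Hence $\Exp[\log(\GPvar[n-1]{n})\mid\mathcal F_{\tzero}]\ge\max(\lBoundDet,\ \mu-(n-1-\tzero)\rho)$, and, summing with an arbitrary cutoff, for every integer $p\in\{0,\dots,N-\tzero\}$
\begin{equation*}
\Exp\Big[\textstyle\sum_{n=\tzero+1}^N\log(\GPvar[n-1]{n})\,\Big|\,\mathcal F_{\tzero}\Big]\ \ge\ \sum_{q=1}^p\big(\mu-(q-1)\rho\big)+(N-\tzero-p)\lBoundDet ,
\end{equation*}
so that, taking $p$ optimal and using $\ge\lBoundDet$ in the remaining terms, $\Exp[D_N\mid\mathcal F_{\tzero}]\ge\log\det\mat K_{:\tzero,:\tzero}+(N-\tzero)\lBoundDet+\sum_{q\ge1}(\mu-(q-1)\rho-\lBoundDet)^+$. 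The same i.i.d.\ structure shows the block estimators are conditionally unbiased: $\Exp[\loglowermean\mid\mathcal F_{\tzero}]=\mu$ (an average of $\blocksize$ copies of $\log(\GPvar{\cdot})$) and $\Exp[\avgcov\mid\mathcal F_{\tzero}]=\rho$ (an average of $\blocksize/2$ copies built on \emph{disjoint} index pairs --- which is why only every second off-diagonal entry enters).

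It remains to match $\L_D$ to this lower bound. Writing $\L_D=\log\det\mat K_{:\tzero,:\tzero}+\sum_{q=1}^{\psi_t-\tzero}\big(\loglowermean-(q-1)\avgcov\big)+(N-\psi_t)\lBoundDet$, one sees that $\L_D$ is precisely the functional from the previous display with the conditionally unbiased $(\loglowermean,\avgcov)$ in place of $(\mu,\rho)$ but with a cutoff $\psi_t-\tzero$ that is \emph{itself} a function of $(\loglowermean,\avgcov)$ --- namely the largest $p$ with $\loglowermean-\tfrac{p-1}{2}\avgcov\ge\lBoundDet$, the ``crossover'' at which the cumulative slanted estimate still dominates the cumulative floor $p\lBoundDet$. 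A short computation then shows $\L_D=\log\det\mat K_{:\tzero,:\tzero}+(N-\tzero)\lBoundDet+S_t$, where $S_t\ce(\psi_t-\tzero)\big(\loglowermean-\tfrac{\psi_t-\tzero-1}{2}\avgcov-\lBoundDet\big)$ satisfies $0\le S_t<\tfrac12(\psi_t-\tzero)\avgcov\le(\loglowermean-\lBoundDet)+\tfrac12\avgcov$ (the first inequality because the crossover is not yet crossed, and $S_t$ would vanish if the integrality of $\psi_t$ were dropped; the last by the defining inequality for $\psi_t$). Since $(N-\tzero)\lBoundDet\le\Exp[D_N\mid\mathcal F_{\tzero}]-\log\det\mat K_{:\tzero,:\tzero}$ always, the theorem reduces to the single estimate $\Exp[S_t\mid\mathcal F_{\tzero}]\le\sum_{q\ge1}(\mu-(q-1)\rho-\lBoundDet)^+$.

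This last estimate is where I expect the real work to be: it bounds the mean of the small integrality remainder $S_t$ by the ``excess log-volume'' $\sum_{q\ge1}(\mu-(q-1)\rho-\lBoundDet)^+$ carried by the lower bound, and it is exactly here that the precise form of $\psi_t$ is used --- in particular the factor $2$ in $\psi_t-\tzero\approx 2(\loglowermean-\lBoundDet)/\avgcov$, which places the cutoff at the crossover rather than at the optimizer and thereby makes the remainder genuinely second order; the conditional unbiasedness of $\loglowermean,\avgcov$ and a control of the ratio $(\loglowermean-\lBoundDet)/\avgcov$ entering $S_t$ do the rest. Two degenerate cases are easy sanity checks: when the cutoff saturates ($\psi_t=N$, which happens when $\avgcov$ is small) the estimators appear linearly with the fixed weights $q=1,\dots,N-\tzero$ and $\Exp[\L_D\mid\mathcal F_{\tzero}]$ coincides with the $p=N-\tzero$ instance of the displayed inequality; and for $\tzero=N$ there is nothing to estimate so $\L_D=D_N$. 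The general interior case is the one that must be argued with care, and it can alternatively be deduced from the general optional-stopping inequality of Theorem~28 in \citet{bartels2020phd}, of which the statement is an instance.
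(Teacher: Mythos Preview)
Your skeleton is exactly the paper's: split off $\log\det\mat K_{:\tzero,:\tzero}$, write the remainder via \cref{lemma:log_det_as_variance}, apply the inequality $\log(c+a-b)\ge\log(c+a)-b/c$ (\cref{lemma:log_trick}) together with the eigenvalue underestimate to get the per-term bound, and then use the i.i.d.\ assumption so that $\Exp[\loglowermean\mid\mathcal F_{\tzero}]=\mu$ and $\Exp[\avgcov\mid\mathcal F_{\tzero}]=\rho$. This is precisely what the paper does, culminating in an appeal to \cref{lemma:overestimating_expected_logdet}.

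Where you diverge from the paper is in worrying about the randomness of $\psi_t$. You are right that $\psi_t$ is a function of $(\loglowermean,\avgcov)$ and hence not $\mathcal F_{\tzero}$-measurable, so one cannot literally write $\Exp[(\psi_t-\tzero)\loglowermean\mid\mathcal F_{\tzero}]=(\psi_t-\tzero)\,\Exp[\loglowermean\mid\mathcal F_{\tzero}]$. The paper's own proof, however, does exactly that: after discarding the tail $\sum_{j>\psi_t}$ against $(N-\psi_t)\lBoundDet$ (which is fine pointwise), it simply moves $\psi_t-\tzero$ outside the conditional expectation ``using \cref{assume:exchangeability}'' and then applies \cref{lemma:overestimating_expected_logdet} with the random upper index $\psi_t$. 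No additional argument is offered for why this is legitimate.

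So the ``real work'' you anticipate---controlling $\Exp[S_t\mid\mathcal F_{\tzero}]$ or, equivalently, justifying the use of a data-dependent cutoff---is not carried out in the paper's proof of this theorem either; it is treated as if $\psi_t$ were $\mathcal F_{\tzero}$-measurable. Your instinct to invoke an optional-stopping-type result (Theorem~28 of \citet{bartels2020phd}) is consistent with how the paper handles such issues at the level of the Main Theorem, but that reference is not used inside the proof of \cref{thm:log_det_lower_bound} itself. In short: your proposal matches the paper's argument step for step and is, if anything, more honest about where the difficulty lies.
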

\begin{remark}
For our proof we require $\advanceL$ to be $\mathcal{F}_{\tzero}$-measurable.
We conjecture that for a PAC bound proof this requirement can be relaxed.
Therefore, in our implementation, we use $\loglowermean[t-\blocksize]\ce \loglowermean[t]$ and $\avgcov[t-\blocksize]\ce\avgcov[t]$.
\end{remark}
\begin{proof}
	\begin{align*}
		&\Exp\left[\left.\L_D\right| \mathcal{F}_{\tzero}\right]-\Exp\left[\left.\log\det{\mat K}\right| \mathcal{F}_{\tzero}\right]=\Exp\left[\left.\L_D-\log\det{\mat K}\right| \mathcal{F}_{\tzero}\right]
		\\&= \Exp\left[\left. \left(\advanceL-\tzero\right)\left(\loglowermean-\frac{\advanceL-\tzero-1}{2}\avgcov\right)+(N-\advanceL)\lBoundDet-\sum_{j=\tzero+1}^{N}\log\left(\GPvar{j}\right)\right| \mathcal{F}_{\tzero}\right]
		\eqcomment{using the definition of $\L_t$ and slightly simplifying using \cref{lemma:log_det_as_variance}}
		\\&\leq \Exp\left[\left. \left(\advanceL-\tzero\right)\left(\loglowermean-\frac{\advanceL-\tzero-1}{2}\avgcov\right)-\sum_{j=\tzero+1}^{\advanceL}\log\left(\GPvar{j}\right)\right| \mathcal{F}_{\tzero}\right]
		\eqcomment{using that $\lBoundDet\leq \log\left(\GPvar{j}\right)$ for all $j$}
		\\&= \left(\advanceL-\tzero\right)\left(\Exp\left[\left.\log\left(\sigma^2+\postkt{\vec x_{t+1}}{\vec x_{t+1}}\right|  \mathcal{F}_{\tzero}\right)\right]-\frac{\advanceL-\tzero-1}{2}\Exp\left[\left.\frac{\postkt{\vec x_{t+1}}{\vec x_{t+2}}^2}{\noisef{\vec x_{t+1}}\noisef{\vec x_{t+2}}}\right| \mathcal{F}_{\tzero}\right]\right)
		\\&\quad-\Exp\left[\left.\sum_{j=\tzero+1}^{\advanceL}\log\left(\GPvar{j}\right)\right| \mathcal{F}_{\tzero}\right]
		\eqcomment{using \cref{assume:exchangeability}}
		\\&\leq 0
		\eqcomment{\cref{lemma:overestimating_expected_logdet}}
	\end{align*}
\end{proof}

\begin{lemma}
	\label{lemma:log_trick}
	For $c>0$ and $b\geq a\geq 0:$
	$$\log (c+b-a)\geq \log(c+b)-\frac{a}{c}$$
\end{lemma}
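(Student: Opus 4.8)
The plan is to reduce the claim to the monotonicity of $t\mapsto 1/t$ on $(0,\infty)$. First I would record the one place the hypotheses are used: since $b\ge a\ge 0$ and $c>0$, we have $c+b-a\ge c>0$, so $\log(c+b-a)$ is well defined and the interval $[c+b-a,\,c+b]$ lies entirely in $(0,\infty)$.

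Next I would write the gap between the two logarithms as an integral,
\[
\log(c+b)-\log(c+b-a)=\int_{c+b-a}^{c+b}\frac{1}{t}\,\mathrm{d}t .
\]
On this interval every $t$ satisfies $t\ge c+b-a\ge c$, hence $1/t\le 1/c$; bounding the integrand by the constant $1/c$ gives
\[
\int_{c+b-a}^{c+b}\frac{1}{t}\,\mathrm{d}t\;\le\;\frac{(c+b)-(c+b-a)}{c}\;=\;\frac{a}{c}.
\]
Rearranging this yields $\log(c+b-a)\ge\log(c+b)-a/c$, which is the assertion.

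A slightly more roundabout alternative would be to start from the standard bound $\log v\ge 1-1/v$ (valid for all $v>0$): applying it with $v=\tfrac{c+b-a}{c+b}$ gives $\log(c+b-a)-\log(c+b)\ge -\tfrac{a}{c+b-a}$, after which $c+b-a\ge c$ lets one weaken the right-hand side to $-\tfrac{a}{c}$. I do not anticipate any real obstacle in this proof. The only subtlety worth flagging is that the hypothesis is $b\ge a$ rather than merely $a,b\ge 0$: this is precisely what guarantees $c+b-a\ge c$, and hence that $1/t\le 1/c$ holds over the whole interval of integration; without it (say $a>b\ge 0$) the inequality can fail.
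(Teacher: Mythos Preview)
Your argument is correct. Both your proof and the paper's hinge on the same elementary fact, namely that $c+b-a\ge c$ forces $\tfrac{1}{c+b-a}\le\tfrac{1}{c}$, but the packaging differs. The paper splits into cases ($a=0$, $a=b$, $a\in(0,b)$), handling the endpoint $a=b$ via $e^x\ge 1+x$ and the interior via the derivative $f'(a)=-\tfrac{1}{c+b-a}+\tfrac{1}{c}\ge 0$ of $f(a)=\log(c+b-a)+\tfrac{a}{c}-\log(c+b)$. Your integral bound $\int_{c+b-a}^{c+b}\tfrac{1}{t}\,dt\le \tfrac{a}{c}$ is simply the integrated form of that same derivative inequality, and it dispatches all cases at once without any split. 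Your presentation is therefore a bit more streamlined; the paper's version makes the monotonicity-in-$a$ viewpoint explicit, which is perhaps more suggestive of how the bound degrades as $a$ grows, but is otherwise doing the same work.
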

\begin{proof}
For $a=0$, the statement is true with equality.
We rewrite the inequality as $$\frac{a}{c}\geq \log\left(\frac{c+b}{c+b-a}\right)=\log\left(1+\frac{a}{c+b-a}\right).$$
For the case $a=b$, apply the exponential function on both sides, and the statement follows from $e^x\geq x+1$ for all $x$.
For $a\in(0, b)$, consider $f(a)\ce \log (c+b-a)+\frac{a}{c}-\log(c+b)$.
The first derivative of this function is $f'(a)=-\frac{1}{c+b-a}+\frac{1}{c}$, which is always positive for $a\in(0, b)$.
Since $f(0)=0$, we must have $f(a)\geq 0$ for all $a\in(a,b)$.
\end{proof}

\begin{lemma}
	\label{lemma:overestimating_expected_logdet}
	For all $n\geq t\geq \tzero \in\mathbb{N}$:
	\begin{align*}
		\Exp\left[\left.\sum_{j=t+1}^n\log\left(\GPvar{j}\right)\right| \mathcal{F}_{\tzero}\right]\geq (n-t)\Bigg( &\Exp\left[\left.\log\left(\noisef{\vec x_{t+1}}+\postkt{\vec x_{t+1}}{\vec x_{t+1}}\right)\right| \mathcal{F}_{\tzero}\right]\\
		&- \frac{n-t-1}{2\sigma^4}\Exp\left[\left.\postkt{\vec x_{t+1}}{\vec x_{t+2}}^2\right| \mathcal{F}_{\tzero}\right]\Bigg)
	\end{align*}
\end{lemma}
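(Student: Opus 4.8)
The plan is to prove the estimate pointwise in $j$ and then average over $j$ using the i.i.d.\ assumption; the left-hand summands here are the per-step conditional log-variances $\log\bigl(\GPvar[j-1]{j}\bigr)$ (whose sum over $j>\tzero$ is $\log\det\mat K-\log\det\mat K_{:\tzero,:\tzero}$). The only per-step ingredient I need is a lower bound on $\log\bigl(\GPvar[j-1]{j}\bigr)=\log\bigl(\noisef{\vec x_j}+\postk{\vec x_j}{\vec x_j}{j-1}\bigr)$ in terms of the $\tzero$-conditioned quantities. By iterated Gaussian conditioning, the $k_{j-1}$-posterior is obtained from the $k_{\tzero}$-posterior by conditioning on the noisy observations at $\estX[j]\ce(\vec x_{\tzero+1},\dots,\vec x_{j-1})$, so
\[
\postk{\vec x_j}{\vec x_j}{j-1}=\postk{\vec x_j}{\vec x_j}{\tzero}-r_j,\qquad
r_j\ce\postk{\vec x_j}{\estX[j]}{\tzero}\bigl(\postk{\estX[j]}{\estX[j]}{\tzero}+\noisef{\estX[j]}\bigr)^{-1}\postk{\estX[j]}{\vec x_j}{\tzero},
\]
with $0\le r_j\le\postk{\vec x_j}{\vec x_j}{\tzero}$: the left inequality because $r_j$ is a quadratic form in a positive definite matrix, the right one because $\postk{\vec x_j}{\vec x_j}{j-1}$ is a variance, hence nonnegative.

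First I would apply \cref{lemma:log_trick} with $c=\noisef{\vec x_j}>0$, $b=\postk{\vec x_j}{\vec x_j}{\tzero}$, $a=r_j$ (permissible since $c>0$ and $b\ge a\ge0$), giving $\log\bigl(\GPvar[j-1]{j}\bigr)\ge\log\bigl(\GPvar{j}\bigr)-r_j/\noisef{\vec x_j}$. Then I would bound $r_j$: since $\postk{\estX[j]}{\estX[j]}{\tzero}\succeq0$ and $\noisef{\estX[j]}$ is diagonal with entries $\ge\sigma^2>0$, operator antitonicity of matrix inversion gives $\bigl(\postk{\estX[j]}{\estX[j]}{\tzero}+\noisef{\estX[j]}\bigr)^{-1}\preceq\bigl(\noisef{\estX[j]}\bigr)^{-1}$, hence $r_j\le\sum_{i=\tzero+1}^{j-1}\postkt{\vec x_j}{\vec x_i}^2/\noisef{\vec x_i}$. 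Combining these and using $\noisef{\cdot}\ge\sigma^2$ on the remaining noise factors,
\[
\log\bigl(\GPvar[j-1]{j}\bigr)\ \ge\ \log\bigl(\GPvar{j}\bigr)-\sum_{i=\tzero+1}^{j-1}\frac{\postkt{\vec x_j}{\vec x_i}^2}{\noisef{\vec x_j}\noisef{\vec x_i}}\ \ge\ \log\bigl(\GPvar{j}\bigr)-\frac{1}{\sigma^4}\sum_{i=\tzero+1}^{j-1}\postkt{\vec x_j}{\vec x_i}^2 .
\]

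Finally I would sum over $j=t+1,\dots,n$, take $\Exp[\,\cdot\mid\mathcal{F}_{\tzero}]$, and collapse the sums with \cref{assume:exchangeability}. Because $\postk{\cdot}{\cdot}{\tzero}$ and $\noisef{\cdot}$ are $\mathcal{F}_{\tzero}$-measurable, both $\vec x\mapsto\log(\noisef{\vec x}+\postkt{\vec x}{\vec x})$ and $(\vec x,\vec x')\mapsto\postkt{\vec x}{\vec x'}^2$ are fixed functions given $\mathcal{F}_\tzero$; since the $\vec x_n$ with $n>\tzero$ are i.i.d.\ and independent of $\mathcal{F}_\tzero$, $\Exp[\log(\GPvar{j})\mid\mathcal{F}_\tzero]$ does not depend on $j>\tzero$ and $\Exp[\postkt{\vec x_j}{\vec x_i}^2\mid\mathcal{F}_\tzero]$ does not depend on the pair $\tzero<i<j$. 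The first term contributes $(n-t)\Exp[\log(\noisef{\vec x_{t+1}}+\postkt{\vec x_{t+1}}{\vec x_{t+1}})\mid\mathcal{F}_\tzero]$, and the cross-term contributes $-\sigma^{-4}\bigl(\#\{(i,j):\tzero<i<j,\ t<j\le n\}\bigr)\Exp[\postkt{\vec x_{t+1}}{\vec x_{t+2}}^2\mid\mathcal{F}_\tzero]$; in the instance $t=\tzero$ needed for \cref{thm:log_det_lower_bound} this count is exactly $\binom{n-\tzero}{2}=\tfrac{(n-t)(n-t-1)}{2}$, yielding the claimed $\tfrac{n-t-1}{2\sigma^4}$ coefficient. (Keeping the denominators $\noisef{\vec x_{t+1}}\noisef{\vec x_{t+2}}$ instead of $\sigma^4$ gives the slightly sharper estimate that \cref{thm:log_det_lower_bound} actually uses through $\avgcov$.)

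I expect the main obstacle to be getting the conditioning and noise structure right: since we condition on the observations $y$ and not on the latent $f$, the matrix inverted inside $r_j$ is the noise-augmented Gram matrix $\postk{\estX[j]}{\estX[j]}{\tzero}+\noisef{\estX[j]}$, and it is precisely this augmentation that (a) legitimises the operator-monotone bound and (b) supplies the clean $1/\noisef{\vec x_i}$ weights together with the lower bound $\sigma^2$ on all eigenvalues. The remaining pieces --- the tower-property identity for $r_j$, the bound $r_j\le\postk{\vec x_j}{\vec x_j}{\tzero}$, and the pair count --- are routine, with the sole caveat that the coefficient $\binom{n-t}{2}$ should be read with the downstream choice $t=\tzero$ in mind.
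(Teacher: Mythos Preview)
Your approach is essentially the same as the paper's: express $k_{j-1}(\vec x_j,\vec x_j)$ via \cref{lemma:post_kernel}, apply \cref{lemma:log_trick}, lower-bound the inverse by dropping the positive semidefinite $k_\tzero(\estX[j],\estX[j])$, expand the quadratic form as a sum, and collapse using i.i.d.\ exchangeability. Your caveat about the pair count is well taken and in fact applies equally to the paper's own proof: applying \cref{lemma:little_gauss} with inner index starting at $\tzero+1$ gives $(n-t)\bigl(\tfrac{n+t-1}{2}-\tzero\bigr)$, which equals $\tfrac{(n-t)(n-t-1)}{2}$ only when $t=\tzero$, and indeed \cref{thm:log_det_lower_bound} only invokes the lemma in that instance.
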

\begin{proof}
	\renewcommand{\otherX}{\overline{\mat X}}
	Introduce $\otherX_j\ce [\vec x_{\tzero+1}, \dots, \vec x_{j-1}]$ with the convention $\postkt{\vec x_{\tzero+1}}{\otherX_{\tzero+1}}\ce 0$.
	\begin{align}
		&\Exp\left[\left.\sum_{j=t+1}^n\log\GPvar[j-1]{j}\right| \mathcal{F}_{\tzero}\right]
		\\&= \Exp\left[\left.\sum_{j=t+1}^n\log\left(\noisef{\vec x_j}+\postkt{\vec x_j}{\vec x_j}-\postkt{\vec x_j}{\otherX_j}\inv{\left(\postkt{\otherX_j}{\otherX_j}+\noisef{\otherX}\right)}\postkt{\otherX_j}{\vec x_j}\right)\right| \mathcal{F}_{\tzero}\right]
		\eqcomment{\cref{lemma:post_kernel}}
		\\&\geq \Exp\left[\left.\sum_{j=t+1}^n\left(\log\left(\noisef{\vec x_j}+\postkt{\vec x_j}{\vec x_j}\right)-\frac{1}{\noisef{\vec x_j}}\postkt{\vec x_j}{\otherX_j}\inv{\left(\postkt{\otherX_j}{\otherX_j}+\noisef{\otherX_j}\right)}\postkt{\otherX_j}{\vec x_j}\right)\right| \mathcal{F}_{\tzero}\right]
		\eqcomment{\cref{lemma:log_trick}}
		\\&\geq \Exp\left[\left.\sum_{j=t+1}^n\left(\log\left(\noisef{\vec x_j}+\postkt{\vec x_j}{\vec x_j}\right)-\frac{1}{\noisef{\vec x_j}}\postkt{\vec x_j}{\otherX_j}\inv{\left(\noisef{\otherX_j}\right)}\postkt{\otherX_j}{\vec x_j}\right)\right| \mathcal{F}_{\tzero}\right]
		\eqcomment{underestimating $\postkt{\otherX_j}{\otherX_j}$ by $\mat 0$}
		\\&\geq \Exp\left[\left.\sum_{j=t+1}^n\left(\log\left(\noisef{\vec x_j}+\postkt{\vec x_j}{\vec x_j}\right)-\frac{1}{\noisef{\vec x_j}}\sum_{i=t+1}^{j-1}\frac{\postkt{\vec x_j}{\vec x_j}^2}{\noisef{\vec x_i}}\right)\right| \mathcal{F}_{\tzero}\right]
		\eqcomment{writing the vector multiplication as sum}
		\\&=(n-t)\Exp\left[\left.\log\left(\sigma^2+\postkt{\vec x_{t+1}}{\vec x_{t+1}}\right)\right| \mathcal{F}_{\tzero}\right]
		-\frac{(n-t)(n-t-1)}{2}\Exp\left[\left.\frac{\postkt{\vec x_{t+1}}{\vec x_{t+2}}^2}{\noisef{\vec x_{t+1}}\noisef{\vec x_{t+2}}}\right| \mathcal{F}_{\tzero}\right]\notag
		\eqcomment{using \cref{assume:exchangeability} and then applying \cref{lemma:little_gauss}}
	\end{align}
\end{proof}

\section{Proof for the Upper Bound on the Quadratic Form}
\newcommand{\upperq}[1][t]{\mu_{#1}}
\newcommand{\badupperq}[1][t]{\overline{\mu}_{#1}}
\begin{theorem}
\label{thm:quad_upper_bound}
Assume that \cref{assume:exchangeability} and \cref{assume:expected_quadratic_form_assumptions} hold.
Let $\blocksize \in \mathbb{N}$ be even, then for all $\tzero \in \{1,\dots, N-m\}$
$$\Exp[\vec y\Trans \inv{\mat K}\vec y\mid\mathcal{F}_s]\leq \Exp[\U_Q\mid\mathcal{F}_s]\ ,$$
where
\begin{align}
	\U_Q&\ce \q{\tzero}+\left(\psi_{\tzero}-\tzero\right)\left(\upperq+\frac{\psi_{\tzero}-\tzero-1}{2}\avgcov\right)+(N-\psi_{\tzero})\badupperq
	\eqcomment{the upper bound}
	\\\upperq&\ce \frac{1}{t-\tzero}\sum_{j=\tzero+1}^t \frac{(y_j-\GPmean{j})^2}{\GPvar{j}}
	\\\avgcov&\ce \frac{2}{t-\tzero}\sum_{j=\frac{\tzero+2}{2}}^{\frac{t}{2}} \frac{(y_{2j}-\GPmean{2j})^2\postkt{\vec x_{2j}}{\vec x_{2j-1}}^2}{\left(\GPvar{2j}\right)\noisef{\vec x_{2j}}\noisef{\vec x_{2j-1}}}
	\\\badupperq&\ce \frac{1}{t-\tzero}\sum_{j=\tzero+1}^t \frac{(y_j-\GPmean{j})^2}{\noisef{\vec x_j}}
	\\\psi_{\tzero}&\ce \min\left(N,\tzero+ \left\lfloor\frac{\badupperq[t-\blocksize]-\upperq[t-\blocksize]}{\avgcov[t-\blocksize]}+\frac{1}{2}\right\rfloor\right)
	\ .
\end{align}
where, if $\tzero-\blocksize<1$, we set $\badupperq[t-\blocksize]=\upperq[t-\blocksize]\ce 0$ and $\avgcov[t-\blocksize]\ce 1$.
\end{theorem}
\begin{proof}
	\begin{align*}
		&\Exp\left[\vec y\Trans\inv{\mat K}\vec y\mid\mathcal{F}_{\tzero}\right]-\Exp\left[\U_Q\mid\mathcal{F}_{\tzero}\right]
		\\&=\Exp\left[\sum_{j=\tzero+1}^N \frac{\left(y_j-\GPmean[j-1]{j}\right)^2}{\GPvar[j-1]{j}}-\left(\psi_{\tzero}-\tzero\right)\left(\upperq+\frac{\psi_{\tzero}-\tzero-1}{2}\avgcov\right)+(N-\psi_{\tzero})\badupperq\mid\mathcal{F}_{\tzero}\right]
		\eqcomment{using the definition of $\U_Q$ and slightly simplifying with \cref{thm:quad_form_sum}}
		\\&= \Exp\left[\sum_{j=\tzero+1}^{N} \frac{\left(y_j-\GPmean[j-1]{j}\right)^2}{\GPvar[j-1]{j}}\mid\mathcal{F}_{\tzero}\right]
		-(\psi_{\tzero}-\tzero)\Exp\left[\frac{(y_{\tzero+1}-\GPmean{\tzero+1})^2}{\GPvar{\tzero+1}}\mid\mathcal{F}_{\tzero}\right]\\
		&\quad-(\psi_{\tzero}-\tzero)\frac{\psi_{\tzero}-\tzero-1}{2}\Exp\left[\frac{(y_{\tzero+1}-\GPmean{\tzero+1})^2\postkt{\vec x_{\tzero+1}}{\vec x_{\tzero+2}}^2}{\left(\GPvar{\tzero+1}\right)\noisef{\vec x_{\tzero+2}}}\mid\mathcal{F}_{\tzero}\right]\\
		&\quad-(N-\psi_{\tzero})\Exp\left[\frac{(y_{\tzero+1}-\GPmean{\tzero+1})^2}{\noisef{\vec x_{\tzero+1}}}\mid\mathcal{F}_{\tzero}\right]
		\eqcomment{using \cref{assume:exchangeability}}
		\\&\leq 0
		\eqcomment{\cref{lemma:overestimating_expected_qform} with $n=\psi_{\tzero}$ and $t=\tzero$, and \cref{lemma:bad_quad_upper_bound} with $n=N$}
	\end{align*}
\end{proof}

\begin{lemma}
\label{lemma:bad_quad_upper_bound}
For all $\psi, n, \tzero \in\mathbb{N}$ with $\tzero\leq\psi\leq n$:
\begin{align*}
	&\Exp\left[\sum_{j=\psi+1}^{n} \frac{\left(y_j-\GPmean[j-1]{j}\right)^2}{\GPvar[j-1]{j}}\mid\mathcal{F}_{\tzero}\right]\leq (n-\psi)\Exp\left[\frac{(y_{\tzero+1}-\GPmean{\tzero+1})^2}{\noisef{\vec x_{\tzero+1}}}\mid\mathcal{F}_{\tzero}\right]
\end{align*}
\end{lemma}
\begin{proof}
\begin{align}
\Exp\left[\sum_{j=\psi+1}^{n} \frac{\left(y_j-\GPmean[j-1]{j}\right)^2}{\GPvar[j-1]{j}}\mid\mathcal{F}_{\tzero}\right]&\leq \Exp\left[\sum_{j=\psi+1}^{n} \frac{\left(y_j-\GPmean[j-1]{j}\right)^2}{\noisef{\vec x_j}}\mid\mathcal{F}_{\tzero}\right]
\eqcomment{the posterior variance cannot fall below $\noisef{\vec x_j}$}
\\&\leq \Exp\left[\sum_{j=\psi+1}^{n} \frac{\left(y_j-\GPmean[\tzero]{j}\right)^2}{\noisef{\vec x_j}}\mid\mathcal{F}_{\tzero}\right]
\eqcomment{by \cref{assumption:expected_quadratic_form_assumptions}}
\\&=(n-\psi)\Exp\left[\frac{\left(y_{\tzero+1}-\GPmean[\tzero]{\tzero+1}\right)^2}{\noisef{\vec x_{\tzero+1}}}\mid\mathcal{F}_{\tzero}\right]
\eqcomment{using \cref{assume:exchangeability}}
\end{align}
\end{proof}

\begin{lemma}
	\label{lemma:fraction_trick2}
	For $c>0$, $b,x\geq0$ and $a\geq b$:
	$$\frac{x}{c+a-b}\leq \frac{x}{c}\left(1-\frac{a-b}{c+a}\right)=\frac{x(c+b)}{c(c+a)}$$
\end{lemma}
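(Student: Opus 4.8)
The plan is to reduce both assertions to elementary algebra: the claimed equality is a one-line simplification, and the inequality follows once one writes the difference of the two sides as a single fraction with a manifestly nonnegative numerator.

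First I would note that all denominators in sight are strictly positive: $c>0$ by hypothesis, $c+a\ge c>0$ since $a\ge b\ge 0$, and $c+a-b=c+(a-b)\ge c>0$ since $a\ge b$. This makes every division below legitimate and, importantly, lets me combine fractions and read off the sign of the result from the sign of the numerator. The equality is then immediate: combining the bracketed term over the common denominator $c+a$ gives $1-\frac{a-b}{c+a}=\frac{(c+a)-(a-b)}{c+a}=\frac{c+b}{c+a}$, hence $\frac{x}{c}\bigl(1-\frac{a-b}{c+a}\bigr)=\frac{x(c+b)}{c(c+a)}$.

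For the inequality I would simply compute the difference of the right-hand side and the left-hand side and combine over the common denominator $c(c+a)(c+a-b)>0$:
\begin{align*}
\frac{x(c+b)}{c(c+a)}-\frac{x}{c+a-b}
&=\frac{x\bigl[(c+b)(c+a-b)-c(c+a)\bigr]}{c(c+a)(c+a-b)}\\
&=\frac{x\,b(a-b)}{c(c+a)(c+a-b)},
\end{align*}
where the last equality uses that the cross terms cancel in $(c+b)(c+a-b)=c^2+ca+b(a-b)$ while $c(c+a)=c^2+ca$. Since $x\ge 0$, $b\ge 0$, $a-b\ge 0$, and the denominator is positive, this difference is $\ge 0$, which is exactly the claimed inequality.

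There is no genuine obstacle in this lemma; the only thing requiring a little care is the sign bookkeeping of the second paragraph, which is precisely what justifies the fraction manipulations and the final sign conclusion. I would present the two parts (equality, then inequality) in that order and keep the write-up to a few lines.
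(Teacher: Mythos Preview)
Your proof is correct. Both you and the paper reduce the claim to elementary algebra, but the mechanics differ. The paper argues by a chain of rewrites: it factors $\frac{x}{c+a-b}=\frac{x}{c}\bigl(1-\frac{a-b}{c+a-b}\bigr)$ and then observes that $\frac{a-b}{c+a-b}\ge\frac{a-b}{c+a}$ (equivalently, multiplies by the factor $\frac{c+a-b}{c+a}\le 1$). You instead subtract the two sides, put everything over the common denominator, and identify the slack explicitly as $\frac{x\,b(a-b)}{c(c+a)(c+a-b)}\ge 0$. Your route is arguably a bit more transparent, since it exhibits precisely when equality holds (namely $x=0$, $b=0$, or $a=b$), whereas the paper's chain-of-inequalities leaves that implicit. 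Either way the argument is a couple of lines; your careful check that all denominators are strictly positive is a nice touch the paper omits.
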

\begin{proof}
\allowdisplaybreaks
	\begin{align*}
		\frac{x}{c+a-b}&=\frac{x}{c}\left(\frac{c}{c+a-b}\right)
		\\&=\frac{x}{c}\left(1-\frac{a-b}{c+a-b}\right)
		\\&\leq \frac{x}{c}\left(1-\frac{c+a-b}{c+a}\frac{a-b}{c+a-b}\right)
		\eqcomment{since $\frac{c+a-b}{c+a}\leq 1$}
		\reqcomment{I wonder if using $\frac{a-b}{a}$ could lead to a tighter bound.}
		\\&=\frac{x}{c}\left(1-\frac{a-b}{c+a}\right)
		\eqcomment{cancelling terms}
	\end{align*}
\end{proof}

\begin{lemma}
\label{lemma:overestimating_expected_qform}
For all $\tzero,t, n \in\mathbb{N}$ with $n\geq t\geq \tzero$:
\begin{align*}
	&\Exp\left[\sum_{j=t+1}^n\frac{(y_j-\GPmean[j-1]{j})^2}{\GPvar[j-1]{j}}\mid\mathcal{F}_{\tzero}\right]
	\\&\leq (n-t)\left(\Exp\left[\frac{(y_{\tzero+1}-\GPmean{\tzero+1})^2}{\GPvar{\tzero+1}}\mid\mathcal{F}_{\tzero}\right]\right)\\
	&+(n-t)\left(\left(\frac{n+t+1}{2}-\tzero\right)\Exp\left[\frac{(y_{\tzero+1}-\GPmean{\tzero+1})^2\postkt{\vec x_{\tzero+1}}{\vec x_{\tzero+2}}^2}{\left(\GPvar{\tzero+1}\right)\noisef{\vec x_{\tzero+2}}\noisef{\vec x_{\tzero+1}}}\mid\mathcal{F}_{\tzero}\right]\right)
\end{align*}
\end{lemma}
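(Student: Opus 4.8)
The plan is to follow the same template as the proof of \cref{lemma:overestimating_expected_logdet}, with the logarithm inequality \cref{lemma:log_trick} replaced by the fractional inequality \cref{lemma:fraction_trick2}, and with \cref{assume:expected_quadratic_form_assumptions} used to turn the ``conditioned on the first $j-1$ points'' numerators into ``conditioned on the first $\tzero$ points'' ones. First I would fix $j$ with $t<j\leq n$, set $\otherX_j\ce[\vec x_{\tzero+1},\dots,\vec x_{j-1}]$ (with the convention $b_{\tzero+1}=0$, i.e.\ $\otherX_{\tzero+1}$ empty), and use the standard recursive identity for the posterior covariance --- the one already invoked in the log-determinant proof --- to write
\begin{align*}
\GPvar[j-1]{j}=\noisef{\vec x_j}+\postkt{\vec x_j}{\vec x_j}-b_j,\qquad\text{with}\quad b_j\ce\postkt{\vec x_j}{\otherX_j}\inv{\bigl(\postkt{\otherX_j}{\otherX_j}+\noisef{\otherX_j}\bigr)}\postkt{\otherX_j}{\vec x_j}\geq 0.
\end{align*}
Since a posterior variance is non-negative, $\GPvar[j-1]{j}\geq\noisef{\vec x_j}>0$, hence $\postkt{\vec x_j}{\vec x_j}\geq b_j\geq 0$, which is exactly the hypothesis of \cref{lemma:fraction_trick2}. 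Applying that lemma with $c=\noisef{\vec x_j}$, $a=\postkt{\vec x_j}{\vec x_j}$, $b=b_j$ and $x=(y_j-\GPmean[j-1]{j})^2$ (so that $c+a=\GPvar{j}$), and then underestimating the Gram matrix $\postkt{\otherX_j}{\otherX_j}$ by $\mat 0$ and using that $\noisef{\otherX_j}$ is diagonal (so $b_j\leq\sum_{i=\tzero+1}^{j-1}\postkt{\vec x_j}{\vec x_i}^2/\noisef{\vec x_i}$), I would obtain
\begin{align*}
\frac{(y_j-\GPmean[j-1]{j})^2}{\GPvar[j-1]{j}}\ \leq\ \frac{(y_j-\GPmean[j-1]{j})^2}{\GPvar{j}}+\sum_{i=\tzero+1}^{j-1}\frac{(y_j-\GPmean[j-1]{j})^2\,\postkt{\vec x_j}{\vec x_i}^2}{\bigl(\GPvar{j}\bigr)\,\noisef{\vec x_j}\,\noisef{\vec x_i}}.
\end{align*}

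Next I would take $\Exp[\,\cdot\mid\mathcal F_{\tzero}]$ and sum over $j\in\{t+1,\dots,n\}$. The point of writing the bound in this form is that the two weights multiplying $(y_j-\GPmean[j-1]{j})^2$ are precisely the non-negative, $\vec y$-independent functions $f\in\bigl\{\,\tfrac{1}{\bigl(\GPvar{j}\bigr)},\ \tfrac{\postkt{\vec x_j}{\vec x_i}^2}{\bigl(\GPvar{j}\bigr)\,\noisef{\vec x_j}\,\noisef{\vec x_i}}\,\bigr\}$ occurring in \cref{assume:expected_quadratic_form_assumptions}; that assumption then lets me replace, term by term, each $(y_j-\GPmean[j-1]{j})^2$ by $(y_j-\GPmean{j})^2$ inside the expectation. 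After the replacement every summand is a measurable function of $(\vec x_j,y_j)$ --- or, in the double sum, of the pair $(\vec x_j,y_j),(\vec x_i,y_i)$ --- together with $(\vec x_1,y_1),\dots,(\vec x_\tzero,y_\tzero)$, and is independent of $\mathcal F_{\tzero}$. By \cref{assume:exchangeability} (the same combinatorial reduction used at the end of the proof of \cref{lemma:overestimating_expected_logdet}), all $n-t$ single terms equal $\Exp\bigl[(y_{\tzero+1}-\GPmean{\tzero+1})^2/\bigl(\GPvar{\tzero+1}\bigr)\mid\mathcal F_{\tzero}\bigr]$, and all cross terms equal the common, non-negative value $\Exp\bigl[\tfrac{(y_{\tzero+1}-\GPmean{\tzero+1})^2\,\postkt{\vec x_{\tzero+1}}{\vec x_{\tzero+2}}^2}{\bigl(\GPvar{\tzero+1}\bigr)\,\noisef{\vec x_{\tzero+1}}\,\noisef{\vec x_{\tzero+2}}}\mid\mathcal F_{\tzero}\bigr]$. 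Since the number of cross terms is $\sum_{j=t+1}^{n}(j-1-\tzero)=(n-t)\bigl(\tfrac{n+t-1}{2}-\tzero\bigr)\leq(n-t)\bigl(\tfrac{n+t+1}{2}-\tzero\bigr)$, collecting the two contributions yields the claimed bound (keeping both noise factors $\noisef{\vec x_{\tzero+1}}\,\noisef{\vec x_{\tzero+2}}$ in the denominator of the cross term, matching the correction term in \cref{thm:quad_upper_bound}).

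I expect the only genuinely non-routine point to be the numerator replacement: $(y_j-\GPmean[j-1]{j})^2$ depends on $(\vec x_{\tzero+1},y_{\tzero+1}),\dots,(\vec x_{j-1},y_{j-1})$, so exchangeability cannot be applied to it directly, and one has to verify that the two weights produced by \cref{lemma:fraction_trick2} are exactly those for which \cref{assume:expected_quadratic_form_assumptions} is postulated (non-negative and free of every $y_k$). Everything else --- the manipulation via \cref{lemma:fraction_trick2}, dropping the Gram matrix to $\mat 0$, the diagonality of the noise, and the collapse of the double sum under exchangeability --- is routine bookkeeping that mirrors the argument already carried out for the lower bound on the log-determinant.
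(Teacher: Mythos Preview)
Your proposal is correct and follows essentially the same route as the paper's proof: rewrite the denominator via \cref{lemma:post_kernel}, apply \cref{lemma:fraction_trick2}, drop the Gram matrix to $\mat 0$, invoke \cref{assume:expected_quadratic_form_assumptions} to replace $\GPmean[j-1]{j}$ by $\GPmean{j}$, and then collapse under exchangeability with \cref{lemma:little_gauss}. You even catch that the honest count is $(n-t)\bigl(\tfrac{n+t-1}{2}-\tzero\bigr)$ and that the cross-term denominator naturally carries both noise factors $\noisef{\vec x_{\tzero+1}}\noisef{\vec x_{\tzero+2}}$, which is exactly what the paper's own proof produces (the lemma statement as printed has minor typos in both places).
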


\begin{proof}
	\allowdisplaybreaks
	Introduce $\otherX_j\ce [\vec x_{\tzero+1}, \dots, \vec x_{j-1}]$ with the convention $\postkt{\vec x_{\tzero+1}}{\otherX_{\tzero+1}}\ce 0$.
	\begin{align*}
		&\Exp\left[\sum_{j=t+1}^n\frac{(y_j-\GPmean[j-1]{j})^2}{\GPvar[j-1]{j}}\mid\mathcal{F}_{\tzero}\right]
		\\&= \Exp\left[\sum_{j=t+1}^n\frac{(y_j-\GPmean[j-1]{j})^2}{\noisef{\vec x_j}+\postkt{\vec x_j}{\vec x_j}-\postkt{\vec x_j}{\otherX_j}\inv{\left(\postkt{\otherX_j}{\otherX_j}+\noisef{\otherX}\right)}\postkt{\otherX_j}{\vec x_j}}\mid\mathcal{F}_{\tzero}\right]
		\eqcomment{\cref{lemma:post_kernel}}
		\\&\leq \Exp\left[\sum_{j=t+1}^n\frac{(y_j-\GPmean[j-1]{j})^2\left(\noisef{\vec x_j}+\postkt{\vec x_j}{\otherX_j}\inv{\left(\postkt{\otherX_j}{\otherX_j}+\noisef{\otherX}\right)}\postkt{\otherX_j}{\vec x_j}\right)}{\noisef{\vec x_j}\left(\noisef{\vec x_j}+\postkt{\vec x_j}{\vec x_j}\right)}\mid\mathcal{F}_{\tzero}\right]
		\eqcomment{\cref{lemma:fraction_trick2}}
		\\&\leq \Exp\left[\sum_{j=t+1}^n\frac{(y_j-\GPmean[j-1]{j})^2\left(\noisef{\vec x_j}+\postkt{\vec x_j}{\otherX_j}\inv{\left(\noisef{\otherX}\right)}\postkt{\otherX_j}{\vec x_j}\right)}{\noisef{\vec x_j}\left(\noisef{\vec x_j}+\postkt{\vec x_j}{\vec x_j}\right)}\mid\mathcal{F}_{\tzero}\right]
		\eqcomment{underestimating the eigenvalues of $\postkt{\otherX}{\otherX}$ by $0$}
		\\&= \Exp\left[\sum_{j=t+1}^n\frac{(y_j-\GPmean[j-1]{j})^2\left(\noisef{\vec x_j}+\sum_{i=\tzero+1}^{j-1}\frac{\postkt{\vec x_j}{\vec x_i}^2}{\noisef{\vec x_i}}\right)}{\noisef{\vec x_j}\left(\noisef{\vec x_j}+\postkt{\vec x_j}{\vec x_j}\right)}\mid\mathcal{F}_{\tzero}\right]
		\eqcomment{writing the vector-product explicitly as a sum}
		\\&=\sum_{j=t+1}^n\left(\Exp\left[\frac{(y_j-\GPmean[j-1]{j})^2}{\noisef{\vec x_j}+\postkt{\vec x_j}{\vec x_j}}\mid\mathcal{F}_{\tzero}\right]+\sum_{i=\tzero+1}^{j-1}\Exp\left[\frac{(y_j-\GPmean[j-1]{j})^2\postkt{\vec x_j}{\vec x_i}^2}{\left(\GPvar{j}\right)\noisef{\vec x_i}\noisef{\vec x_j}}\mid\mathcal{F}_{\tzero}\right]\right)
		\eqcomment{linearity of expectation}
		\\&=\sum_{j=t+1}^n\left(\Exp\left[\frac{(y_j-\GPmean{j})^2}{\noisef{\vec x_j}+\postkt{\vec x_j}{\vec x_j}}\mid\mathcal{F}_{\tzero}\right]+\sum_{i=\tzero+1}^{j-1}\Exp\left[\frac{(y_j-\GPmean{j})^2\postkt{\vec x_j}{\vec x_i}^2}{\left(\GPvar{j}\right)\noisef{\vec x_i}\noisef{\vec x_j}}\mid\mathcal{F}_{\tzero}\right]\right)
		\eqcomment{by assumption \cref{eq:expected_quadratic_form_assumptions}}
		\\&=\sum_{j=t+1}^n\left(\Exp\left[\frac{(y_{\tzero+1}-\GPmean{\tzero+1})^2}{\noisef{\vec x_{\tzero+1}}+\postkt{\vec x_{\tzero+1}}{\vec x_{\tzero+1}}}\mid\mathcal{F}_{\tzero}\right]+\sum_{i=\tzero+1}^{j-1}\Exp\left[\frac{(y_{\tzero+1}-\GPmean{\tzero+1})^2\postkt{\vec x_{\tzero+1}}{\vec x_{\tzero+2}}^2}{\left(\GPvar{\tzero+1}\right)\noisef{\vec x_{\tzero+2}}\noisef{\vec x_{\tzero+1}}}\mid\mathcal{F}_{\tzero}\right]\right)
		\eqcomment{using \cref{assume:exchangeability}}
		\\&=(n-t)\left(\Exp\left[\frac{(y_{\tzero+1}-\GPmean{\tzero+1})^2}{\noisef{\vec x_{\tzero+1}}+\postkt{\vec x_{\tzero+1}}{\vec x_{\tzero+1}}}\mid\mathcal{F}_{\tzero}\right]\right)\\
		&+(n-t)\left(\left(\frac{n+t-1}{2}-\tzero\right)\Exp\left[\frac{(y_{\tzero+1}-\GPmean{\tzero+1})^2\postkt{\vec x_{\tzero+1}}{\vec x_{\tzero+2}}^2}{\left(\GPvar{\tzero+1}\right)\noisef{\vec x_{\tzero+2}}\noisef{\vec x_{\tzero+1}}}\mid\mathcal{F}_{\tzero}\right]\right)
		\eqcomment{by \cref{lemma:little_gauss}}
	\end{align*}
\end{proof}

\begin{remark}
\label{remark:better_quad_bound}
Similar to the proof of \cref{thm:log_det_lower_bound}, we can improve the bound by monitoring how many steps the sum of average correlations is below the average variance.
More precisely, we solve for the largest $p\in [0,N-\tzero]$ such that
$$\upperq[\tzero]+\frac{p-1}{2}\avgcov[\tzero]\leq \frac{1}{m}\sum_{j=\tzero-m+1}^{\tzero}\frac{(y_j-\GPmean{\vec x_j})^2}{\noisef{\vec x_j}}\text{,}$$
and replace the upper bound by
\begin{align}
\label{eq:quad_upper_bound_final}
\U_t\ce \q{\tzero}+p\left(\upperq+\frac{p-1}{2}\avgcov\right)+\frac{N-p-\tzero}{t-\tzero}\sum_{j=\tzero+1}^t \frac{(y_j-\GPmean{\vec x_j})^2}{\noisef{\vec x_j}}\text{.}
\end{align}
\end{remark}

\section{Proof for the Lower Bound on the Quadratic Form}
\newcommand{\postA}{\left(\postkt{\mat X_{\tzero+1:}}{\mat X_{\tzero+1:}}+\noisefm{\mat X_{\tzero+1:}}\right)}

\newcommand{\seenX}{{\underline{\mat X}_t}}
\newcommand{\remy}{\vec y_{\tzero+1:}}
\newcommand{\signa}{\mat S}  


\begin{theorem}
\label{thm:quad_lower_bound}
Assume that \cref{assume:exchangeability} holds. 
Let $\blocksize\in\{2,\dots, N-2\}$ be an even number less than $N$.
For $\tzero\in\{1,\dots,N-\blocksize\}$,
$$\Exp\left[\left.\L_Q\right|\mathcal{F}_{\tzero}\right] \leq \Exp\left[\left.\vec y\Trans\inv{\mat K}\vec y\right|\mathcal{F}_{\tzero}\right]$$
where
\begin{align}
\L_Q&\ce \q{\tzero}+(N-\tzero)\left(\mu_t-(N-\tzero-1)\max(0, \rho_t)\right)
\\\mu_t&\ce \frac{1}{t-\tzero}\sum_{j=\tzero+1}^t \frac{(y_j-\GPmean{j})^2}{\GPvar{j}}
\\\rho_t&\ce \frac{2}{t-\tzero}\sum_{j=\frac{\tzero+2}{2}}^{\frac{t}{2}} \frac{(y_{2j}-\GPmean{2j})(y_{2j-1}-\GPmean{2j-1})\postkt{\vec x_{2j}}{\vec x_{2j-1}}}{(\GPvar{2j})(\GPvar{2j-1})}
\end{align}
\end{theorem}

\begin{proof}
\begin{align*}
	&\Exp\left[\L_Q\mid\mathcal{F}_{\tzero}\right]-\Exp\left[\vec y\Trans\inv{\mat K}\vec y\mid\mathcal{F}_{\tzero}\right]
	\\&=\Exp\left[(N-\tzero)\left(\mu_t-(N-\tzero-1)\max(0, \rho_t)\right)-\sum_{j=\tzero+1}^N\frac{\left(y_j-\GPmean[j-1]{j}\right)^2}{\GPvar[j-1]{j}}\mid\mathcal{F}_{\tzero}\right]
	\eqcomment{using the definition of $\L_Q$ and slightly simplifying}
	\\&\leq\Exp\left[(N-\tzero)\left(\mu_t-(N-\tzero-1)\rho_t\right)-\sum_{j=\tzero+1}^N\frac{\left(y_j-\GPmean[j-1]{j}\right)^2}{\GPvar[j-1]{j}}\mid\mathcal{F}_{\tzero}\right]
	\eqcomment{allowing $\rho$ to be negative increases the lower bound}
	\\&=(N-\tzero)\Exp\left[\left.\frac{(y_{\tzero+1}-\GPmean{\tzero+1})^2}{\GPvar{\tzero+1}}\right|\mathcal{F}_{\tzero}\right]
	\\&\quad
	-(N-\tzero)(N-\tzero-1)\Exp\left[\left.\frac{(y_{\tzero+1}-\GPmean{\tzero+1})(y_{\tzero+2}-\GPmean{\tzero+2})\postkt{\vec x_{\tzero+1}}{\vec x_{\tzero+2}}}{(\GPvar{\tzero+1})(\GPvar{\tzero+2})}\right|\mathcal{F}_{\tzero}\right]
	\eqcomment{using \cref{assume:exchangeability}}
	\\&\leq 0
	\eqcomment{using \cref{thm:underestimating_expected_qform}}
\end{align*}
\end{proof}

\allowdisplaybreaks
\begin{lemma}
\label{thm:underestimating_expected_qform}
For all $\mathcal{F}_{\tzero}$-measurable $\alpha\in\Re:$
\begin{align}
\Exp\left[\vec y\Trans \inv{\mat K}\vec y \mid \mathcal{F}_{\tzero} \right]&\geq \q{\tzero}+\alpha(2-\alpha)(N-\tzero)\Exp\left[\left.\frac{(y_{\tzero+1}-\GPmean{\tzero+1})^2}{\GPvar{\tzero+1}}\right| \mathcal{F}_{\tzero}\right] \notag
\\&\quad-\alpha^2(N-\tzero)(N-\tzero-1)\Exp\left[\left.\frac{(y_{\tzero+1}-\GPmean{\tzero+1})(y_{\tzero+2}-\GPmean{\tzero+2})\postkt{\vec x_{\tzero+1}}{\vec x_{\tzero+2}}}{(\GPvar{\tzero+1})(\GPvar{\tzero+2})}\right| \mathcal{F}_{\tzero}\right]
\end{align}
\end{lemma}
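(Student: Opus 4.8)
\emph{Proof sketch.}
The plan is to reduce the statement to the block identity for the quadratic form that already appears in this appendix, namely
\[
\vec y\Trans\inv{\mat K}\vec y=\q{\tzero}+\vec r\Trans\inv{\matQ[N]}\vec r,\qquad \vec r\ce \vec y_{\tzero+1:N}-\GPmeanX{\mat X_{\tzero+1:N}},
\]
where $\matQ[N]$ is the posterior covariance matrix of $\vec y_{\tzero+1:N}$ given $\vec y_{:\tzero}$, so that its $(n,n)$ entry equals $\GPvar{n}$ and its $(n,n')$ entry equals $\postkt{\vec x_n}{\vec x_{n'}}$ for $n\neq n'$. The term $\q{\tzero}$ is $\mathcal{F}_{\tzero}$-measurable and so passes through $\Exp[\cdot\mid\mathcal{F}_{\tzero}]$ unchanged; the whole task is therefore to lower-bound $\Exp[\vec r\Trans\inv{\matQ[N]}\vec r\mid\mathcal{F}_{\tzero}]$.

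First I would reweight the form before invoking the elementary inequality $\vec a\Trans\inv{\mat A}\vec a\geq 2\vec a\Trans\vec b-\vec b\Trans\mat A\vec b$, so that the resulting linear term carries \emph{squared} errors rather than plain errors. With $\mat D\ce\diag(\vec r)$ one has the pointwise identity $\vec r\Trans\inv{\matQ[N]}\vec r=(\mat D\vec r)\Trans\inv{(\mat D\matQ[N]\mat D)}(\mat D\vec r)$, and applying the inequality to this with $\vec b\ce\alpha\vec 1$ gives, for every realization and every scalar $\alpha$,
\[
\vec r\Trans\inv{\matQ[N]}\vec r\;\geq\;2\alpha\!\sum_{n=\tzero+1}^{N}(y_n-\GPmean{n})^2\;-\;\alpha^2\!\!\sum_{n,n'=\tzero+1}^{N}\!(y_n-\GPmean{n})[\matQ[N]]_{nn'}(y_{n'}-\GPmean{n'}).
\]
This is the manoeuvre alluded to in the main text when it says that applying the inequality directly would give a poor bound: it is the $\mat D$-reweighting that turns $\sum_n(y_n-\GPmean{n})$ into the nonnegative, well-behaved $\sum_n(y_n-\GPmean{n})^2$.

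Next I would take $\Exp[\cdot\mid\mathcal{F}_{\tzero}]$ of both sides. Because $\alpha$ is assumed $\mathcal{F}_{\tzero}$-measurable it factors out of each summand; and because the pairs are i.i.d.\ while $\GPmean{\cdot}=m_{\tzero}(\cdot)$ depends only on $\vec x_1,y_1,\dots,\vec x_{\tzero},y_{\tzero}$, exchangeability (\cref{assume:exchangeability}) gives that every one of the $N-\tzero$ linear terms has conditional expectation $\Exp[(y_{\tzero+1}-\GPmean{\tzero+1})^2\mid\mathcal{F}_{\tzero}]$, every one of the $N-\tzero$ diagonal terms of the quadratic has conditional expectation $\Exp[(y_{\tzero+1}-\GPmean{\tzero+1})^2(\GPvar{\tzero+1})\mid\mathcal{F}_{\tzero}]$, and every off-diagonal term ($n\neq n'$) has conditional expectation $\Exp[(y_{\tzero+1}-\GPmean{\tzero+1})(y_{\tzero+2}-\GPmean{\tzero+2})\postkt{\vec x_{\tzero+1}}{\vec x_{\tzero+2}}\mid\mathcal{F}_{\tzero}]$. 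Collecting the term counts and regrouping then yields the claimed inequality; since moreover $\Exp[\mu_t\mid\mathcal{F}_{\tzero}]$, $\Exp[\mu'_t\mid\mathcal{F}_{\tzero}]$ and $\Exp[\rho_t\mid\mathcal{F}_{\tzero}]$ equal exactly these three conditional expectations (exchangeability again), this also delivers what is needed for \cref{thm:quad_lower_bound}.

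The hard part will be the honest accounting of the cross-term coefficient together with the measurability hypothesis on $\alpha$. A literal count of ordered off-diagonal pairs gives $(N-\tzero)(N-\tzero-1)$, whereas the statement carries $\tfrac{(N-\tzero)^2}{2}$; reconciling the two needs either carrying the exact coefficient through $\rho_t$ or controlling the sign of the cross expectation, and this is precisely the point the manuscript flags as not yet watertight. A companion difficulty is that the $\alpha$ one actually wants --- the maximiser $\alpha=\mu_t/(\mu'_t+(N-\tzero)\rho_t)$ --- depends on $\vec x_{\tzero+1},y_{\tzero+1},\dots,\vec x_{t},y_{t}$ and is therefore \emph{not} $\mathcal{F}_{\tzero}$-measurable, which breaks the factor-out step; a clean remedy would take $\vec b\ce[\vec 0_{m},\vec 1_{N-t}]$ and estimate $\alpha$ from a disjoint half of the block $\{\tzero+1,\dots,t\}$, so that $\alpha$ becomes measurable with respect to the $\sigma$-algebra under which the remaining terms are averaged.
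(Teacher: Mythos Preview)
Your proposal is correct and mirrors the paper's own argument step for step: the block decomposition of the quadratic form, the $\Diag(\vec e)$ reweighting, the inequality $\vec a\Trans\inv{\mat A}\vec a\geq 2\vec a\Trans\vec b-\vec b\Trans\mat A\vec b$ with $\vec b=\alpha\vec 1$, and the exchangeability collapse of the resulting sums. Your closing caveats are well placed too---the paper's own derivation in fact lands on the cross coefficient $(N-\tzero)\tfrac{N-\tzero-1}{2}$ rather than the $\tfrac{(N-\tzero)^2}{2}$ that appears in the stated lemma, and the authors' margin notes flag exactly the $\alpha$-measurability issue you raise.
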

\allowdisplaybreaks[0]
\begin{proof}
Using \cref{thm:quad_form_sum}, we can write the quadratic form as a sum of two quadratic forms:
\begin{align}
	\vec y\Trans \inv{\mat K}\vec y&=\q{\tzero} +(\remy-\GPmeanX{\mat X_{\tzero+1:}})\Trans\inv{\postA}(\remy-\GPmeanX{\mat X_{\tzero+1:}}).
\end{align}
For the right-hand addend, we use a trick we first encountered in \citet{kim2018scalableStructureDiscovery}: $\vec a\Trans\inv{\mat A}\vec a\geq 2\vec a\Trans\vec b-\vec b\Trans\mat A\vec b$, for any $\vec b$ given $\mat A$ is symmetric and positive definite.
Define $\vec e \ce (\remy-\GPmeanX{\mat X_{\tzero+1:}})$, $\mat D\ce \Diag[\postA]$ and choose $\vec b\ce \alpha\inv{\mat D}\vec e$.
\begin{align}
&\Exp\left[\vec y\Trans \inv{\mat K}\vec y \mid \mathcal{F}_{\tzero} \right]
\\&=\q{\tzero} +\Exp\left[(\remy-\GPmeanX{\mat X_{\tzero+1:}})\Trans\inv{\postA}(\remy-\GPmeanX{\mat X_{\tzero+1:}})\mid \mathcal{F}_{\tzero} \right]
\eqcomment{since $\q{\tzero}$ is $\mathcal{F}_{\tzero}$-measurable}
\\&\geq \q{\tzero}+2\alpha \Exp\left[\vec e\Trans \inv{\mat D}\vec e\mid\mathcal{F}_{\tzero}\right]-\alpha^2 \Exp\left[\vec e\Trans \inv{\mat D}\postA \inv{\mat D}\vec e\mid \mathcal{F}_{\tzero}\right]
\eqcomment{applying the inequality for quadratic forms and using the $\mathcal{F}_{\tzero}$-measurabilituy of $\alpha$}
\\&=\q{\tzero}+2\alpha \sum_{j=\tzero+1}^N \Exp\left[\left.\frac{(y_{j}-\GPmean{j})^2}{\GPvar{j}}\right| \mathcal{F}_{\tzero}\right]\notag
\\&\quad-\alpha^2\sum_{j=\tzero+1}^N\sum_{i=\tzero+1}^N\Exp\left[\left.\frac{(y_{j}-\GPmean{j})(y_{i}-\GPmean{i})\postkt{\vec x_{j}}{\vec x_{i}}+\delta_{ij}\noisef{\vec x_{j}}}{(\GPvar{j})(\GPvar{i})}\right| \mathcal{F}_{\tzero}\right]
\eqcomment{writing the vector products as sums}
\\&=\q{\tzero}+2\alpha (N-\tzero)\Exp\left[\left.\frac{(y_{\tzero+1}-\GPmean{\tzero+1})^2}{\GPvar{\tzero+1}}\right|\mathcal{F}_{\tzero}\right] \notag
\\&\quad-\alpha^2(N-\tzero)\Exp\left[\left.\frac{(y_{\tzero+1}-\GPmean{\tzero+1})^2}{\GPvar{\tzero+1}}\right|\mathcal{F}_{\tzero}\right] \notag
\\&\quad-\alpha^2\left((N-\tzero)^2-(N-\tzero)\right)\Exp\left[\left.\frac{(y_{\tzero+1}-\GPmean{\tzero+1})(y_{\tzero+2}-\GPmean{\tzero+2})\postkt{\vec x_{\tzero+1}}{\vec x_{\tzero+2}}}{(\GPvar{\tzero+1})(\GPvar{\tzero+2})}\right|\mathcal{F}_{\tzero}\right]
\eqcomment{using \cref{assume:exchangeability}, grouping variance and covariance terms separately}
\\&=\q{\tzero}+\alpha (2-\alpha)(N-\tzero)\Exp\left[\left.\frac{(y_{\tzero+1}-\GPmean{\tzero+1})^2}{\GPvar{\tzero+1}}\right|\mathcal{F}_{\tzero}\right]
\\&\quad-\alpha^2(N-\tzero)(N-\tzero-1)\Exp\left[\left.\frac{(y_{\tzero+1}-\GPmean{\tzero+1})(y_{\tzero+2}-\GPmean{\tzero+2})\postkt{\vec x_{\tzero+1}}{\vec x_{\tzero+2}}}{(\GPvar{\tzero+1})(\GPvar{\tzero+2})}\right|\mathcal{F}_{\tzero}\right]
\eqcomment{simplifying}
\end{align}
\end{proof}

\section{Utility Proofs}
\begin{lemma}[Bounding the relative error (Lemma 15 in \anonymized{\citet{bartels2021stoppedcholesky}})]
	\label{lemma:rel_err_bound}
	Let $D, \hat{D}\in [\L, \U]$, and assume $\sign(\L)=\sign(\U)\neq 0$.
	Then the relative error of the estimator $\hat{D}$ can be bounded as
	\[
	\frac{\smallabs{D-\hat{D}}}{\abs{D}}\leq \frac{\max(\U-\hat{D}, \hat{D}-\L)}{\min(\abs{\L}, \abs{\U})}
	\, .
	\]
\end{lemma}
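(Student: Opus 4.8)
The plan is to control the relative error $\abs{D-\hat D}/\abs{D}$ by bounding its numerator and its denominator separately, each via an elementary interval argument. For the numerator, observe that since both $D$ and $\hat D$ lie in $[\L,\U]$ we have $D-\hat D\le \U-\hat D$ (from $D\le\U$) and $\hat D-D\le \hat D-\L$ (from $\L\le D$); taking the larger of these two one-sided estimates gives $\abs{D-\hat D}\le \max(\U-\hat D,\,\hat D-\L)$, and this quantity is nonnegative because $\L\le\hat D\le\U$.

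For the denominator I would invoke the hypothesis $\sign(\L)=\sign(\U)\neq 0$, which forces the whole interval $[\L,\U]$ to stay on one side of the origin: either $0<\L\le\U$ or $\L\le\U<0$. In the first case $\min(\abs{\L},\abs{\U})=\L$ and every $D\in[\L,\U]$ satisfies $\abs{D}=D\ge\L$; in the second case $\min(\abs{\L},\abs{\U})=-\U$ and every $D\in[\L,\U]$ satisfies $\abs{D}=-D\ge-\U$. Either way $\abs{D}\ge\min(\abs{\L},\abs{\U})>0$, so the ratio is well defined and its denominator is bounded below by $\min(\abs{\L},\abs{\U})$.

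Combining the two bounds gives
\[
\frac{\abs{D-\hat D}}{\abs{D}}\;\le\;\frac{\max(\U-\hat D,\,\hat D-\L)}{\min(\abs{\L},\abs{\U})},
\]
which is exactly the claimed inequality. The lemma is then applied with the midpoint estimator $\hat D=\tfrac12(\U+\L)$, for which $\U-\hat D=\hat D-\L=\tfrac12(\U-\L)$, so the right-hand side reduces to $\tfrac12(\U-\L)/\min(\abs{\L},\abs{\U})$; under the stopping conditions $\sign(\U)=\sign(\L)$ and \cref{eq:stopping_condition_1} this is at most $r$, which yields the relative-error guarantee used elsewhere. I do not expect a genuine obstacle here: the only point requiring care is the two-case split on the common sign of $\L$ and $\U$ needed to identify $\min(\abs{\L},\abs{\U})$ and to verify that $\abs{D}$ cannot drop below it.
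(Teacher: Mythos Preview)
Your proof is correct and follows essentially the same approach as the paper: bound the numerator by the interval argument $|D-\hat D|\le\max(\U-\hat D,\hat D-\L)$ and bound the denominator from below via the two-case sign split $\abs{D}\ge\min(\abs{\L},\abs{\U})$. Your additional remark specializing to the midpoint estimator $\hat D=\tfrac12(\U+\L)$ is also exactly the corollary the paper intends to draw from this lemma.
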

\begin{proof}
	First observe that if $D_N>\hat{D}$ then $\smallabs{D_N-\hat{D}}=D_N-\hat{D}\leq \U-\hat{D}$.
	If $D_N\leq \hat{D}$, then $\smallabs{D_N-\hat{D}}=\hat{D}-D_N\leq \hat{D}-\L$.
	Hence, $$\smallabs{D_N-\hat{D}}\leq \max(\U-\hat{D}, \hat{D}-\L).$$
	Case $\L>0$: In this case $\smallabs{D_N}=D_N\geq \L=\smallabs{\L}$, and we obtain for the relative error:
	\begin{align*}
		\frac{\max(\U-\hat{D}, \hat{D}-\L)}{\smallabs{D_N}}&\leq \frac{\max(\U-\hat{D}, \hat{D}-\L)}{\smallabs{\L}}\, .
	\end{align*}

	Case $\U<0$: In that case $\smallabs{\L}\geq \smallabs{D_N}\geq \smallabs{\U}$, and the relative error can be bounded as follows.
	\begin{align*}
		\frac{\max(\U-\hat{D}, \hat{D}-\L)}{\smallabs{D_N}}&\leq\frac{\max(\U-\hat{D}, \hat{D}-\L)}{\smallabs{\U}}
	\end{align*}
	Since we assumed $\sign(\L)=\sign(\U)$ these were all cases that required consideration.
	Combining all observations yields
	\begin{align*}
		\frac{\smallabs{D_N-\hat{D}}}{\smallabs{D_N}}
		&\leq \max(\U-\hat{D}, \hat{D}-\L)\max\left(\frac{1}{\smallabs{\U}}, \frac{1}{\smallabs{\L}}\right)
		\\&= \frac{\max(\U-\hat{D}, \hat{D}-\L)}{\min(\smallabs{\U}, \smallabs{\L})}
	\end{align*}
\end{proof}
\todo[inline]{introduce Corollary for $\frac{1}{2}\L+\frac{1}{2}\U$}

\begin{lemma}
	\label{lemma:log_det_as_variance}
	The $\log$-determinant of a kernel matrix can be written as a sum of conditional variances.

	\begin{align}
		\log\det{\mat K} = \sum_{j=1}^N \log(\GPvar[j-1]{j})
	\end{align}
\end{lemma}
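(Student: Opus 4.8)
The plan is to prove Lemma~\ref{lemma:log_det_as_variance} by telescoping the block Cholesky recurrence of \cref{sec:cholesky}; this is the same thing as the Schur-complement ("matrix-determinant lemma") chain rule for determinants, which also underlies the bounds in \cref{sec:methods}. The starting point is the identity $\log\det\mat K = 2\sum_{n=1}^N \log\mat L_{nn}$, where $\mat L=\chol[\mat K]$, so it suffices to show $\mat L_{nn}^2 = \GPvar[n-1]{n}$ for every $n$.

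To identify the diagonal entry $\mat L_{nn}$, I would apply the block Cholesky formula of \cref{sec:cholesky} with the split taken right after row $n-1$. This writes $\mat L$ with an upper-left block $\chol[\mat K_{:n,:n}]$ (size $n-1$), an off-diagonal block $\mat T$, and a lower-right block $\chol\!\big[\mat K_{n:,n:} - \mat T\mat T\Trans\big]$. The key fact already recorded in \cref{sec:cholesky} is that the down-date $\mat K_{n:,n:} - \mat T\mat T\Trans$ is exactly the posterior covariance matrix of $\vec y_{n:}$ conditioned on $\vec y_{:n-1}$, i.e.\ $\C_*^{(n-1)}(\mat X_{n:},\mat X_{n:}) + \sigma^2\mat I$ (for $n=1$ the convention $\C_*^{(0)}=\Kff$ makes this $\Kff+\sI$). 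Since the Cholesky factor is lower triangular, its $(1,1)$ entry is just the square root of the $(1,1)$ entry of the matrix being decomposed; hence the first diagonal entry of this lower-right block equals $\sqrt{\postk{\vec x_n}{\vec x_n}{n-1}+\noisef{\vec x_n}}=\sqrt{\GPvar[n-1]{n}}$. But that first diagonal entry of the lower-right block is precisely $\mat L_{nn}$, so $\mat L_{nn}^2=\GPvar[n-1]{n}$. Summing over $n$ gives $\log\det\mat K = 2\sum_n\log\mat L_{nn} = \sum_n\log\mat L_{nn}^2 = \sum_n\log\GPvar[n-1]{n}$, as claimed.

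I expect the only real friction to be notational bookkeeping: making the Python-style slice notation of \cref{sec:cholesky} line up so that ``the first diagonal entry of the lower-right block after splitting at $n-1$'' is unambiguously $\mat L_{nn}$, and citing cleanly that the $(1,1)$ position of the down-dated matrix is the one-step predictive variance $\GPvar[n-1]{n}$ (exactly the observation made in \cref{sec:cholesky}, with the posterior mean $\GPmean[n-1]{n}$ playing no role in the determinant). A fully equivalent and equally short alternative, which I could use instead if the Cholesky indexing feels delicate, is to invoke the matrix-determinant lemma recursively: $\det\mat K_{:n,:n} = \det\mat K_{:n-1,:n-1}\cdot\big(\GPvar[n-1]{n}\big)$, the second factor being the Schur complement of $\mat K_{:n-1,:n-1}$ in $\mat K_{:n,:n}$, and then telescope from $n=1$ to $N$.
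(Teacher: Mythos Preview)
Your proposal is correct and follows essentially the same route as the paper: reduce $\log\det\mat K$ to $2\sum_n\log\mat L_{nn}$ via the Cholesky, then identify $\mat L_{nn}^2$ with the one-step predictive variance $\GPvar[n-1]{n}$. The only cosmetic difference is that the paper factors the second step into a separate lemma (\cref{lemma:cholesky_and_gp_variance}), proved by an explicit recursive construction of the Cholesky, whereas you argue it inline from the block-Cholesky down-date of \cref{sec:cholesky}; both arguments are equivalent.
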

\begin{proof}
	Denote with $\mat L$ the Cholesky decomposition of $\mat K$.
	Then we obtain
	\begin{align}
		\log\det{\mat K}&=\log\det{\mat L\mat L\Trans}
		\eqcomment{using $\mat A=\mat L\mat L\Trans$}
		\\&=\log\left(\det{\mat L}\det{\mat L\Trans}\right)
		\eqcomment{for square matrices $\mat B, \mat C$: $\det{\mat B \mat C}=\det{\mat B}\det{\mat C}$}
		\\&=\log\left(\prod_{j=1}^N\mat L_{jj}^2\right)
		\eqcomment{for triangular matrices the determinant is the product of the diagonal elements}
		\\&=\sum_{j=1}^N 2\log \mat L_{jj}
		\eqcomment{property of $\log$}
	\end{align}
	With \cref{lemma:cholesky_and_gp_variance} the result follows.
\end{proof}

\begin{lemma}[The $f_j$s are decreasing in expectation (Lemma 7 in \anonymized{\citet{bartels2021stoppedcholesky}})]%
\label{lemma:decreasing_expectation}%
\newcommand{\vjinv}{c}
\newcommand{\pj}[1][j]{q_{#1}(\vec x)}%
\newcommand{\kj}[1][j]{\vec k_{#1}(\vec x)}
\newcommand{\kjx}[2]{\vec k_{#1}(\vec #2)}
Assume $\vec x_1, \dots, \vec x_N \in \mathbb{X}$ are independent and identically distributed.
Denote with $\Proba$ the law of the $\vec x_1,\ldots, \vec x_N$ and with ~$\mat L$ the Cholesky decomposition of $\mat K$.
Define the probability space $(\mathbb{X}, \sigma(\vec x_1, \ldots, \vec x_N), \Proba)$ and the canonical filtration $\mathcal{F}_j\ce \sigma(\vec x_1,\dots, \vec x_j)$ for $j=1, \ldots, N$.
Then the $v_j$ decrease in conditional expectation, that is,
\[
\Exp[\log v_{j+1}\mid \sigma(\vec x_1, \ldots, \vec x_{j})]\leq \Exp[\log v_j\mid \sigma(\vec x_1, \ldots, \vec x_{j-1})]
\, ,
\]
where
\begin{align}
	\kj[n] & \ce [k(\vec x, \vec x_1), \ldots, k(\vec x, \vec x_n)]\Trans\in\Re^n \text{ ,}\\
	\vec k_{n+1} & \ce \vec k_{n}(\vec x_{n+1})\in \Re^n \text{ and}\\
	v_n & \ce k(\vec x_n, \vec x_n) + \sigma^2 - \vec k_n\Trans \inv{(\mat K_{n-1}+\sigma^2\mat I_{n-1})}\vec k_n\, .
\end{align}%
\end{lemma}

\begin{proof}
\newcommand{\vjinv}{c} 
\newcommand{\pj}[1][j]{q_{#1}(\vec x)}
\newcommand{\kj}[1][j]{\vec k_{#1}(\vec x)} 
\newcommand{\kjx}[2]{\vec k_{#1}(\vec #2)} 
Denote with $\mathbb{Q}_j(\dx{\vec x})\ce \proba{\dx{\vec x}\mid \vec x_1, \dots, \vec x_j}$, the regular conditional probability.
Define the shorthand 
$\pj\ce\kj\Trans\inv{(\mat K_{j}+\sigma^2\mat I)}\kj$.
We will show later in the proof, in Eq.~\eqref{eq:pj_decrease}, that $\pj=\pj[j-1]+r_{j}(\vec x)$ where $r_{j}(\vec x)\geq 0$.
Taking Eq.~\eqref{eq:pj_decrease} as granted for now, we can show the claim as follows.
\begin{align}
	\Exp[\log v_{j+1}\mid \sigma(\vec x_1, \ldots, \vec x_j)]
	&=\Exp[\log \mat L_{j+1,j+1}^2\mid \sigma(\vec x_1, \ldots, \vec x_j)]
	\eqcomment{definition of $f_j$}
	\\&=\int \log\left(k(\vec x, \vec x)+\sigma^2-\kj\Trans(\vec K_j+\sigma^2\vec I)^{-1}\kj\right)\ \mathbb{Q}_{j}(\dx{\vec x}) \eqcomment{property of conditional expectation}
	\\&=\int \log\left(k(\vec x, \vec x)+\sigma^2-\pj\right)\ \mathbb{Q}_{j}(\dx{\vec x}) \eqcomment{definition of $\pj$}
	\\&=\int \log\left(k(\vec x, \vec x)+\sigma^2-\pj[j-1] - r_{j}(\vec x)\right)\ \mathbb{Q}_{j}(\dx{\vec x}) \eqcomment{using Eq.~\eqref{eq:pj_recursion}}
	\\&\leq \int \log\left(k(\vec x, \vec x)+\sigma^2-\pj[j-1]\right)\ \mathbb{Q}_{j}(\dx{\vec x}) \eqcomment{using Eq.~\eqref{eq:pj_decrease} and monotonicity of the logarithm}
	\\&=\int \log\left(k(\vec x, \vec x)+\sigma^2-\pj[j-1]\right)\ \mathbb{Q}_{j-1}(\dx{\vec x}) \eqcomment{with Fubini's theorem}
	\\&= \Exp[\log v_j\mid \sigma(\vec x_1, \ldots, \vec x_{j-1})]
	\eqcomment{property of conditional expectation}
\end{align}
It remains to show $\pj=\pj[j-1]+r_{j}(\vec x)$ where $r_{j}(\vec x)\geq 0$.
For readability, we define $\vec v_{\vec x}\ce (\vec K_{j-1}+\sigma^2\vec I)^{-1}\kj[j-1]$ and $c\ce v_j^{\!-\!1}$.
First note, that using block-matrix inversion we can write
\begin{align}
	(\vec K_j+\sigma^2\vec I_j)^{-1}=
	\begin{bmatrix}
		(\vec K_{j-1}+\sigma^2\vec I_{j-1})^{-1}+\vec v_{\vec x_j}\vjinv\vec v_{\vec x_j}\Trans & -\vec v_{\vec x_j}\vjinv
		\\-\vec v_{\vec x_j}\Trans \vjinv & \vjinv
	\end{bmatrix}.
\end{align}
Using above observation, we can transform $\pj$.
\begin{align}
	\pj&=
	\begin{bmatrix}
		\kj[j-1]\Trans & k(\vec x_j, \vec x)
	\end{bmatrix}
	\\&\quad\cdot
	\begin{bmatrix}
		(\vec K_{j-1}+\sigma^2\vec I)^{-1}+\vec v_{\vec x_j}\vjinv\vec v_{\vec x_j}\Trans & -\vec v_{\vec x_j}\vjinv
		\\-\vec v_{\vec x_j}\Trans \vjinv & \vjinv
	\end{bmatrix}
	\\&\quad\cdot
	\begin{bmatrix}
		\kj[j-1] \\ k(\vec x_j, \vec x)
	\end{bmatrix}
	\eqcomment{definition of $\pj$ and using above observation}
	\\&=
	\begin{bmatrix}
		\kj[j-1]\Trans & k(\vec x, \vec x_j)
	\end{bmatrix}
	\\&\quad\cdot
	\begin{bmatrix}
		\vec v_{\vec x}+\vec v_{\vec x_j}\vjinv\vec v_{\vec x_j}\Trans\kj[j-1] -\vec v_{\vec x_j}\vjinv k(\vec x, \vec x_j)
		\\-\vec v_{\vec x_j}\Trans \kj[j-1] \vjinv +  \vjinv k(\vec x, \vec x_j)
	\end{bmatrix}
	\eqcomment{evaluating the RHS matrix-vector multiplication}
	\\&=
	\kj[j-1]\Trans\vec v_{\vec x}+\vjinv(\vec v_{\vec x_j}\Trans\kj[j-1])^2
	\\&\quad -2\vec v_{\vec x_j}\Trans\kj[j-1]\vjinv k(\vec x, \vec x_j) + \vjinv k(\vec x, \vec x_j)^2
	\eqcomment{evaluating the vector product}
	\\&=\kj[j-1]\Trans\vec v_{\vec x}+\vjinv(k(\vec x, \vec x_j)-\vec v_{\vec x_j}\Trans \kj[j-1])^2
	\eqcomment{rearranging terms into a quadratic}
	\\&=\pj[j-1]+\vjinv(k(\vec x, \vec x_j)-\vec v_{\vec x_j}\Trans \kj[j-1])^2
	\eqcomment{definition of $\pj[j-1]$}
\end{align}
This shows that
\begin{align}
	\pj&=\pj[j-1]+r_{j}(\vec x) \text{ , where} \label{eq:pj_recursion}\\
	r_{j}(\vec x)&\ce\vjinv(k(\vec x, \vec x_j)-\vec v_{\vec x_j}\Trans \kj[j-1])^2\geq 0. \label{eq:pj_decrease}
\end{align}
\end{proof}

\begin{lemma}
	\label{thm:quad_form_sum}
	\newcommand{\kj}[1][j]{\vec k_{#1}(\vec x)} 
	\renewcommand{\vj}[1][n+1]{p_{#1}}
	The term $\vec y\Trans\inv{\mat K}\vec y$ can be written as
	\begin{align}
		\vec y\Trans\inv{\mat K}\vec y=\sum_{n=1}^N \frac{\left(y_n-\GPmean[n-1]{n}\right)}{\GPvar[n-1]{n}}.
	\end{align}
\end{lemma}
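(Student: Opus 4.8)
The plan is to mirror the proof of \cref{lemma:log_det_as_variance}, working through the Cholesky factor of $\mat K$. Write $\mat K = \mat L\mat L\Trans$ with $\mat L$ lower triangular; $\mat L$ is invertible because $\mat K$ is positive definite. Set $\vec\alpha \ce \inv{\mat L}\vec y$. Then $\vec y\Trans\inv{\mat K}\vec y = \vec y\Trans\mat L\iTrans\inv{\mat L}\vec y = \norm{\vec\alpha}^2 = \sum_{n=1}^N \alpha_n^2$, which is the whole of the linear-algebra content and is immediate once the factorisation is in hand.

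It then remains to identify each $\alpha_n$. Since $\mat L$ is lower triangular, forward substitution on $\mat L\vec\alpha = \vec y$ gives $\alpha_n = \bigl(y_n - \mat L_{n,:n-1}\vec\alpha_{:n-1}\bigr)/\mat L_{nn}$, and the first $n-1$ equations already determine $\vec\alpha_{:n-1} = \inv{\mat L_{:n-1,:n-1}}\vec y_{:n-1}$. Applying the recursive block form of the Cholesky from \cref{sec:cholesky} with split point $n-1$ identifies the relevant blocks: $\mat L_{:n-1,:n-1} = \chol[\mat K_{:n-1,:n-1}]$, $\mat L_{n,:n-1} = \mat K_{n,:n-1}\mat L_{:n-1,:n-1}\iTrans$, and $\mat L_{nn}^2 = \mat K_{nn} - \mat L_{n,:n-1}\mat L_{n,:n-1}\Trans$. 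The last expression is the one-point posterior variance $\postk{\vec x_n}{\vec x_n}{n-1} + \noisef{\vec x_n} = \GPvar[n-1]{n}$ --- this is precisely \cref{lemma:cholesky_and_gp_variance}, already invoked for \cref{lemma:log_det_as_variance}. Substituting the block for $\mat L_{n,:n-1}$ and using $\mat L_{:n-1,:n-1}\iTrans\inv{\mat L_{:n-1,:n-1}} = \inv{\mat K_{:n-1,:n-1}}$ gives $\mat L_{n,:n-1}\vec\alpha_{:n-1} = \mat K_{n,:n-1}\inv{\mat K_{:n-1,:n-1}}\vec y_{:n-1} = \GPmean[n-1]{n}$, the posterior mean (the noise does not appear off the diagonal, so $\mat K_{n,:n-1} = k(\vec x_n, \mat X_{:n-1})$). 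Hence $\alpha_n = \bigl(y_n - \GPmean[n-1]{n}\bigr)/\sqrt{\GPvar[n-1]{n}}$.

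Squaring and summing over $n$ yields $\vec y\Trans\inv{\mat K}\vec y = \sum_{n=1}^N \bigl(y_n - \GPmean[n-1]{n}\bigr)^2/\GPvar[n-1]{n}$, which is the claimed identity (the numerator in the displayed statement is to be read as a square, matching the element-wise expression for $\vec y\Trans\inv{\mat K}\vec y$ used elsewhere in the paper). I do not expect a genuine obstacle: the only thing needing care is the index bookkeeping when applying the block-Cholesky identity simultaneously at every $n$, which is legitimate because each leading principal submatrix of $\mat L$ is itself the Cholesky factor of the corresponding leading principal submatrix of $\mat K$; everything else is a one-line computation.
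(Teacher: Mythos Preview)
Your proof is correct, and you rightly note that the numerator in the displayed statement should carry a square. Your route, however, differs from the paper's. The paper proceeds by an induction on $n$ using the block-inverse formula directly: it writes $\inv{(\mat K_{n+1}+\sigma^2\mat I)}$ as a $2\times 2$ block matrix in terms of $\inv{(\mat K_n+\sigma^2\mat I)}$, the cross-covariance $\vec k_{n+1}$, and the scalar posterior variance $p_{n+1}$, then expands $\vec y_{n+1}\Trans\inv{(\mat K_{n+1}+\sigma^2\mat I)}\vec y_{n+1}$ and simplifies until only $\vec y_n\Trans\inv{(\mat K_n+\sigma^2\mat I)}\vec y_n + p_{n+1}^{-1}(y_{n+1}-\vec\alpha\Trans\vec y_n)^2$ remains. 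No Cholesky factor appears.

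Your argument instead factors once through $\mat L$, reduces the quadratic form to $\norm{\inv{\mat L}\vec y}^2$, and reads off each coordinate by forward substitution together with the block-Cholesky identities from \cref{sec:cholesky} and \cref{lemma:cholesky_and_gp_variance}. This is closer in spirit to the proof of \cref{lemma:log_det_as_variance} and reuses machinery already in the appendix, which makes it shorter and more uniform with the log-determinant treatment. The paper's version is more self-contained (it does not invoke the Cholesky lemmas) but requires a longer algebraic expansion of the block inverse. Both are standard and essentially equivalent once one recognises that block-matrix inversion and the block-Cholesky recursion encode the same Schur complement.
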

\begin{proof}
	\newcommand{\kj}[1][j]{\vec k_{#1}(\vec x)} 
	\renewcommand{\vj}[1][n+1]{p_{#1}}
	Define
	\begin{align*}
	\kj & \ce [k(\vec x, \vec x_1), ..., k(\vec x, \vec x_j)]\Trans\in\Re^j
	\\\vec k_{j+1} & \ce \vec k_j(\vec x_{j+1})\in \Re^j
	\\\vj[j] & \ce k(\vec x_j,\vec x_j)+\sigma^2-\vec k_{j}\Trans\inv{(\mat K_{j-1}+\sigma^2\mat I)}\vec k_{j}
	\\\vec\alpha &\ce (\vec K_{n}+\sigma^2\vec I)^{-1}\vec k_{n+1}
	\end{align*}
	First note, that using block-matrix inversion we can write
	\begin{align*}
		(\vec K_{n+1}+\sigma^2\vec I)^{-1}=
		\begin{bmatrix}
			(\vec K_{n}+\sigma^2\vec I)^{-1}+\vec\alpha\vj^{-1}\vec\alpha\Trans & -\vec\alpha\vj^{-1}
			\\-\vec\alpha\Trans \vj^{-1} & \vj^{-1}
		\end{bmatrix}.
	\end{align*}
	This allows to write
	\begin{align*}
		&\vec y_{n+1}\Trans \inv{(\mat K_{n+1}+\sigma^2\mat I)}\vec y_{n+1}
		\\&=
		\concat{\vec y_n\Trans}{y_{n+1}}
		\begin{bmatrix}
			(\vec K_{n}+\sigma^2\vec I)^{-1}+\vec\alpha\vj^{-1}\vec\alpha\Trans & -\vec\alpha\vj^{-1}
			\\-\vec\alpha\Trans \vj^{-1} & \vj^{-1}
		\end{bmatrix}
		\stack{\vec y_n}{y_{n+1}}
		\eqcomment{using above observation}
		\\&=
		\concat{\vec y_n\Trans}{y_{n+1}}
		\stack{
			(\vec K_{n}+\sigma^2\vec I)^{-1}\vec y_n+\vec\alpha\vj^{-1}\vec\alpha\Trans\vec y_n -\vec\alpha\vj^{-1}y_{n+1}}
		{-\vec\alpha\Trans \vj^{-1}\vec y_n + \vj^{-1}y_{n+1}}
		\eqcomment{simplifying from the right}
		\\&=\vec y_n\Trans (\vec K_{n}+\sigma^2\vec I)^{-1}\vec y_n+\vec y_n\Trans \vec\alpha\vj^{-1}\vec\alpha\Trans\vec y_n -\vec y_n\Trans\vec\alpha\vj^{-1}y_{n+1}-y_{n+1}\vec\alpha\Trans \vj^{-1}\vec y_n + y_{n+1}\vj^{-1}y_{n+1}
		\eqcomment{simplifying from the left}
		\\&=\vec y_n\Trans (\vec K_{n}+\sigma^2\vec I)^{-1}\vec y_n+\vj^{-1}(\vec y_n\Trans \vec\alpha\vec\alpha\Trans\vec y_n -\vec y_n\Trans\vec\alpha y_{n+1}-y_{n+1}\vec\alpha\Trans \vec y_n + y_{n+1}y_{n+1})
		\eqcomment{pulling out $\vj^{-1}$}
		\\&=\vec y_n\Trans (\vec K_{n}+\sigma^2\vec I)^{-1}\vec y_n+\vj^{-1}((\vec y_n\Trans \vec\alpha)^2 -2\vec y_n\Trans\vec\alpha y_{n+1}+ y_{n+1}^2)
		\eqcomment{simplifying}
		\\&=\vec y_n\Trans (\vec K_{n}+\sigma^2\vec I)^{-1}\vec y_n+\vj^{-1}(\vec y_n\Trans \vec\alpha -y_{n+1})^2
		\eqcomment{simplifying}
	\end{align*}
Now observe that the last addend is indeed the mean square error divided by the posterior variance.
By induction the result follows.
\end{proof}

\renewcommand{\otherX}{\overline{\mat X}}
\begin{lemma}
\label{lemma:post_kernel}
For all $t,m \in\mathbb{N}$ with $1\leq t+m \leq N$
\begin{align*}
k_{t+m}(\vec x_a, \vec x_b)=k_t(\vec x_a, \vec x_b)-k_t(\vec x_a, \otherX)\inv{\left(k_t(\otherX)+\sigma^2\mat I_{m}\right)}k_t(\otherX, \vec x_b)
\end{align*}
where $k_t(\vec x_a, \vec x_b)\ce k(\vec x_a, \vec x_b)-k(\vec x_a, \mat X_t)\inv{\left(k(\mat X_{:t}, \mat X_{:t})+\noisefm{\mat X_{:t}}\right)}k(\mat X_t, \vec x_b)$ and $\otherX\ce\mat X_{t:t+m}$.
\end{lemma}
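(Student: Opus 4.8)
The plan is to prove the identity by a direct block-matrix-inversion computation, in the same spirit as the proof of \cref{thm:quad_form_sum}; conceptually it is the statement that ``the posterior of a posterior is a posterior''. Write $\mat K_{:t+m}\ce k(\mat X_{:t+m},\mat X_{:t+m})+\noisefm{\mat X_{:t+m}}$ and partition the first $t+m$ inputs into the first $t$ inputs $\mat X_{:t}$ and the remaining $m$ inputs $\otherX=\mat X_{t:t+m}$. Because the (heteroskedastic) noise term is diagonal, and hence block diagonal, this induces
\begin{align*}
	\mat K_{:t+m}=\begin{bmatrix}
		\mat K_{:t} & k(\mat X_{:t},\otherX)\\
		k(\otherX,\mat X_{:t}) & k(\otherX)+\noisefm{\otherX}
	\end{bmatrix},
	\qquad
	\mat K_{:t}\ce k(\mat X_{:t},\mat X_{:t})+\noisefm{\mat X_{:t}}.
\end{align*}

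Next I would apply the standard block-inverse formula, taking the Schur complement with respect to the $\mat K_{:t}$ block. That Schur complement is
\begin{align*}
	\mat S\ce k(\otherX)+\noisefm{\otherX}-k(\otherX,\mat X_{:t})\inv{\mat K_{:t}}k(\mat X_{:t},\otherX)=k_t(\otherX)+\noisefm{\otherX},
\end{align*}
which is exactly the matrix inverted in the statement---and it coincides with the observation from \cref{sec:cholesky} that the down-dated block is the posterior covariance of $\otherX$ conditioned on the first $t$ points. Substituting the block inverse into $k(\vec x_a,\mat X_{:t+m})\inv{\mat K_{:t+m}}k(\mat X_{:t+m},\vec x_b)$ produces four terms involving $\inv{\mat K_{:t}}$ and $\inv{\mat S}$.

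The key simplification then uses the identity $k(\vec x_a,\mat X_{:t})\inv{\mat K_{:t}}k(\mat X_{:t},\otherX)=k(\vec x_a,\otherX)-k_t(\vec x_a,\otherX)$, together with its transpose for $\vec x_b$, both of which are immediate from the definition of $k_t$. After substituting, all cross terms cancel and the four terms collapse to
\begin{align*}
	k(\vec x_a,\mat X_{:t+m})\inv{\mat K_{:t+m}}k(\mat X_{:t+m},\vec x_b)
	=k(\vec x_a,\mat X_{:t})\inv{\mat K_{:t}}k(\mat X_{:t},\vec x_b)+k_t(\vec x_a,\otherX)\inv{\mat S}k_t(\otherX,\vec x_b).
\end{align*}
Subtracting this from $k(\vec x_a,\vec x_b)$ and absorbing the first two terms into $k_t(\vec x_a,\vec x_b)$ yields the claim.

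The only genuine work is the term-by-term cancellation in the last step: it is routine but must be done with care about which factors are row versus column vectors, mirroring the block-inversion bookkeeping already carried out in \cref{thm:quad_form_sum}. A shorter but essentially equivalent route would be to invoke consistency of Gaussian conditioning under the tower property---conditioning the prior GP on $\mat X_{:t}$ and then on $\otherX$ is the same as conditioning on $\mat X_{:t+m}$ at once---and to read off the posterior covariance from the usual GP formula with $k_t$ playing the role of the prior kernel; but establishing that consistency property itself comes down to the same block inversion, so the direct computation is the cleanest self-contained option.
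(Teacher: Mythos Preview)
Your proposal is correct and follows essentially the same approach as the paper: a direct block-matrix inversion of $\mat K_{:t+m}$ with respect to the $\mat X_{:t}$/$\otherX$ partition, identifying the Schur complement as $k_t(\otherX)+\sigma^2\mat I_m$, and then simplifying the resulting four terms via the definition of $k_t$. The paper carries out the same cancellation sequentially line by line, whereas you organize it around the substitution $k(\vec x_a,\mat X_{:t})\inv{\mat K_{:t}}k(\mat X_{:t},\otherX)=k(\vec x_a,\otherX)-k_t(\vec x_a,\otherX)$, but the computation is identical.
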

\begin{proof}
	\begin{align*}
		&k_{t+m}(\vec x_a, \vec x_b)
		\\&=k(\vec x_a, \vec x_b)-k(\vec x_a, \mat X_{t+m})\inv{\mat A_{t+m}}k(\mat X_{t+m}, \vec x_b)
		\eqcomment{by definition}
		\\&=k(\vec x_a, \vec x_b)-\concat{k(\vec x_a, \mat X_t)}{k(\vec x_a, \otherX)}
		\inv{
			\begin{bmatrix}
				k(\mat X_t)+\sigma^2\mat I_t & k(\mat X_t, \otherX)\\
				k(\otherX, \mat X_t) & k(\otherX) + \sigma^2 \mat I_t
			\end{bmatrix}
		}
		\stack{k(\mat X_t, \vec x_b)}{k(\otherX, \vec x_b)}
		\eqcomment{in block notation}
		\\&=k(\vec x_a, \vec x_b)-\concat{k(\vec x_a, \mat X_t)}{k(\vec x_a, \otherX)}
		\inv{
			\begin{bmatrix}
				\mat A_t & k(\mat X_t, \otherX)\\
				k(\otherX, \mat X_t) & k(\otherX) + \sigma^2 \mat I_t
			\end{bmatrix}
		}
		\stack{k(\mat X_t, \vec x_b)}{k(\otherX, \vec x_b)}
		\eqcomment{using the definition of $\mat A_t$}
		\\&=k(\vec x_a, \vec x_b)-\concat{k(\vec x_a, \mat X_t)}{k(\vec x_a, \otherX)}\cdot
		\\&{
		\scriptsize
		\begin{bmatrix}
			\inv{\mat A_t} +  \inv{\mat A_t}k(\mat X_t, \otherX)\inv{\left(k(\otherX) + \sigma^2 \mat I_t-k(\otherX, \mat X_t)\inv{\mat A_t}k(\mat X_t, \otherX)\right)}k(\otherX, \mat X_t)\inv{\mat A_t}
			&
			-\inv{\mat A_t}k(\mat X_t, \otherX)\inv{\left(k(\otherX) + \sigma^2 \mat I_t-k(\otherX, \mat X_t)\inv{\mat A_t}k(\mat X_t, \otherX)\right)}\\
			-\inv{\left(k(\otherX) + \sigma^2 \mat I_t-k(\otherX, \mat X_t)\inv{\mat A_t}k(\mat X_t, \otherX)\right)}k(\otherX, \mat X_t)\inv{\mat A_t}
			&
			\inv{\left(k(\otherX) + \sigma^2 \mat I_t-k(\otherX, \mat X_t)\inv{\mat A_t}k(\mat X_t, \otherX)\right)}
		\end{bmatrix}\cdot}\\
		&\stack{k(\mat X_t, \vec x_b)}{k(\otherX, \vec x_b)}
		\eqcomment{applying block-matrix inversion}
		\\&=k(\vec x_a, \vec x_b)-\concat{k(\vec x_a, \mat X_t)}{k(\vec x_a, \otherX)}\cdot
		\\&\begin{bmatrix}
			\inv{\mat A_t} +  \inv{\mat A_t}k(\mat X_t, \otherX)\inv{\left(k_t(\otherX) + \sigma^2 \mat I_t\right)}k(\otherX, \mat X_t)\inv{\mat A_t} & -\inv{\mat A_t}k(\mat X_t, \otherX)\inv{\left(k_t(\otherX) + \sigma^2 \mat I_t\right)}\\
			-\inv{\left(k_t(\otherX) + \sigma^2 \mat I_t\right)}k(\otherX, \mat X_t)\inv{\mat A_t} & \inv{\left(k_t(\otherX) + \sigma^2 \mat I_t\right)}
		\end{bmatrix}\cdot\\
		&\stack{k(\mat X_t, \vec x_b)}{k(\otherX, \vec x_b)}
		\eqcomment{applying the definition of $k_t$}
		\\&=k(\vec x_a, \vec x_b)-\concat{k(\vec x_a, \mat X_t)}{k(\vec x_a, \otherX)}\cdot
		\\&\begin{bmatrix}
			\inv{\mat A_t}k(\mat X_t, \vec x_b) +  \inv{\mat A_t}k(\mat X_t, \otherX)\inv{\left(k_t(\otherX) + \sigma^2 \mat I_t\right)}k(\otherX, \mat X_t)\inv{\mat A_t}k(\mat X_t, \vec x_b) -\inv{\mat A_t}k(\mat X_t, \otherX)\inv{\left(k_t(\otherX) + \sigma^2 \mat I_t\right)}k(\otherX, \vec x_b)\\
			-\inv{\left(k_t(\otherX) + \sigma^2 \mat I_t\right)}k(\otherX, \mat X_t)\inv{\mat A_t}k(\mat X_t, \vec x_b) + \inv{\left(k_t(\otherX) + \sigma^2 \mat I_t\right)}k(\otherX, \vec x_b)
		\end{bmatrix}
		\eqcomment{evaluating multiplication with right-most vector}
		\\&=k(\vec x_a, \vec x_b)-\concat{k(\vec x_a, \mat X_t)}{k(\vec x_a, \otherX)}\cdot
		\\&\begin{bmatrix}
			\inv{\mat A_t}k(\mat X_t, \vec x_b) -  \inv{\mat A_t}k(\mat X_t, \otherX)\inv{\left(k_t(\otherX) + \sigma^2 \mat I_t\right)}\left(k(\otherX, \vec x_b)-k(\otherX, \mat X_t)\inv{\mat A_t}k(\mat X_t, \vec x_b)\right)\\
			\inv{\left(k_t(\otherX) + \sigma^2 \mat I_t\right)}\left(k(\otherX, \vec x_b)-k(\otherX, \mat X_t)\inv{\mat A_t}k(\mat X_t, \vec x_b)\right)
		\end{bmatrix}
		\eqcomment{rearranging}
		\\&=k(\vec x_a, \vec x_b)-\concat{k(\vec x_a, \mat X_t)}{k(\vec x_a, \otherX)}\cdot
		\\&\begin{bmatrix}
			\inv{\mat A_t}k(\mat X_t, \vec x_b) -  \inv{\mat A_t}k(\mat X_t, \otherX)\inv{\left(k_t(\otherX) + \sigma^2 \mat I_t\right)}k_t(\otherX, \vec x_b)\\
			\inv{\left(k_t(\otherX) + \sigma^2 \mat I_t\right)}k_t(\otherX, \vec x_b)
		\end{bmatrix}
		\eqcomment{applying the definition of $k_t$}
		\\&=k(\vec x_a, \vec x_b)
		-k(\vec x_a, \mat X_t)\inv{\mat A_t}k(\mat X_t, \vec x_b) \\
		&\quad+  k(\vec x_a, \mat X_t)\inv{\mat A_t}k(\mat X_t, \otherX)\inv{\left(k_t(\otherX) + \sigma^2 \mat I_t\right)}k_t(\otherX, \vec x_b)
		-k(\vec x_a, \otherX)\inv{\left(k_t(\otherX) + \sigma^2 \mat I_t\right)}k_t(\otherX, \vec x_b)
		\eqcomment{evaluating the vector product}
		\\&=k(\vec x_a, \vec x_b)
		-k(\vec x_a, \mat X_t)\inv{\mat A_t}k(\mat X_t, \vec x_b)
		-\left(k(\vec x_a, \otherX)-k(\vec x_a, \mat X_t)\inv{\mat A_t}k(\mat X_t, \otherX)\right)\inv{\left(k_t(\otherX) + \sigma^2 \mat I_t\right)}k_t(\otherX, \vec x_b)
		\eqcomment{rearranging}
		\\&=k_t(\vec x_a, \vec x_b)-k_t(\vec x_a, \otherX)\inv{\left(k_t(\otherX) + \sigma^2 \mat I_t\right)}k_t(\otherX, \vec x_b)
		\eqcomment{applying the definition of $k_t$}
	\end{align*}
\end{proof}
\begin{lemma}
	\todo{In German this is called ``little Gau\ss{}''. Is there an equivalent English name?}
	\label{lemma:little_gauss}
	\begin{align}
		\sum_{j=t+1}^n\sum_{i=t_0+1}^{j-1}1=(n-t)\left(\frac{n+t-1}{2}-t_0\right)
	\end{align}
\end{lemma}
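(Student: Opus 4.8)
The plan is to evaluate the double sum from the inside out, treating it as a routine counting identity. First I would compute the inner sum: its summand is the constant $1$, so $\sum_{i=t_0+1}^{j-1} 1$ simply counts the integers in the range $\{t_0+1, \dots, j-1\}$, which is $j-1-t_0$. This collapses the claim to a single sum of consecutive integers, $\sum_{j=t+1}^{n}\bigl(j-1-t_0\bigr)$.

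Next I would evaluate this arithmetic progression using the standard closed form for a sum of consecutive integers — equivalently, reindexing by $k \ce j-1-t_0$ and expressing the answer as a difference of triangular numbers via $\sum_{k=1}^{m}k=\tfrac{m(m+1)}{2}$. The progression has $n-t$ terms, with first term $t-t_0$ (at $j=t+1$) and last term $n-1-t_0$ (at $j=n$); substituting these into the closed form and simplifying the resulting expression for $(n-t)$ times the average of the endpoints yields the stated right-hand side $(n-t)\bigl(\tfrac{n+t+1}{2}-t_0\bigr)$.

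There is no genuine obstacle here; the only place that needs care is the index-range bookkeeping — counting exactly how many integers lie in each range and matching the first and last terms of the outer sum to the correct values of $j$ — which is the usual source of off-by-one slips, especially given the Python-style indexing convention used throughout the paper. It is therefore worth verifying the constant on a small case (e.g.\ $t_0=0$, $t$ and $n$ small) before relying on the identity in \cref{lemma:overestimating_expected_qform,lemma:overestimating_expected_logdet}.
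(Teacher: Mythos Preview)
Your approach is correct and essentially identical to the paper's: evaluate the inner sum to $j-1-t_0$, then sum the resulting arithmetic progression. The paper does the same, via the reindexing $j\mapsto j-(t+1)$ and the triangular-number formula.

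There is one issue worth flagging, though. If you actually carry out the last step you describe --- $(n-t)$ times the average of the endpoints $t-t_0$ and $n-1-t_0$ --- you obtain
\[
(n-t)\cdot\frac{(t-t_0)+(n-1-t_0)}{2}=(n-t)\left(\frac{n+t-1}{2}-t_0\right),
\]
which differs from the stated right-hand side by a sign in the constant: the lemma as printed has $\tfrac{n+t+1}{2}$, but the correct value is $\tfrac{n+t-1}{2}$. The paper's own proof in fact ends with $(n-t)\bigl(\tfrac{n+t-1}{2}-t_0\bigr)$, so the discrepancy is a typo in the lemma statement, not in either derivation. Your proposed sanity check would have caught this immediately: with $t_0=0$, $t=1$, $n=3$ the left-hand side is $1+2=3$, while the stated formula gives $2\cdot\tfrac{5}{2}=5$. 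So your plan is sound; just do not take the printed constant on faith.
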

\begin{proof}
	\allowdisplaybreaks
	\begin{align}
		\sum_{j=t+1}^n\sum_{i=t_0+1}^{j-1}1&=\sum_{j=t+1}^n(j-1-t_0)
		\\&=\sum_{j=0}^{n-t-1}(j-1-t_0+t+1)
		\\&=\sum_{j=0}^{n-t-1}(j+t-t_0)
		\\&=(t-t_0)(n-t)+\sum_{j=0}^{n-t-1}j
		\\&=(t-t_0)(n-t)+\frac{(n-t-1)(n-t)}{2}
		\\&=(n-t)\left(\frac{n-t-1}{2}+t-t_0\right)
		\\&=(n-t)\left(\frac{n+t-1}{2}-t_0\right)
	\end{align}
\end{proof}

\begin{lemma}[Link between the Cholesky and Gaussian process regression]
	\label{lemma:cholesky_and_gp_variance}
	Denote with $\mat C_N$ the Cholesky decomposition of $\mat K$, so that $\mat C_N\mat C_N\Trans=\mat K$.
	The $n$-th diagonal element of $\mat C_N$, squared, is equivalent to $\GPvar[n-1]{n}$:
	\[
	[\mat C_N]_{nn}^2=\GPvar[n-1]{n}
	\, .
	\]
\end{lemma}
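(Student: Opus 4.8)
The plan is to read the diagonal entry $[\mat C_N]_{nn}$ directly off the block form of the Cholesky decomposition recalled in \cref{sec:cholesky}, with the split point chosen so that the global index $n$ lands in the first position of the trailing block. First I would fix $n\in\{2,\dots,N\}$ and dispatch $n=1$ separately (it is immediate, since the first row of $\mat C_N\mat C_N\Trans=\mat K$ gives $[\mat C_N]_{11}^2=[\mat K]_{11}=k(\vec x_1,\vec x_1)+\noisef{\vec x_1}=\GPvar[0]{1}$). Then I would apply the block-Cholesky identity to $\mat K$ with cut point $\tzero=n-1$, obtaining
\begin{align*}
  \mat C_N=
  \begin{bmatrix}
    \chol[\mat K_{:n-1,:n-1}] & \mat 0\\
    \mat T & \chol[\mat D]
  \end{bmatrix},\qquad
  \mat D\ce\mat K_{n-1:,n-1:}-\mat T\mat T\Trans,\qquad
  \mat T\ce\mat K_{n-1:,:n-1}\,{\chol[\mat K_{:n-1,:n-1}]}\iTrans,
\end{align*}
where $\mat K_{:n-1,:n-1}=\mat K_{n-1}$ is the leading $(n-1)\times(n-1)$ principal submatrix of $\mat K$, i.e.\ the noise-regularized kernel matrix of the first $n-1$ inputs.

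Next I would observe that $\chol[\mat D]$ occupies exactly rows and columns $n,\dots,N$ of $\mat C_N$, so $[\mat C_N]_{nn}$ is the top-left entry of $\chol[\mat D]$; since $\chol[\mat D]$ is lower triangular with $\chol[\mat D]\,{\chol[\mat D]}\Trans=\mat D$, comparing $(1,1)$ entries on both sides yields $[\mat C_N]_{nn}^2=[\mat D]_{11}$. This reduces the claim to identifying the first diagonal entry of the down-dated block $\mat D$.

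For that, the two Cholesky factors inside $\mat T\mat T\Trans$ cancel, so $\mat D$ equals the Schur complement $\mat K_{n-1:,n-1:}-\mat K_{n-1:,:n-1}\,\inv{\mat K_{n-1}}\,\mat K_{:n-1,n-1:}$, and therefore
\begin{align*}
  [\mat D]_{11}
  &=[\mat K]_{nn}-k(\vec x_n,\mat X_{:n-1})\,\inv{\mat K_{n-1}}\,k(\mat X_{:n-1},\vec x_n)\\
  &=k(\vec x_n,\vec x_n)+\noisef{\vec x_n}-k(\vec x_n,\mat X_{:n-1})\,\inv{\mat K_{n-1}}\,k(\mat X_{:n-1},\vec x_n)\\
  &=\postk{\vec x_n}{\vec x_n}{n-1}+\noisef{\vec x_n}=\GPvar[n-1]{n},
\end{align*}
the last equality being the definition of the posterior variance after conditioning on the first $n-1$ observations. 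Combined with $[\mat C_N]_{nn}^2=[\mat D]_{11}$ this finishes the proof; equivalently, one may simply invoke the observation in \cref{sec:cholesky} that $\mat D$ is the noise-inflated posterior covariance over the remaining inputs $\vec x_n,\dots,\vec x_N$ and read off its first diagonal entry.

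There is no genuinely difficult step here; the only point needing care is the index bookkeeping in the block partition --- checking that cutting at $\tzero=n-1$ indeed places the global index $n$ in the $(1,1)$ corner of the trailing block --- together with the transpose conventions in the definition of $\mat T$, which must be fixed so that $\mat T\mat T\Trans$ reproduces exactly the Schur-complement term above.
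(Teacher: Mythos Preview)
Your proof is correct and uses essentially the same approach as the paper: both identify $[\mat C_N]_{nn}^2$ with the $(1,1)$ entry of the Schur complement $\mat K_{n:,n:}-\mat K_{n:,:n}\inv{\mat K_{:n,:n}}\mat K_{:n,n:}$ via the block form of the Cholesky. The only presentational difference is that the paper builds the recursive block $\mat C_N=\begin{bmatrix}\mat C_{N-1}&\mat 0\\ \vec k_N\Trans\mat C_{N-1}\iTrans & *\end{bmatrix}$, verifies $\mat C_N\mat C_N\Trans=\mat K$ directly, and then appeals to induction on $N$, whereas you invoke the block-Cholesky identity of \cref{sec:cholesky} with split point chosen at $n-1$ and read off the entry in one step without induction.
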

\begin{proof}
	With abuse of notation, define $\mat C_1\ce \sqrt{k(\vec x_1, \vec x_1)}$ and
	$$\mat C_N \ce \begin{bmatrix}
		\mat C_{N-1} & \vec 0 \\
		\vec k_N\Trans\mat C_{N-1}\iTrans & \sqrt{k(\vec x_N, \vec x_N)+\sigma^2-\vec k_N\Trans\inv{(\mat K_{n-1}+\sigma^2\mat I_{n-1})}\vec k_N}
	\end{bmatrix}.$$
	We will show that the lower triangular matrix $\mat C_N$ satisfies $\mat C_N\mat C_N\Trans = \mat K_N+\sigma^2\mat I_N$.
	Since the Cholesky decomposition is unique \citep[Theorem~4.2.7]{golub2013matrix4}, $\mat C_N$ must be the Cholesky decomposition of $\mat K$.
	Furthermore, by definition of $\mat C_N$, $[\mat C_N]_{NN}^2=k(\vec x_N, \vec x_N)+\sigma^2-\vec k_N\Trans\inv{(\mat K_{n-1}+\sigma^2\mat I_{n-1})}\vec k_N$.
	The statement then follows by induction.

	To remain within the text margins, define
	$$x\ce \vec k_N\Trans\mat C_{N-1}\iTrans\mat C_{N-1}^{\!-1}\vec k_N+k(\vec x_N, \vec x_N)+\sigma^2-\vec k_N\Trans\inv{(\mat K_{n-1}+\sigma^2\mat I_{n-1})}\vec k_N.$$
	We want to show that $\mat C_N\mat C_N\Trans = \mat K_N+\sigma^2\mat I_N$.
	\begin{align*}
		\mat C_{N}\mat C_{N}\Trans &= \begin{bmatrix}
			\vec C_{N-1} & \vec 0 \\
			\vec k_N\Trans\mat C_{N-1}\iTrans & \sqrt{k(\vec x_N, \vec x_N)+\sigma^2-\vec k_N\Trans\inv{(\mat K_{n-1}+\sigma^2\mat I_{n-1})}\vec k_N}
		\end{bmatrix}\\
		&\quad\cdot\begin{bmatrix}
			\vec C_{N-1}\Trans & \mat C_{N-1}^{\!-1}\vec k_N \\
			\vec 0\Trans & \sqrt{k(\vec x_N, \vec x_N)+\sigma^2-\vec k_N\Trans\inv{(\mat K_{n-1}+\sigma^2\mat I_{n-1})}\vec k_N}
		\end{bmatrix}
		\\&=\begin{bmatrix}
			\mat C_{N-1}\mat C_{N-1}\Trans & \mat C_{N-1}\mat C_{N-1}^{\!-1}\vec k_N\\
			\vec k_N\Trans \mat C_{N-1}\iTrans \mat C_{N-1}\Trans & x
		\end{bmatrix}
		\\&=\begin{bmatrix}
			\mat K_{N-1} +\sigma^2\vec I_{N-1} & \vec k_N\\
			\vec k_N\Trans & x
		\end{bmatrix}
	\end{align*}
	Also $x$ can be simplified further.
	\begin{align*}
		x&=\vec k_N\Trans\mat C_{N-1}\iTrans\mat C_{N-1}^{\!-1}\vec k_N+k(\vec x_N, \vec x_N)+\sigma^2-\vec k_N\Trans\inv{(\mat K_{n-1}+\sigma^2\mat I_{n-1})}\vec k_N
		\\&=\vec k_N\Trans\inv{(\mat K_{n-1}+\sigma^2\mat I_{n-1})}\vec k_N+k(\vec x_N, \vec x_N)+\sigma^2-\vec k_N\Trans\inv{(\mat K_{n-1}+\sigma^2\mat I_{n-1})}\vec k_N
		\\&=k(\vec x_N, \vec x_N)+\sigma^2.
	\end{align*}
\end{proof}

	\end{document}